\newcommand{\ignore}[1]{}
\theoremstyle{plain}
\newtheorem{theorem}{Theorem}
\newtheorem{lemma}[theorem]{Lemma}
\newtheorem{proposition}[theorem]{Proposition}
\newtheorem*{theorem*}{Theorem}
\newtheorem*{lemma*}{Lemma}
\newtheorem*{corollary*}{Corollary}
\newtheorem*{proposition*}{Proposition}
\newtheorem*{claim*}{Claim}
\newtheorem*{fact*}{Fact}
\theoremstyle{definition}
\newtheorem{definition}[theorem]{Definition}
\newtheorem*{definition*}{Definition}
\newtheorem*{remark*}{Remark}
\newtheorem*{example*}{Example}
\theoremstyle{plain}
\newtheorem*{theoremaux}{\theoremauxref}
\gdef\theoremauxref{1}
\newenvironment{repthm}[2][]{%
  \def\theoremauxref{\cref{#2}}
  \begin{theoremaux}[#1]
}{%
  \end{theoremaux}
}
\newcommand{\wt}[1]{\smash{\widetilde{#1}}}
\renewcommand{\O}{O}
\newcommand{\tO}{\wt{\O}}
\newcommand{\E}{\mathbf{E}}
\newcommand{\ind}[1]{1\!\!1_{#1}}
\newcommand{\non}{\nonumber}
\newcommand{\set}[1]{\{#1\}}
\newcommand{\abs}[1]{|#1|}
\newcommand{\norm}[1]{\|#1\|}
\newcommand{\lr}[1]{\left(#1\right)}
\newcommand{\lrbig}[1]{\big(#1\big)}
\newcommand{\eps}{\epsilon}
\newcommand{\sig}{\sigma}
\newcommand{\del}{\delta}
\newcommand{\Del}{\Delta}
\newcommand{\half}{\frac{1}{2}}
\newcommand{\eq}{~=~}
\renewcommand{\leq}{~\le~}
\renewcommand{\geq}{~\ge~}
\newcommand{\floor}[1]{\left\lfloor #1 \right\rfloor}
\newcommand{\gam}{\gamma}
\newcommand{\F}{\mathcal{F}}
\newcommand{\cut}{\mathrm{cut}}
\newcommand{\anc}{\rho^*}
\newcommand{\depth}{d}
\newcommand{\width}{w}
\newcommand{\clip}{\mathrm{clip}}
\newlength{\algindent}
\title{Chasing Ghosts: Competing with Stateful Policies \\[0.4cm]}
\date{}
\author{%
Uriel Feige\\
Weizmann Institute and MSR\\
\texttt{uriel.feige@weizmann.ac.il}
\and
Tomer Koren\\
Technion and MSR\\
\texttt{tomerk@technion.ac.il}
\and
Moshe Tennenholtz\\
MSR and Technion \\
\texttt{moshet@microsoft.com}
}
\begin{document} 
\maketitle

\thispagestyle{empty}
\begin{abstract}
We consider sequential decision making in a setting where regret is measured with respect to a set of {\em stateful} reference policies, and feedback is limited to observing the rewards of the actions performed (the so called ``bandit" setting). If either the reference policies are stateless rather than stateful, or the feedback includes the rewards of all actions (the so called ``expert" setting), previous work shows that the optimal regret grows like $\Theta(\sqrt{T})$ in terms of the number of decision rounds $T$.

The difficulty in our setting is that the decision maker unavoidably loses track of the internal states of the reference policies, and thus cannot reliably attribute rewards observed in a certain round to any of the reference policies. 
In fact, in this setting it is impossible for the algorithm to estimate which policy gives the highest (or even approximately highest) total reward. 
Nevertheless, we design an algorithm that achieves expected regret that is sublinear in $T$, of the form $O( T/\log^{1/4}{T} )$. Our algorithm is based on a certain {\em local repetition lemma} that may be of independent interest. We also show that no algorithm can guarantee expected regret better than
$O( T/\log^{3/2} T )$.
\end{abstract}

\newpage
\setcounter{page}{1}

\section{Introduction}

A player is faced with a sequential decision making task, continuing for $T$ rounds. There is a finite set $[n] = \{1 , \ldots, n\}$ of actions available in every round. In every round, based on all information observed in previous rounds, the player may choose an action $i \in [n]$, and consequently receives some reward $r \in [0,1]$ on that particular round. The total reward of the player is the sum of rewards accumulated in all rounds. There are various policies suggested to the player as to how to choose the sequence of actions in a way that would lead to high total reward. Examples of policies can be to play action~2 in all rounds, to play action~2 in odd rounds and action~3 in even rounds, or to start with action~1, play the current action repeatedly in every round until the first round in which it gives payoff less than $1/2$, then switch to the next action in cyclic order, and so on. The number of given policies is denoted by $k$. A-priori the player does not know which is the better policy. An algorithm of the player is simply a new policy that may be based on the available given policies. For example, the algorithm may be to follow policy number~5 in the first $T/2$ rounds, and play action~3 in the remaining rounds. The regret of the algorithm of the player is the difference between the total payoff of the best given policy to that of the player's algorithm. Our goal is to design an algorithm for the player that has as small regret as possible.

There are many different variations on the above setting, and some have been extensively studied in the past, with two of the most common variations referred to as expert algorithms and bandit algorithms~\citep{cesa1997use,freund1997decision,auer2002nonstochastic}. 
In this work we study a natural variation that apparently did not receive much attention in the past.
We present this variation in its simplest form in \cref{sec:model}, and defer discussion of extensions to \cref{sec:extensions}.


\subsection{The Stateful Policies Model} \label{sec:model}

We view the sequential decision making problem as a repeated game between a player and an adversary.
Before the game begins, the adversary determines a sequence%
\footnote{We use the notation $a_{s:t}$ as a shorthand for the sequence $(a_{s},\ldots,a_{t})$.}
of reward functions $r_{1:T} = (r_{1},\ldots,r_{T})$, where each function assigns each of the actions in $[n]$ with a reward value in the interval $[0,1]$.
We refer to such adversary as \emph{oblivious}, since the functions $r_{1:T}$ cannot change as a result of the player's actions (as they are chosen ahead of time).
On each round $t$, the player must choose, possibly at random, an action $X_{t} \in [n]$. He then receives the reward $r_{t}(X_{t})$ associated with that action, and his feedback on that round consists of this reward only; this is traditionally called {\em bandit feedback}.

The player is given as input a set $\Pi$ of $k > 1$ \emph{policies}, which are referred to as the {\em reference policies}.
Each policy $\pi \in \Pi$ is a deterministic function that maps the sequence of all previously observed rewards into an action to be played next.
For a policy $\pi$, we use the notation $x_{t}^{\pi}$ to denote the action played by $\pi$ on round $t$, had it been followed from the beginning of the game (note that $x_{t}^{\pi}$ has a deterministic value).
The player's goal is to minimize his (expected) \emph{regret} measured with respect to the set of reference policies $\Pi$, defined by
\begin{align*}
	\mathrm{Regret}_{T} \eq
	\max_{\pi \in \Pi} \sum_{t=1}^{T} r_{t}(x_{t}^{\pi})
	- \E\left[ \sum_{t=1}^{T} r_{t} \big( X_{t} \big) \right] ~.
\end{align*}
We say that the player's regret is non-trivial if it grows sublinearly with $T$, namely if $\mathrm{Regret}_{T} = o(T)$.

While regret measures the performance of a specific algorithm on a particular sequence of reward functions, we are typically interested in understanding the intrinsic difficulty of the learning problem.
This difficulty is captured by the game-theoretic notion of \emph{minimax regret}, which intuitively is the expected regret of an optimal algorithm when playing against an optimal adversary. 
Formally, the minimax regret is defined as the infimum over all player algorithms, of the supremum over all reward sequences, of the expected regret.

In this paper we consider a type of reference policies that we refer to as \emph{stateful policies}, which we define next (see also \cref{fig:policy} for an illustration of this concept).

\begin{definition}[stateful policy]
A \emph{stateful policy} $\pi = (s_{0}^{\pi},f^{\pi},g^{\pi})$ over $n$ actions and $S$ states is a finite state machine with state space $[S] = \set{1,2,\ldots,S}$, characterized by three parameters:
\begin{enumerate}[(i)]
\item
the \emph{initial state} of the policy $s_{0}^{\pi} \in [S]$, which is used to initialize the policy before the first round;
\item
the {\em action function} $f^{\pi}:[S] \mapsto [n]$, describing which action to take in a given round, depending on the state the policy is in;
\item
the {\em state transition function} $g^{\pi}:[S] \times [0,1] \mapsto [S]$, which given the current state and the observed reward of the action played in the current round, determines to which state to move for the next round.
\end{enumerate}
\end{definition}

\begin{figure}[h]
\centering

\begin{tikzpicture}[>=latex,text height=1.5ex,text depth=0.25ex]

\tikzstyle{state} = [
	circle,
	minimum size=1.1cm,
	thick,
	draw=black,
	fill=white
	]

\tikzstyle{background} = [
	rectangle,
	rounded corners=5mm,
	inner xsep=0.75cm,
	inner ysep=0.2cm,
	fill=gray!10
	]
  
\matrix[row sep=0.5cm,column sep=1.2cm] {
	\node (s_1) [state,double] {$s_1$}; 
	&
	&
	\node (a_1) {$\; 1$}; 
	\\
	
	&
	\node (s_2) [state] {$s_2$}; 
	&
	\node (a_2) {$\; 2$}; 
	\\

    	\node (s_3) [state] {$s_3$};
	&
	&
	\node (a_3) {$\; 1$}; 
	\\
};
    
\path[->,thick]
	(s_1)	edge[bend left=30] 
		node[right=4pt,near end]{$\scriptstyle[0,\frac{1}{2})$} 	(s_2)
	(s_1) 	edge[bend left=30] 
		node[right,very near end]{$\scriptstyle[\frac{1}{2},1)$}	(s_3)
	(s_2) 	edge[out=210,in=160,loop] 
		node[above=24pt,right]{$\scriptstyle[0,\frac{1}{2})$} 	(s_2)
	(s_2) 	edge[bend left=30] 
		node[right,near start]{$\scriptstyle[\frac{1}{2},1)$} 		(s_3)
	(s_3) 	edge[bend left=30] 
		node[left,very near end]{$\scriptstyle[0,1]$}	 		(s_1)
	;

\path[->,thick,dashed]
	(s_1) 	edge[bend left=10]		(a_1)
	(s_2) 	edge				(a_2)
	(s_3) 	edge[bend right=10]	(a_3)
	;

\begin{pgfonlayer}{background}
	\node [
		background,
                    fit=(s_1) (s_2) (s_3),
                    ] {};
\end{pgfonlayer}

\end{tikzpicture}
%
\hspace{0.05\linewidth}
%
\begin{tikzpicture}[>=latex,text height=1.5ex,text depth=0.25ex]

\tikzstyle{state} = [
	circle,
	minimum size=1.1cm,
	thick,
	draw=black,
	fill=white
	]

\tikzstyle{background} = [
	rectangle,
	rounded corners=5mm,
	inner xsep=0.3cm,
	inner ysep=0.2cm,
	fill=gray!10
	]
  
\matrix[row sep=0.5cm,column sep=1.2cm] {
	\node (e_1) {}; 
	&&
	\node (s_1) [state,double] {$s_1$}; 
	&
	\node (a_1) {$\; 1$}; 
	\\
	
	\node (e_2) [state] {$s_{i}$}; 
	&&
	\node (s_2) [state] {$s_2$}; 
	&
	\node (a_2) {$\; 2$}; 
	\\

	\node (e_3) {}; 
	&&
    	\node (s_3) [state] {$s_3$};
	&
	\node (a_3) {$\; 3$}; 
	\\
};
    
\path[->,thick]
	(e_2) edge[-] (e_1.center)
	(e_1.center)	edge	node[above=3pt]{$\scriptstyle[\frac{1}{3},\frac{2}{3}]$} 	
			(s_1)
	(e_2)	edge	node[above=3pt]{$\scriptstyle[0,\frac{1}{6}) \,\cup\, (\frac{5}{6},1]$} 		
			(s_2)
	(e_2) edge[-] (e_3.center)
	(e_3.center)	edge	node[above=3pt]{$\scriptstyle[\frac{1}{6},\frac{1}{3}) \,\cup\, (\frac{2}{3},\frac{5}{6}]$} 
			(s_3)
	;

\path[->,thick,dashed]
	(s_1) 	edge	(a_1)
	(s_2) 	edge	(a_2)
	(s_3) 	edge	(a_3)
	;

\begin{pgfonlayer}{background}
	\node [
		background,
                    fit=(e_2) (s_1) (s_2) (s_3),
                    ] {};
\end{pgfonlayer}

\end{tikzpicture}
\caption{(Left) A stateful policy over $n=2$ actions with $S=3$ states, with $s_{1}$ being the initial state. The labels on the edges between states indicate the set of rewards that trigger the corresponding transition (which is the role of the function $g^{\pi}$). The dashed arrows depict the function $f^{\pi}$, that assigns each state with an action.
(Right) A reactive policy over $n=3$ actions, considered in the example of \cref{sec:model}. The state $s_{i}$ is a placeholder that stands for each of $s_{1},s_{2},s_{3}$ and shows the outgoing transitions that are common to all 3 states.
} \label{fig:policy}
\end{figure}
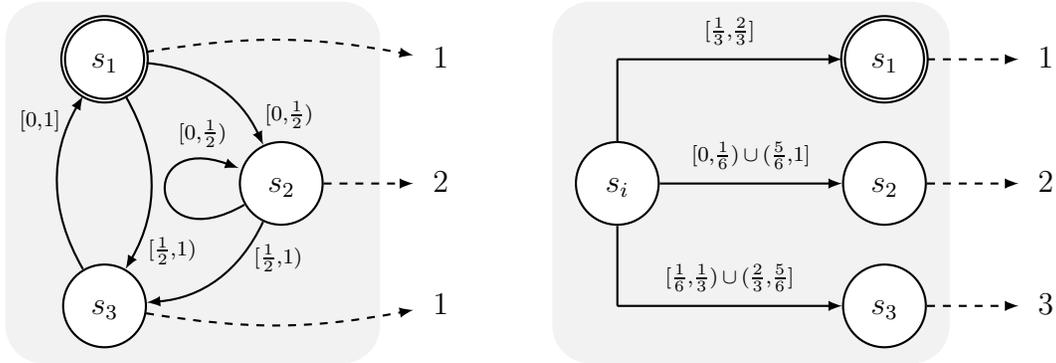

The action $x_{t}^{\pi}$ played by a stateful policy $\pi$ on round $t$ (had $\pi$ been followed from the beginning of time) can be computed recursively, starting from the given initial state $s_{0}^{\pi}$, according to
$$
\forall ~ t \in [T]~, \qquad
\begin{aligned}
	x_{t}^{\pi} &\eq f^{\pi}(s_{t-1}^{\pi}) ~, \\
	s_{t}^{\pi} &\eq g^{\pi} (s_{t-1}^{\pi},r_{t}(x_{t}^{\pi})) ~.
\end{aligned}
$$
Here, $s_{t}^{\pi}$ represents the state $\pi$ reaches at the end of round $t$.

In our setting, we assume that the player is given as input a reference set $\Pi$ of $k > 1$ stateful policies, each over at most $S$ states.
The player may base his decisions on the description of the $k$ reference policies (in particular, the policies can serve as subroutines by his algorithm).
Without loss of generality, we shall assume that each policy in $\Pi$ has exactly $S$ states.
Also, for simplicity we assume that policies are \emph{deterministic} (involve no randomization) and {\em time-independent}: the functions $f^{\pi}$ and $g^{\pi}$ do not depend on the round number; see \cref{sec:extensions} for extensions of our results to randomized and to time-dependent policies.

\begin{example*}
We present a detailed example to illustrate the model.
Suppose that our player, a driver, faces a daily commute problem, that repeats itself for a very large number $T$ of days. There are three possible routes that he can take, and an action is a choice of route (hence $n=3$). Each of the three routes can be better than the others on any given day. The reward of the player on a given route in a given day is some number in the range $[0,1]$ that summarizes his satisfaction level with the route he took (taking into account the time of travel, road conditions, courtesy of other drivers, and so on). The driver learns this reward only after taking the route, and does not know what the reward would have been had he taken a different route. We further assume that the effect of a single driver on traffic experience is negligible: the presence of the driver on a particular route on a given day has no effect of the quality (satisfaction level) of that or any other route on future days.

The driver is told that there is a useful policy for choosing the route in a given day, based only on the reward of the previous day. This policy has three states (hence $S = 3$). The action function $f$ is simply the identity function (in state $i$ take route $i$). The state transition function $g$ is independent of the current state, and depends only on the reward received. If $x$ denotes the reward received in the current day, then the next state is as follows:
$g(x) = 1$ for $|x - \frac{1}{2}| \le \frac{1}{6}$, $g(x) = 2$ for $|x - \frac{1}{2}| > \frac{1}{3}$, and $g(x) = 3$ otherwise.  The only part not specified by the policy is the initial state $s_0$ (which route to take on the first day). Hence effectively there are three reference policies, different only on their initial state, and thus $k=3$.

The beauty of the policy, so the player is told, is that if he gets the initial state right and from then on follows the policy blindly, his overall satisfaction is guaranteed.  Not knowing which is the better reference policy, does the player have a strategy that guarantees sublinear regret (in $T$) against the best of the three reference policies? If so, how low can this regret be guaranteed to be?
\end{example*}

The kind of policies considered in our example above is perhaps the weakest type of a stateful policy, one that we refer to as a \emph{reactive policy}.

\begin{definition}[reactive policy] \label{def:reactive}
A \emph{reactive policy} $\pi$ over $n$ actions (with 1-lookback) is specified by an initial action $x_{1}^{\pi} \in [n]$ to be played in the first round of the game, and by a function $\pi : [0,1] \mapsto [n]$ that maps the observed reward of the action played in the current round to an action to be played on the next round.
\end{definition}

A reactive policy simply reacts to the last reward it receives as feedback and translates it into an action to be played on the next round.
A reactive policy can be seen as a special type of a stateful policy with $S = n$ states, if we identify each of the sets $\pi^{-1}(i) \subseteq [0,1]$ with a unique state $i \in [n]$.
In this view, the action function $f^{\pi}$ is simply the identity function, and the state transition function $g^{\pi}$ is independent of the current state (and maps a reward $r$ to the state $i$ if $r \in \pi^{-1}(i)$).
See also \cref{fig:policy} for a visual description of the reactive policies used in our example.

\subsection{Main Results}
\label{sec:results}

We now state our main results, which are upper and lower bounds on the expected regret in the stateful policies model.

\begin{theorem}
\label{cor:positive}
For any given $k,S \ge 1$, there is an algorithm for the player that guarantees sublinear expected regret with respect to any reference set $\Pi$ of $k$ stateful policies over $S$ states.
Specifically, for any set $\Pi$ and any oblivious sequence of reward functions, \cref{alg:policies} given in \cref{sec:policy-upper} achieves an 
$$
	O\lr{ \sqrt{k S} \cdot \frac{T \log\log{T} }{\log^{1/4} {T} } }
$$
upper bound %
over the expected regret with respect to $\Pi$.
\end{theorem}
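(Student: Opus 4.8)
The plan is to reduce the learning problem to an adversarial bandit played over a small number of long \emph{blocks}, and to use a local repetition lemma to cope with the fact that the algorithm cannot track the internal states of the reference policies.

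\emph{Block decomposition and aggregation.} Partition the $T$ rounds into $N = T/L$ consecutive blocks of length $L$, where $L$ is tuned at the end. Within a single block the algorithm commits to \emph{simulating} one reference policy (starting from some state, since its genuine current state is not known): it plays the action the policy prescribes in its current state, observes the realized reward, and advances the state via $g^{\pi}$. This is feasible under bandit feedback precisely because, while simulating $\pi$, the algorithm plays exactly the actions $\pi$ plays and therefore observes exactly the rewards $\pi$ would observe; hence it can faithfully track $\pi$'s state transitions \emph{within} the block. The difficulty is that at a block boundary the algorithm does not know which state $\pi$ has genuinely reached, so it treats each pair $(\pi,s)$ with $\pi\in\Pi$ and $s\in[S]$ as a separate ``meta-arm'' ($kS$ in all): pulling $(\pi,s)$ in a block means simulating $\pi$ from state $s$ for the whole block. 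The payoff of a meta-arm in a block is the total reward it collects there, which is observed and lies in $[0,L]$. Running \textsc{Exp3} over the $kS$ meta-arms across the $N$ blocks yields regret $\tO(L\sqrt{kS\,N}) = \tO(\sqrt{kS\,T\,L})$ against the best \emph{fixed} meta-arm in hindsight.

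\emph{From meta-arms to reference policies via local repetition.} It remains to compare the best fixed meta-arm with the true benchmark $\max_{\pi}\sum_{t}r_{t}(x_{t}^{\pi})$. This is exactly where the loss of state information bites: the state a policy genuinely occupies at a block boundary varies across blocks and cannot be recovered from the feedback, so no fixed meta-arm traces $\pi$'s genuine trajectory throughout the game. The remedy is a \emph{local repetition lemma}, which we establish, asserting (roughly) that for a uniformly random block of length $L$, with probability at least $1-\eps(L)$ the trajectory obtained by simulating $\pi$ afresh from its initial state $s_{0}^{\pi}$ coincides, on \emph{every} round of the block, with the genuine trajectory of $\pi$ --- that is, a random length-$L$ window of the genuine run ``looks like'' a run started from scratch. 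Equivalently, all but an $\eps(L)$-fraction of the $N$ blocks are ``good'' for $\pi$. Since the adversary is oblivious, the good/bad partition is fixed in advance, and on each bad block we simply forfeit the block payoff ($\le L$). The lemma gives $\eps(L)\to 0$ as $L\to\infty$, but only at the slow rate $\eps(L)\approx\log^{-1/4}L$ (up to $\log\log$ factors); the companion lower bound shows slowness of this kind is unavoidable. Summing the forfeited payoffs over the at most $\eps(L)N$ bad blocks of the optimal policy, the meta-arm $(\pi^{\star},s_{0}^{\pi^{\star}})$ collects in total at least $\max_{\pi}\sum_{t}r_{t}(x_{t}^{\pi}) - \eps(L)\,T$.

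\emph{Tuning and the main obstacle.} Chaining the two estimates, the expected regret is $O(\eps(L)\,T + \sqrt{kS\,T\,L})$ up to logarithmic factors. Plugging $\eps(L)\approx\log^{-1/4}L$ and balancing the two terms gives $L\approx T/\sqrt{\log T}$ (so there are only $N\approx\sqrt{\log T}$ blocks), at which point both terms are $O(\sqrt{kS}\,T/\log^{1/4}T)$; the extra $\log\log T$ in the theorem is lower-order slack from the $\log$ factors in \textsc{Exp3} and from the high-probability form of the local repetition lemma. I expect the crux to be the local repetition lemma itself --- quantifying, for a random length-$L$ window, the probability that a from-scratch simulation fails to reproduce the genuine behavior of a stateful policy throughout the window, and doing so uniformly over all $k$ policies; proving this should exploit the automaton structure, namely that a policy with $S$ states induces a walk on $S$ nodes, so its behavior over a window becomes highly constrained once a state recurs. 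A secondary technical point is the \textsc{Exp3}-over-blocks analysis: the payoff of a meta-arm depends on the guessed state and on the algorithm's own randomness, so recasting it as a bona fide adversarial bandit instance --- controlling the variance of the importance-weighted reward estimators, and invoking obliviousness to freeze the good/bad partition --- needs care.
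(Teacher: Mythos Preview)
Your proposal has a genuine gap: the ``local repetition lemma'' you describe is false. You claim that for a uniformly random length-$L$ block, with probability $1-\eps(L)$ the trajectory obtained by simulating $\pi$ from a fixed state coincides on every round with $\pi$'s genuine trajectory, and that this ``should exploit the automaton structure \ldots\ once a state recurs.'' But state recurrence buys nothing here: the transition $g^{\pi}$ depends on the \emph{observed reward}, which depends on the action played, which depends on the current state. Two runs of $\pi$ started from different states can therefore remain on disjoint action-paths forever. The paper's own lower-bound instance (three reactive policies sharing the same transition function but with distinct initial states, together with reward functions built from random permutations) is exactly such a case: the three trajectories are pairwise disjoint for all $T$ rounds, so restarting from any fixed state at a block boundary lands on the genuine path with probability only $1/3$, independent of $L$. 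Your $\eps(L)\to 0$ is therefore impossible, and the comparison between the best fixed meta-arm and $\max_{\pi}\sum_t r_t(x_t^{\pi})$ does not close.

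The paper's local repetition lemma is a statement of a completely different nature. It is about \emph{strings of real numbers}, not about automaton trajectories: any sequence in $[0,1]^T$, viewed at a uniformly random dyadic (more precisely, $d$-adic) scale, decomposes into blocks whose sub-block averages are all within $\eps$ of the block average, except on a $\delta$-fraction of blocks. This is applied to the \emph{reward sequence of the best policy}, which is fixed once the oblivious adversary is fixed; no claim is made that trajectories synchronize. Correspondingly, the algorithm is not an \textsc{Exp3}-over-meta-arms scheme. It reduces to a two-arm ``hidden bandit'' where the reference arm is ``the best policy in its correct current state'' and a \textsf{switch} action guesses a random $(\pi,s)$ pair, hitting the right one with probability $p=1/(kS)$. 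On each block at the randomly chosen scale, the algorithm runs an explore-then-exploit routine that first estimates the (nearly constant) average reward of the reference arm by repeated random switching, and then stays as long as the observed block average does not drop noticeably below that estimate. The $\sqrt{kS}$ factor in the final bound comes from $1/\sqrt{p}$ in the hidden-bandit analysis, not from a $\sqrt{K}$ in \textsc{Exp3}.
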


Though the regret achieved in \cref{cor:positive} is sublinear, it is only slightly so. Unfortunately, this is unavoidable.


\begin{theorem}
\label{cor:negative}
No player algorithm can guarantee expected regret better than $O( T/\log^{3/2}{T} )$ with respect to any set of $k=3$ reference policies over $S=3$ states and $n=3$ actions, not even if the reference policies are all reactive (as in \cref{def:reactive}).
Moreover, this negative result holds in the commute example given in \cref{sec:model}.
\end{theorem}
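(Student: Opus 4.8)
The plan is to prove the lower bound via Yao's minimax principle: it suffices to construct a \emph{distribution} over reward sequences — and, in fact, over instances of the commute model of \cref{sec:model} with its reactive reference policies (\cref{def:reactive}), so that the conclusion automatically covers the last sentence of the theorem — on which \emph{every deterministic} player algorithm incurs expected regret $\Omega(T/\log^{3/2}T)$. The randomness of the instance consists of a hidden ``favourite'' policy $j^\star$ drawn uniformly from $\{1,2,3\}$, together with a random, hierarchically organized schedule of \emph{camouflaged phases}. The reward values are chosen so that (i) the trajectory of $\pi_{j^\star}$ earns, in aggregate, noticeably more than the other two trajectories, so $\E[\max_{\pi}\sum_{t} r_t(x_t^{\pi})] \ge \E[\sum_{t} r_t(x_t^{\pi_{j^\star}})]$ is correspondingly large; yet (ii) this advantage is spread thinly and interleaved with short ``bad stretches'', and symmetrically the non-favourite trajectories carry short ``good stretches'', in such a way that over short windows the reward stream seen along $\pi_{j^\star}$'s trajectory is statistically close to that seen along a non-favourite trajectory. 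It remains to show the player's expected total reward falls short of $\E[\sum_t r_t(x_t^{\pi_{j^\star}})]$ by $\Omega(T/\log^{3/2}T)$.

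The structural crux is to pin down what a player can possibly learn. Under bandit feedback, round $t$ reveals the single number $r_t(X_t)$; because the commute transition $g$ is deterministic, this number pins down the time-$(t{+}1)$ state of whichever reference policy sits at state $X_t$ on round $t$, and reveals nothing about the other two. Hence a player can maintain the exact current state of \emph{at most one} reference policy at a time, and — since the construction keeps the three policies at three distinct states throughout — the only way to keep tracking that policy is to keep playing its action; the moment the player deviates it is carried onto one of the two other policies with no way to tell which, and, since a permutation of $\{1,2,3\}$ is not determined by its value at a single point, no future feedback ever lets the player deduce where the abandoned policy has drifted to. So from the player's vantage point the three reference policies are indistinguishable tokens, and the feedback is that of a bandit in which one may only ever sample the arm currently ``held'', where leaving an arm both costs a fresh synchronization and affords no control over the arm one lands on; camouflage property (ii) makes the holding itself uninformative over short windows, so the player can neither cheaply confirm it holds $\pi_{j^\star}$ (hence is tempted off it during its bad stretches) nor cheaply recognize that a switch has landed it onto $\pi_{j^\star}$. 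One must separately verify realizability in the commute model, which is exactly where the concrete choice of the commute policy is used: at each round the three rewards must lie in the three distinct ``bins'' $g^{-1}(1),g^{-1}(2),g^{-1}(3)$ (so the policies never collapse onto a common state), while \emph{within} each bin the value is free — and this is precisely the freedom needed to steer each trajectory along its prescribed good/bad pattern and to realize both the aggregate gap (i) and the local camouflage (ii).

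For the quantitative estimate I would organize time into $\Theta(\log T)$ nested scales of blocks, attach fresh $O(1)$-bit random choices at each scale, and tune the per-round reward gap $\eps$ between good and bad stretches. The regret is bounded below by a per-block cost — a block throughout which the player is out of synchronization with $\pi_{j^\star}$ loses $\Omega(\eps L)$ against $\pi_{j^\star}$ in expectation, where $L$ is that block's length — times the number of such blocks, and the heart of the argument is to lower-bound the latter by an information-theoretic estimate in the spirit of multi-armed bandit lower bounds: a chain-rule/KL argument over the single trajectory the player actually observes should show the feedback is too uninformative about the hidden choices for the player to stay synchronized with $\pi_{j^\star}$ on more than a $1-\Omega(1/\log T)$ fraction of the horizon, the $\log T$ factor reflecting the logarithmically many scales at which a fresh re-discovery is forced. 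Balancing the gap and the block lengths (a natural choice has $\eps \asymp \log^{-1/2}T$ and an out-of-sync fraction $\asymp \log^{-1}T$) then produces expected regret $\Omega(T/\log^{3/2}T)$, for an instance with $k=3$ reactive policies over $S=3$ states and $n=3$ actions, which is the commute instance.

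The step I expect to be genuinely hard is that information-theoretic lower bound on how long a player can stay synchronized with $\pi_{j^\star}$ — together with the closely related design problem of reconciling (i) and (ii), i.e.\ making the favourite's aggregate advantage large while keeping the per-window signal tiny. The delicate point is that the player \emph{does} receive a real signal — its own reward stream — so one must quantify how weak the camouflage renders it, and do so while the player adaptively decides \emph{when} to switch reference policies, which couples the law of the observations to the player's own past choices. Controlling this coupling — say by conditioning on the (random) sequence of switch times, or by a martingale/potential argument that bounds the information gained per round regardless of history — is the technical core; once it is in hand, the realizability verification and the parameter optimization are routine. As a consistency check, the construction must not be \emph{too} strong: it cannot force $\omega(T/\log^{1/4}T)$ regret, as that would contradict \cref{cor:positive}, so the camouflage has to be calibrated so that a good player can extract just enough signal to match the upper bound and no more.
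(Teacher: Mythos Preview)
Your proposal correctly identifies the architecture of the argument --- Yao's principle, a hidden ``favourite'' trajectory, multi-scale randomness, and an information-theoretic indistinguishability claim --- and your discussion of why the three reactive policies stay on disjoint states (rewards landing in the three distinct bins $g^{-1}(1),g^{-1}(2),g^{-1}(3)$) is exactly right. But you have left the genuine technical core unfilled, and the missing piece is not just ``hard'' --- it requires a specific structural idea that your sketch does not contain.

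The difficulty is this: the player observes $T$ rewards, and if the favourite trajectory is better by $\eps$ per round, a naive KL or chain-rule accumulation gives total information of order $T\eps^2$ (or $T\gamma$ with your camouflage parameter), which is far too large to keep the arms indistinguishable. Your phrase ``chain-rule/KL argument over the single trajectory the player actually observes'' is not enough: you need a construction in which conditioning on the arm held during any \emph{single epoch} between switches changes the joint law of the observed rewards in only $O(\log T)$ coordinates, not $\Theta(T)$. The paper achieves this by routing through an intermediate model (the \emph{hidden bandit}) and building the reward process as a Multi-scale Random Walk on a binary tree with discrete-Laplace increments; the key combinatorial fact is that the MRW has \emph{width} $O(\log T)$, i.e.\ any round is cut from its parent by at most $O(\log T)$ epoch boundaries, so the log-likelihood ratio between ``epoch $m$ is on arm $0$'' and ``epoch $m$ is on arm $1$'' is $O(\gamma\log T)=O(1)$. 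This yields a \emph{constant} lower bound on the fraction of time spent on the wrong arm, with gap $\eps\asymp\log^{-3/2}T$ --- a different balance from your $\eps\asymp\log^{-1/2}T$ with out-of-sync fraction $\asymp\log^{-1}T$. Your ``fresh $O(1)$-bit random choices at each scale'' gesture is in the right direction but does not by itself deliver the bounded-width property; without an explicit construction with that property, the information bound you need does not follow.

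A second gap is the embedding into the commute model. You note correctly that the three rewards must occupy the three bins each round, but you also need the \emph{value within each bin} to be coupled to the hidden-bandit reward while the \emph{identity of the bin} encodes the next state along each trajectory. The paper does this with a sequence of independent uniform permutations $\sigma_t$ of $\{1,2,3\}$ and randomized rounding: the permutation scrambles which action carries the reference reward (so the player, upon switching, lands on the reference trajectory with probability exactly $1/2$, matching the hidden-bandit $p=1/2$), and the rounding simultaneously preserves the expected reward and signals the next action via $\lfloor|r|\rfloor$. Your proposal asserts this realizability is ``routine'', but getting the switch dynamics to match the hidden-bandit transition probabilities exactly is what makes the reduction go through, and it is not automatic.
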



For proving the above bounds, it will be convenient for us to first obtain upper and lower regret bounds in a simplified model we call \emph{the hidden bandit}. 
This model precisely captures the main difficulties associated with the stateful policies setting, and may be of independent interest.
Our results in the hidden bandit setting will be stated after we establish the required definitions in \cref{sec:bandit}.



\subsection{Discussion and Related Work}

A unifying paradigm for virtually all previous sequential optimization algorithms, whether in the expert or bandit setting, is the following. As the rounds progress, the algorithm ``learns" which arm had the better past performance (in the expert setting the algorithm observes all arms, in the bandit setting the algorithm uses an ``exploration and exploitation" procedure), and then plays this arm (either deterministically or with high probability). This paradigm is not suitable for our stateful policies model (with bandit feedback), as there is no way by which the algorithm can learn the identity of the best reference policy, even if this reference policy gives reward~1 in every round and all other reference policies give reward~0 in every round. This difficulty stems from the fact that reference policies might differ only in their initial state, and their identity is lost because the player cannot track the state evolution of policies, due to the bandit nature of the feedback. (This aspect will become more evident in the proof of \cref{cor:negative}.)

Several variants of our model have been extensively studied in the past.
However, to the best of our knowledge, our results constitute the first known example of a learning problem where the minimax regret rate is of the form $\Theta(T/\mathrm{polylog}(T))$.
For this reason, we believe that the problem we consider is substantially different from previously studied, seemingly related sequential decision problems.

The full-feedback analog of our setting is widely known to be captured by the so-called ``experts'' framework, and has been studied under the name of ``simulatable experts'' \citep{cesa2006games}.
Basically, when the player observes the rewards of all actions he is able to ``simulate'' each of his contending policies and keep track of their cumulative rewards. 
Hence, we can treat each policy as an independent expert and use standard online learning techniques (such as the weighted majority algorithm) to obtain $O(\sqrt{T})$ regret in this setting.
Consequently, we exhibit an \emph{exponential gap} between the minimax regret rates of the full-feedback and bandit-feedback variants of the problem%
\footnote{We say that the gap between the achievable rates is exponential, since the average (per-round) regret decays like $1/\mathrm{polylog}(T)$ in the bandit case, while in the full-information case it decays like $1/\sqrt{T}$.}. 
As far as we know, this is the first evidence of such gap to date:
the only previously known gap is in the case of the multi-armed bandit problem with switching costs, where the minimax regret rates are $\Theta(\sqrt{T})$ and $\wt{\Theta}(T^{2/3})$ in the full-feedback and bandit-feedback versions, respectively \citep{audibert2009minimax,dekel2013bandits}.

Among models with bandit feedback, the one most closely related to ours is perhaps the setting of the EXP4 algorithm \citep{auer2002nonstochastic}, which is a variation on the standard multi-armed bandit problem.
In this setting, on each round of the game, before committing to a single action and observing its reward the player is provided with the advice of a fixed set of ``experts'' on which arm to choose.
The player's goal is to perform as well as the best expert in the set, and his regret is computed with respect to that expert. 
\cite{auer2002nonstochastic} suggest the EXP4 algorithm for this setup and proves that it achieves an optimal $O(\sqrt{T})$ bound over the regret.
The crucial difference between this setting and ours is in the fact that the advice of an expert is assumed to be available at all times, whereas the advice of a stateful policy becomes unavailable once the player deviates from it.
In other words, while our policies are simple algorithms that observe bandit feedback, we think of their experts as ``oracles'' whose observation is not limited to the player's rewards.

Our setting might seem reminiscent of (online) reinforcement learning models, and in particular, of online Markov Decision Processes (MDPs) \citep{even2009online,neu2010online}. 
In these models, there is typically a finite number of states, and the player's actions on each round cause him to transition from one state to another.
As a consequence, the reward of the player on each round is determined not only based on his action on that round, but also as a function of his actions in previous rounds.
In contrast, in our setting the environment is oblivious and thus determines the reward based solely on the player's action on the current round.
Furthermore, 
in an MDP the state is \emph{of the environment} and the player's actions inevitably causes this state to change from round to round; in our model, the state is \emph{owned by the player} (more precisely, by his contending policies) and he may freely transition himself to an arbitrary state (of any one of the policies) at any given moment, or even choose not to be in any of the states.

More generally, the settings considered in the related works of \cite{merhav1998universal}, \cite{farias2006combining} and \cite{arora2012online} (among others), that deal with stateful and reactive environments in an online decision making framework, are also substantially different than ours. 
As is the case with the reinforcement learning literature, the focus of these works is the adaptiveness of the adversary and not of the player's reference policies. 

Finally, we remark that our definition of a stateful policy is not new and similar notions have been considered in the past.
Most notably, the work of \cite{feder1992universal} in the related context of binary sequence prediction considers a similar concept which they call ``FS predictor'', and studies the prediction power of the class of all such predictors with at most $S$ states. However, our goal is entirely different than theirs: while they are concerned with the prediction power of the class of all such predictors with at most $S$ states, we aim to understand the difficulty of learning a small set of these concepts (with bandit feedback).

\subsection{The Hidden Bandit Problem}
\label{sec:bandit}

In this section we present a setting that we shall refer to as {\em the hidden bandit problem}, which captures the main difficulties associated with the stateful policies model. It will be convenient for us to first obtain results in the hidden bandit model, and then translate them to the stateful policies model.

\begin{figure}
\begin{center}
\begin{tikzpicture}[line width=1pt]
\matrix[row sep=0.5cm,column sep=1.2cm] {
	\node (0) {reference}; 
	&
	\node (1) {decoy}; 
	\\
	\node (0) [state] {$0$}; 
	&
	\node (1) [state] {$1$}; 
	\\
};
\draw[
	>=latex,
	auto=right,
	loop right/.style={out=-30,in=30,loop},
	loop left/.style={out=210,in=150,loop},
	every loop,
	]
(0)	edge[loop left]		node[left]	{$0$}	(0)
(1)	edge[loop right]		node		{$1-p$}	(1)
(0)	edge[bend right=30] 	node		{$1$}	(1)
(1)	edge[bend right=30] 	node		{$p$}	(0);
\end{tikzpicture}
\end{center}
\caption{The dynamics of the \textsf{switch} action in the hidden bandit model can be viewed as a two-state Markov chain, where state $0$ stands for the reference arm, and state $1$ for the decoy arm. The arrow labels denote the corresponding arm transition probabilities. 
}
\label{fig:switch}
\end{figure}
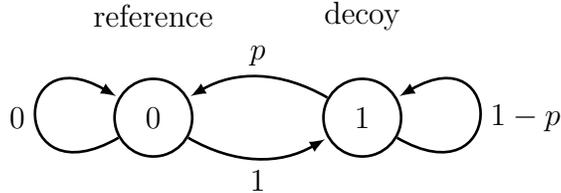

To motivate the hidden bandit problem, let us discriminate between two different modes a player in the stateful policies model may be in, at any given round: the ``good mode'', in which the algorithm is following the best reference policy in its correct state, and the ``bad mode'' in which the algorithm is doing something else (i.e., following other reference policy or executing a sequence of actions that do not correspond to any reference policy).
Inevitably, the player is not aware of his current mode and is unable to switch  between the modes deterministically.
However, if at some point in time the player is told that he is in the ``good mode'', then from that point onwards he can replicate the actions of the best policy by observing its rewards and emulating its state transitions, and remain in the same mode.

Roughly, the hidden bandit problem can be described as a multi-armed bandit problem with two arms, the \emph{reference arm} and the \emph{decoy arm}, that 
correspond to the ``good mode'' and the ``bad mode" in the stateful policies model, respectively. 
Unlike standard multi-armed bandit problems, a key aspect of this problem is that in any given round the player does not know which of the arms he is currently pulling. 
Accordingly, the player is not able to select which arm to pull on each round; rather, he can only choose whether to \emph{stay} on the current arm or to \emph{switch} to the other arm with some probability.
These aspects capture the difficulties in the stateful policies model, in which once the player leaves a certain policy, attempting to return to that policy involves guessing correctly the policy's internal state, an aspect that a player is not sure of.

\paragraph{The model.}

We now turn to the formal description of the hidden bandit model.
There are two parameters associated with the hidden bandit model. One is $T$, the number of rounds, and the other is $p$, a parameter in the range $0 < p < 1$.
There are two arms, arm~0 and arm~1, that will be referred to as the {\em reference arm} and the {\em decoy arm}, respectively. 
At each round, the player has only two possible actions available:
\begin{itemize}
\item
\textsf{stay}: stays on the same arm on which the player entered the round;
\item
\textsf{switch}: switches to arm $1$ if the player entered the round on arm~0; otherwise, switches to arm $0$ with probability $p$, and stays on arm $1$ with probability $1-p$.
\end{itemize}
The dynamics of the \textsf{switch} action can be seen as a two-state Markov chain, illustrated in \cref{fig:switch}.
Initially, prior to round~1, the player is placed on one of the arms at random, being on arm~0 with probability $\frac{p}{1 + p}$ and on arm 1 with probability $\frac{1}{1+p}$. This initial probability distribution is the stationary distribution with respect to the randomized \textsf{switch} action defined above.
Hence, any sequence of actions (either \textsf{stay} or \textsf{switch}) of the player gives rise to a sequence of random variables $X_{1:T}$, where $X_{t} \in \set{0,1}$ indicates which arm is pulled by the player on round $t$.
Even though at each round the player is pulling some arm, {\em the player cannot observe on which arm he is playing}.
In other words, the sequence $X_{1:T}$ is \emph{not observable} by the player.

On each round $t=1,\ldots,T$, the adversary assigns a reward to each arm.
We let $r_t(i) \in [0,1]$ denote the reward of arm $i$ on round $t$. 
The rewards of the reference arm are set by the adversary in an {\em oblivious} way, before the game begins. 
The rewards of the decoy arm are set by the adversary in an {\em adaptive} way as the game progresses: at every round $t$, the reward of arm~1 can be based on the entire history of the game up to round $t$. 
The feedback to the player on round $t$ in which arm $X_{t}$ is played consists only of the reward $r_{t}(X_{t})$, and the player does not get to observe the reward of the other arm on that round.

The goal of the player is to minimize his expected \emph{regret}, which is computed only with respect to the total reward of the \emph{reference arm}, namely
\begin{align*}
	\mathrm{Regret}_{T} \eq
	\sum_{t=1}^{T} r_{t}(0) - \E\left[ \sum_{t=1}^{T} r_{t}(X_{t}) \right] ~,
\end{align*}
where the expectation on the right-hand side is taken with respect to the randomization of the \textsf{switch} actions, as well as to the internal random bits used by the player.

This completes the description of the hidden bandit problem.

\paragraph{Remark.}


The fact that the adversary can set the rewards on the decoy arm in an adaptive manner will allow us to simulate any execution in the stateful policies model by an execution in the hidden bandit model. Consequently, all positive results (algorithms with low regret) that we shall prove in the hidden bandit model will transfer easily to the stateful policies model (basically, by setting $p = 1/(Sk)$, where $k$ is the number of reference policies and $S$ is the maximum number of states that a policy might have).
On the other hand, 
it might not be true that negative results in the hidden bandit model transfer to the stateful policies model. 
Nevertheless, our negative results for the hidden bandit model will be obtained with an \emph{oblivious adversary} (which is oblivious not only on the reference arm but also on the decoy arm), and consequently will transfer to the stateful policies model.

\paragraph{Results.}

We now present our results for the hidden bandit problem, that we later show how to translate into the corresponding upper and lower bounds in the stateful policies model.

\begin{theorem} \label{thm:positive}
For any given $0 < p < 1$, there is an algorithm for the player in the hidden bandit setting that guarantees sublinear expected regret (in $T$). 
Specifically, \cref{alg:onearm2} presented in \cref{sec:bandit-upper} achieves an expected regret of
$$
	O \lr{ \frac{1}{\sqrt{p}} \cdot \frac{T \log\log{T}}{\log^{1/4}{T}} } ~.
$$
over any sequence of reward functions.
\end{theorem}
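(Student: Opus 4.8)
The plan is to design and analyze a block-based algorithm. Fix a block length $L$, to be optimized at the end. The algorithm runs in two modes. In an \emph{exploit} block it plays \textsf{stay} for all $L$ rounds, hence remains on whichever arm it entered the block on and simply collects that arm's rewards over the block. In a \emph{resync} burst it plays \textsf{switch} for a constant number of rounds; since the two-state \textsf{switch} chain of \cref{fig:switch} mixes geometrically (its non-trivial eigenvalue is $-p$), a couple of \textsf{switch} actions leave the player on the reference arm with probability $\Theta(p)$, regardless of the arm it occupied before. The first step is the per-block accounting. An exploit block contributes $0$ to the regret if it is entered on arm~$0$ (then $X_t\equiv 0$ on the block) and at most $L$ if it is entered on arm~$1$ (each per-round gap $r_t(0)-r_t(1)$ is at most $1$), while a resync burst contributes $O(1)$. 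Writing $G$ for the number of exploit blocks entered on arm~$1$ and $R$ for the number of resync bursts,
\[
  \Reg_T \;\le\; L\cdot\E[G] \;+\; O(\E[R]) \;+\; (\text{lower-order terms}).
\]
Since $R$ is at most the number of blocks, the middle term is $O(T/L)$ and already negligible, so the task reduces to finding a resync-triggering rule under which $L\,\E[G]$ is at most $O\!\left(T\log\log T/(\sqrt p\,\log^{1/4}T)\right)$.

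The crux is that the player never observes its current arm, so the resync rule can depend only on the observed reward sequence, while the decoy arm is \emph{adaptive}. A reward-independent rule is hopeless: both \textsf{stay} and (stationary) \textsf{switch} preserve the stationary distribution, so any such strategy is on arm~$0$ for only a $p/(1+p)$ fraction of rounds in expectation, and hence has linear regret against the adversary that sets $r_t(0)\equiv 1$ and $r_t(1)\equiv 0$. So the rule must trigger a resync precisely when the observed rewards ``look bad'', and the analysis has to confront the adversary's freedom to let the decoy arm mimic a plausible reference trajectory. This is exactly what the \emph{local repetition lemma} should control: a dichotomy asserting that over any window, either the decoy rewards stay close to the reference rewards --- in which case exploiting while stuck on the decoy is nearly costless --- or they deviate, in which case the deviation is visible in the observed reward sequence within a bounded delay and thus provokes a resync. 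Quantitatively, I would use the lemma to bound the ``untracked'' reference reward (the part of $L\,\E[G]$ that is not already negligible) by roughly $T/\sqrt L$ --- reflecting the $\Theta(\sqrt L)$ magnitude of a drift the adversary can hide inside a window of length $L$ --- and to bound $\E[R]$ by a quantity that decreases with $L$; the precise exponents are what pin down the rate.

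Granting such a lemma, the proof finishes in three steps. \emph{(i) The $p$-dependence.} Naively charging one geometric run of $\Theta(1/p)$ resync attempts (each wasting a block) to every escape from arm~$1$ yields only a $1/p$ prefactor; balancing the resync \emph{rate} against the cost of spurious resyncs --- which can knock the player off arm~$0$ --- through a second-moment estimate saves a square root and produces the $1/\sqrt p$ of \cref{thm:positive}. \emph{(ii) The choice of $L$.} Pick $L$ to equalize the detection-delay loss, $\asymp T/\sqrt L$, against the wasted-block loss, which shrinks in $L$; the balance is at $L\asymp\sqrt{\log T}$ and gives the $T/\log^{1/4}T$ rate. \emph{(iii) The $\log\log T$ factor.} This should arise because the algorithm does not know in advance the correct detection threshold --- equivalently, the scale of the adversary's drift --- and must run the construction over a geometric grid of $O(\log\log T)$ candidate scales, paying a union bound. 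Throughout, I would keep the decoy adversary adaptive, since \cref{thm:positive} allows this and the reduction to the stateful policies model needs it.

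The main obstacle is the local repetition lemma itself: formalizing the ``either harmlessly close or detectably deviating'' dichotomy uniformly over \emph{all} adaptive decoy adversaries, with no access to the current arm, and extracting a trade-off (untracked reward $\lesssim T/\sqrt L$ versus number of resyncs) sharp enough to deliver the $\log^{1/4}$ exponent --- which, by the matching lower bound \cref{cor:negative}, cannot be improved past $\log^{3/2}$. Everything else --- the mixing estimate for the \textsf{switch} chain, the second-moment amplification, and the parameter tuning --- is routine by comparison.
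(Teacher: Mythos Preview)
Your proposal has a genuine gap: you have misidentified what the local repetition lemma says and how it is used. You describe it as a dichotomy about the \emph{decoy} adversary---``either the decoy rewards stay close to the reference rewards, or they deviate detectably''---and then hope to bound the ``untracked reference reward'' by $T/\sqrt L$. But the lemma is a purely combinatorial statement about the \emph{reference} sequence alone: for an appropriately chosen $d$, if one picks a random scale $\ell\in\{0,\dots,\log_d T-1\}$ and partitions $[T]$ into blocks of length $d^{\,\log_d T-\ell}$, then on all but an $\epsilon$-fraction of the resulting blocks the reference-arm averages over the $d$ sub-blocks are all within $\epsilon$ of the block average. It says nothing about the decoy arm, about drifts the adversary can ``hide in a window of length $L$'', or about detection delays.

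Because of this, your algorithm is missing the one ingredient that makes the argument work: an \emph{exploration phase} that learns the (approximately constant) reference value on the current block. In the paper, once a random scale is fixed, each block is treated as a fresh instance. Phase~I makes $m=\Theta((1/p)\log(1/\epsilon))$ switch-then-stay probes, each lasting one sub-block; with probability $1-\epsilon$ at least one probe lands on the reference arm, and since the block is $(d,\epsilon)$-repetitive that probe's average is within $\epsilon$ of the true reference value $v$. Phase~II then uses the largest probe averages as thresholds: whenever the running sub-block average drops below the current threshold minus $2\epsilon$, switch. The decoy arm can avoid triggering a switch only by paying at least $v-O(\epsilon)$, which costs $O(\epsilon)$ regret per round; and the number of ``wasted'' switches before settling is at most $m^2$. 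This is why the per-block regret is $O(\epsilon\cdot\text{block length})$, hence $O(\epsilon T)$ overall, and the entire rate---including the $1/\sqrt p$ and the $\log\log T$---comes from choosing $\epsilon$ so that $T\ge d^{\,d/(4\epsilon^3)}$ with $d=(1/p^2\epsilon)\log^2(1/\epsilon)$, not from any second-moment amplification or $T/\sqrt L$ trade-off. Your ``resync when rewards look bad'' rule has no reference point to compare against, and your quantitative claims ($L\asymp\sqrt{\log T}$, untracked reward $\asymp T/\sqrt L$, a second-moment argument yielding $1/\sqrt p$) are not supported by the lemma you are invoking.
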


\begin{theorem}
\label{thm:negative}
For $p = \frac{1}{2}$, no algorithm for the player in the hidden bandit setting guarantees expected regret better than $O ( T/\log^{3/2}{T} )$, not even if the adversary uses an oblivious strategy on both arms.
\end{theorem}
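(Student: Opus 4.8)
\textbf{(Proof plan.)} My plan is to pass to the minimax formulation via Yao's principle: it suffices to exhibit a single distribution over reward sequences — with both $r_{1:T}(0)$ and $r_{1:T}(1)$ drawn before round~$1$, hence \emph{oblivious} — against which every deterministic player algorithm has expected regret $\Omega(T/\log^{3/2}T)$, the expectation taken over the adversary's coins, the initial placement, and the player's \textsf{switch} coins. A lower bound of this form for deterministic players against a fixed randomized oblivious adversary yields the claimed bound on the minimax regret, and since the adversary is oblivious on both arms it also feeds, via the translation discussed in the Remark above, into the stateful-policies lower bound of \cref{cor:negative}.

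For the construction I would work at scale $L=\Theta(\log T)$: partition $[T]$ into $M=T/L$ blocks, on the vast majority of rounds drive \emph{both} arms by a common i.i.d.\ reward (so that such rounds generate neither regret nor any information usable by the player), and on a sparse, randomly chosen family of ``signal'' blocks make arm~$0$ stochastically dominate arm~$1$ by a small margin $\varepsilon$ of order $1/\sqrt L$. The point of taking $\varepsilon$ this small is that the signal hidden in a single block sits exactly at the statistical-resolution threshold of an $L$-sample window: a player who spends one block's worth of rounds on a fixed arm can neither decide whether the block is a signal block nor tell which arm he is on. On each signal round the regret increases by $\Theta(\varepsilon)\cdot\Pr[X_t=1]$, and there are $\Theta(T/L)$ signal rounds in all.

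The bound then follows by pitting two facts against each other. (i) Because the initial law $\big(\tfrac{p}{1+p},\tfrac1{1+p}\big)=\big(\tfrac13,\tfrac23\big)$ is the stationary law of the randomized \textsf{switch} action while \textsf{stay} is the identity, a player whose feedback is (conditionally) uninformative about his current arm remains on the law $\big(\tfrac13,\tfrac23\big)$ for all $t$, so $\Pr[X_t=1]=\tfrac23$ on every signal round; over $\Theta(T/L)$ signal rounds of margin $\Theta(1/\sqrt L)$ that alone is $\Omega(T/L^{3/2})=\Omega(T/\log^{3/2}T)$ regret. (ii) To drive $\Pr[X_t=1]$ appreciably below $\tfrac23$ the player must extract genuine information linking his observations to his current arm; since the per-round information is $\Theta(\varepsilon^2)$ on signal rounds and zero elsewhere, this can only be done by committing to one arm across many signal blocks, and any such commitment, taken while that arm is arm~$1$, leaks $\Omega\big(\varepsilon\cdot(\text{number of signal rounds it spans})\big)$ regret; moreover $p<1$ makes a return from arm~$1$ to arm~$0$ fail with constant probability, forcing such long commitments to be repeated. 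Choosing $L$, $\varepsilon$, and the signal density so that the cheapest informative behavior already pays $\Omega(T/\log^{3/2}T)$, while anything less informative falls under (i), closes the argument.

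The step I expect to be the real obstacle is making (ii) rigorous: ruling out \emph{every} adaptive strategy by which the player might cheaply localize his current arm and then ``lock in'' to arm~$0$ for the remaining rounds. I would attack this information-theoretically — bounding, via the KL chain rule (or an explicit coupling of the two feedback processes), the total divergence between the hypotheses ``currently on arm $0$'' and ``currently on arm $1$'' as a function of how long the player has committed to a single arm, and then converting a small divergence, through Pinsker's inequality together with the \textsf{switch}-chain dynamics, into a quantitative floor on $\Pr[X_t=1]$ over the signal rounds in terms of the regret already incurred. Getting the polylog exponents to line up exactly at $T/\log^{3/2}T$ — so that no choice of commitment length lets the player do better, which may well require spreading the signal across several scales rather than one — is the delicate quantitative core of the proof.
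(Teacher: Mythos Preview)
Your reduction via Yao's principle is fine, and the target arithmetic $(\text{signal density})\times\varepsilon\times T=\Theta(T/\log^{3/2}T)$ is set up correctly, but the construction itself has a genuine gap: with an i.i.d.\ common reward on the non-signal rounds, the player can beat you with only polylogarithmic regret. Here is the attack. The per-round mean gap between the two arms is $(\text{signal density})\cdot\varepsilon=(1/L)\cdot(1/\sqrt{L})=L^{-3/2}$, while the empirical mean over $T'$ consecutive rounds on a fixed arm has standard error $\Theta(1/\sqrt{T'})$ because the background is i.i.d. Hence $T'=\Theta(L^{3})=\Theta(\log^{3}T)$ rounds already give a constant-SNR test for which arm is being pulled, at regret cost only $(T'/L)\cdot\varepsilon=\Theta(L^{3/2})=\Theta(\log^{3/2}T)$ per test. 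Running $O(\log T)$ such stay-for-$T'$-then-\textsf{switch} phases, the player sees both clusters of means, identifies the higher one as arm~$0$, and thereafter locks in, for total regret $\mathrm{polylog}(T)$. The same calculation goes through for any parameters with $(\text{signal density})\cdot\varepsilon=\Theta(\log^{-3/2}T)$, and spreading the signal over several scales does not change it: what gets averaged away is the i.i.d.\ background, not the signal. Your step~(ii) implicitly assumes that the information needed to localize the arm costs $\Omega(T/\log^{3/2}T)$ regret to acquire; with i.i.d.\ noise it costs only $\mathrm{polylog}(T)$.

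The paper's construction blocks this attack by making the \emph{common} reward correlated across time rather than i.i.d.: it takes $r_{t}(0)=\tfrac12+W_{t}$ and $r_{t}(1)=\tfrac12+W_{t}-\varepsilon$ (clipped), where $W_{t}$ is the Multi-scale Random Walk of Dekel et al., with increments drawn from a two-sided geometric law at scale $\varepsilon=\Theta(1/\log^{3/2}T)$ and rate $\gamma=\Theta(1/\log T)$. Because the MRW's dependency tree has width $O(\log T)$, a \textsf{switch} at any round perturbs only $O(\log T)$ edges of the tree, each contributing at most $\gamma$ to the log-likelihood ratio; the product is $O(1)$. This gives a uniform bound $Q_{m}(A)\ge\tfrac{1}{4e}P_{m}(A)$ for \emph{every} observable event $A$ and every epoch $m$, \emph{independent of the epoch's length}. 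In particular, no averaging over a long epoch helps: the empirical mean over $T'$ rounds is $\tfrac12+(\text{average of }W)-\varepsilon\cdot X$, and the $W$-average has standard deviation $\Theta(1)\gg\varepsilon$ for all $T'$ because the $W_{t}$ are highly correlated. That constant likelihood-ratio bound is what turns your step~(ii) into a one-line argument valid for all strategies. The idea you are missing is not a finer information--regret tradeoff or more scales in the signal, but a background process under which the player's information about his current arm does not accumulate with time at all.
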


There is a gap between the upper bound of \cref{thm:positive} and the lower bound of \cref{thm:negative}, that translates into a gap between our main upper and lower bounds of \cref{cor:positive,cor:negative}.
In some natural special cases, we are able to close this gap.
We say that an adversary is {\em consistent} if there is a fixed offset $0 < \Del \le 1$ such that in every round $t$, $r_t(0) - r_t(1) = \Del$. Say that the player's algorithm is {\em semi-Markovian} if the choice of action taken at any given round depends only on the sequence of rewards obtained since the last switch action. (See exact definitions in \cref{sec:semimarkov}.)

\begin{theorem}
\label{thm:semimarkov}
In the hidden bandit setting, if the player's algorithm is required to be semi-Markovian and the adversary is required to be consistent, then there is an algorithm achieving expected regret $O(T/\log T)$, and this is best possible up to constants (that may also depend on $p$).
\end{theorem}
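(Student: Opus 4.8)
The plan is to establish the two directions separately, but in both to pass through the two-state Markov chain of \cref{fig:switch}, which governs the arm the player occupies during successive \emph{epochs} --- the maximal runs of \textsf{stay} actions between two consecutive \textsf{switch} actions. The role of the semi-Markovian assumption is exactly that the player's conduct inside an epoch, and in particular its (random) length, is a fixed function of the rewards observed since the last \textsf{switch}; hence epoch lengths behave like i.i.d.\ quantities amenable to a renewal argument, and the only coupling between an epoch's arm $Y_i$ (which evolves by \cref{fig:switch}: after a reference epoch the next epoch is on the decoy arm, and after a decoy epoch we return to the reference arm with probability $p$) and its length comes through consistency --- at every round the decoy-arm reward is exactly $\Del$ below the reference-arm reward, so the stream observed during an epoch is simply translated by $\Del$ according to the epoch's arm. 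Note also that the regret equals $\Del$ times the expected number of decoy rounds and is always at most $\Del T$, so the regime $\Del \lesssim 1/\log T$ is free and we may assume $\Del = \Omega(1/\log T)$.

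For the upper bound I would use an epoch rule that, by exploiting consistency, makes reference epochs substantially longer than decoy epochs. Concretely: inside an epoch keep a (windowed) estimate $\hat r$ of the average observed reward and \textsf{switch} once the epoch has lasted about $e^{\beta\hat r}$ rounds, capped at $T$, with $\beta=\Theta(\log T)$ chosen so that the span of possible epoch lengths is exactly $[1,T]$ --- this is the only place $\log T$ enters on the positive side. Since the per-round reward on the reference arm exceeds that on the decoy arm by exactly $\Del$, a reference epoch outlasts by a factor $\approx e^{\beta\Del}$ the decoy epoch the same stream would have produced; plugging this into the chain of \cref{fig:switch} and summing the resulting geometric series bounds the expected number of decoy rounds, and hence the regret, by $O(T/\log T)$. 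The one subtlety is that a consistent adversary may drift its reward level across the whole of $[0,1]$, which I would absorb with an amortisation argument: the level can cross any fixed $\Del$-scale threshold only a controlled number of times relative to the total play, so the algorithm is ``re-confused'' only boundedly often.

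For the lower bound I would invoke Yao's principle against a random \emph{oblivious} reference sequence $r_{1:T}(0)$ given by a lazy random walk confined to a sub-interval of $[0,1]$ bounded away from both endpoints, so that no single observation is ever self-revealing. The core is an indistinguishability lemma: for any semi-Markovian player the law of the observed reward stream over any window of length $o(\log T)$ is within $o(1)$ total variation whether the player sits on the reference or the decoy arm, because translating the walk by $\Del$ is statistically almost invisible on such short time scales. Thus any epoch the player chooses to end quickly has a nearly arm-independent length, so by \cref{fig:switch} a constant fraction of short epochs sit on the decoy arm (and if short epochs dominate, this already forces $\Omega(T)$ decoy rounds); whereas any epoch the player lets run long costs $\Omega(\log T)$ decoy rounds on the constant fraction of occasions it sits on the decoy arm, and --- crucially --- the player cannot amortise the cost of ``learning its arm'' across epochs because every \textsf{switch} erases its memory. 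Balancing the short-versus-long trade-off against this dichotomy yields the $\Omega(T/\log T)$ bound.

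I expect the principal obstacle to be the indistinguishability step and its conversion into a regret lower bound: one must bound, uniformly over all semi-Markovian strategies and all window lengths up to $\Theta(\log T)$, the influence of the hidden arm on the player's view, and then translate ``the player cannot tell'' into ``the player provably wastes a $\Theta(1/\log T)$ fraction of rounds on the decoy arm'' through the renewal/Markov structure --- a translation that is delicate because a clever player might let its epoch length depend on the observations in ways more intricate than any threshold test. A secondary difficulty, on the positive side, is making the epoch-rule analysis robust to arbitrary drift of the consistent adversary's reward level, which I would handle with the potential/amortisation argument sketched above.
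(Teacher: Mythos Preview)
Your upper bound is in the right spirit but overcomplicated. The paper achieves the bound with a purely \emph{Markovian} rule: at every round \textsf{switch} with probability $\tfrac{1}{2}e^{-\eta r_t}$, $\eta=\tfrac{1}{2}\log T$. The key observation you are missing is that under consistency the ratio of switch probabilities on the two arms is $e^{-\eta\Del}$ \emph{at every round, regardless of drift}; consequently the time-inhomogeneous two-state chain on arms has a \emph{single} stationary distribution $\mu=\bigl(\tfrac{p}{p+e^{-\eta\Del}},\tfrac{e^{-\eta\Del}}{p+e^{-\eta\Del}}\bigr)$, common to all rounds, and one just bounds mixing to it. This sidesteps your drift/amortisation worry entirely --- there is nothing to amortise. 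Your epoch-length rule $e^{\beta\hat r}$ can presumably be pushed through, but the analysis you sketch (windowed estimates, potential arguments for drift) is heavier than needed, and the paper's Markovian algorithm is strictly simpler.

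Your lower bound, however, has a genuine gap. The paper does \emph{not} use a random walk or an information-theoretic indistinguishability argument; it uses a \emph{constant} adversary (both arms constant) together with a combinatorial argument borrowed from monotonicity testing. The point is that against a constant adversary, a deterministic semi-Markovian strategy collapses to a single function $g:[0,1]\to\{1,\ldots,T\}$ (observed reward $\mapsto$ epoch length), and the adversary's job is to pick a pair $v_1<v_0$ from a grid of $\log T$ values so that $g(v_0)$ is not much larger than $g(v_1)$. A pigeonhole/LIS argument over the sequence $g(1/\log T),g(2/\log T),\ldots$ --- the range is $[1,T]$, so there are at most $\tfrac{1}{2}\log T$ ``doublings'' --- guarantees many such dangerous pairs, and a carefully weighted distribution over pairs (the ``MT strategy'') hits one with the right probability to force $\Omega(T/\log T)$ regret. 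This argument is purely combinatorial and gives the sharp power of $\log T$ directly.

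By contrast, your random-walk construction does not reduce the semi-Markovian player to a one-dimensional function; the player's epoch length depends on an entire observed trajectory. Your indistinguishability claim over windows of length $o(\log T)$ is plausible, but the conversion you sketch (``short epochs are arm-independent, long epochs waste $\Omega(\log T)$ decoy rounds, balance'') does not obviously yield $\Omega(T/\log T)$ rather than $\Omega(T/\log^{3/2}T)$ --- indeed, a random-walk-plus-offset construction is exactly what the paper uses for its \emph{general} (not semi-Markovian) lower bound in \cref{sec:bandit-lower}, and there it only gives $\Omega(T/\log^{3/2}T)$. The extra $\sqrt{\log T}$ you need comes precisely from the semi-Markovian memory reset, and you have not indicated how your information-theoretic argument harvests it. The principal obstacle you identify is real, and the paper avoids it altogether by a different, combinatorial route.
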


We remark that we actually prove a slightly stronger statement than that of \cref{thm:semimarkov}: for the positive results a {\em Markovian} algorithm suffices, for the negative results a {\em consistent} adversary suffices. See \cref{sec:semimarkov} for more details.

\subsection{Our Techniques and Additional Related Work}

Our algorithm in the proofs of \cref{thm:positive} and \cref{cor:positive} is based on a principle that to the best of our knowledge has not been used previously in sequential optimization settings. This is the {\em local repetition lemma} which will be explained informally here, and addressed formally in \cref{sec:localrepeat} (see \cref{lem:localrepetitive}).

In the hidden bandit setting, suppose first that the sequence of rewards that the adversary places on the reference arm is {\em repetitive}---the same reward $r$ on every round. If the player knows that the reference arm is repetitive, it should not be difficult for the player to achieve sublinear regret, even if he does not know what $r$ is. He can start with an {\em exploration phase} (occasional {\sf switch} requests embedded in sequences of {\sf stay} actions) that will alert him to repeated patterns of $r$ values in-between two switches. Thereafter, in an {\em exploitation phase}, whenever the player gets a reward below $r$, he will ask for a switch. The only way the decoy arm can cause the player not to reach the reference arm is by offering rewards higher than $r$, but getting rewards higher than $r$ on the decoy arm causes no regret. (The above informal argument is made formal in the proof of \cref{thm:positive}.)

The above argument can be extended (with an $O(\epsilon T)$ loss in the regret) to the case that the rewards on the reference arm are {\em $\epsilon$-repetitive}, namely, in the range of $r \pm \epsilon$ for some $r$.
Suppose now that given some integer $d < T$, the reference arm is not $\epsilon$-repetitive, but only {\em $(d,\epsilon)$-locally repetitive}, in the following sense: starting at any round that is a multiple of $d$, the sequence of rewards on the $d$ rounds that follows is $\epsilon$-repetitive. A $(d,\epsilon)$-locally repetitive sequence need not be $\epsilon$-repetitive---it can change values arbitrarily every $d$ rounds. However, if $d$ is sufficiently large (compared to $1/p$ in the hidden bandit setting), the player should be able to achieve small regret, by breaking the sequence of length $T$ to $T/d$ blocks of size $d$, and treating each block as an $\epsilon$-repetitive sequence.

But what happens if the rewards on the reference arm are not $(d,\epsilon)$-repetitive? Then we can use a notion of {\em scales}. For $0 \le \ell < \log_d T$, the scale-$\ell$ version of a sequence of length $T$ is obtained by bunching together groups of $d^{\ell}$ consecutive rounds into one super-round, and making the reward of the super-round equal to the average of the rewards of the rounds it is composed of. The player in the hidden bandit setting may choose a random scale $\ell$, in hope that in this scale the resulting sequence of super rounds is $(d,\epsilon)$-repetitive.
It turns out this approach works. This is a consequence of the {\em local repetition lemma} that we state here informally.

\begin{repthm}[Local repetition lemma, informal statement]{lem:localrepetitive}
For every choice of integer $d \ge 2$ and $0 < \epsilon, \delta < 1$, if $T$ is sufficiently large (as a function of $d$, $\epsilon$ and $\delta$), then for every string in $\sigma \in [0,1]^T$, in almost all scales (say, a fraction of $1 - \delta$) the resulting sequence is almost $(d,\epsilon)$-repetitive (almost in the sense that only a $\delta$ fraction of the blocks fail to be $\epsilon$-repetitive).
\end{repthm}

We are not aware of a previous formulation of the local repetition lemma.
However, it has connections to results that are well known in other contexts. We briefly mention several such connections, without attempting to make them formal. The regularity lemma of Szemer\'{e}di asserts that every graph has some ``regular" structure. Likewise, the local repetition lemma asserts that every string has some ``regular" (in the sense of being nearly repetitive) structure. Our proof for the local repetition lemma follows standard techniques for proving the regularity lemma, though is easier (because strings are objects that are less complicated than graphs). An alternative proof for the local repetition lemma can go through martingale theory (e.g., through the use of martingale upper-crossing inequalities). The relation of our setting to that of martingales is that the sequence of values observed when going from a super round in the highest scale all the way down to a random round in smallest scale is a martingale sequence. Yet another related topic is Parseval's identity for the coefficients of Fourier transforms. It gives an upper bound on the sum of all Fourier coefficients, implying that most of them are small. This means that a random scale a sequence of values has small Fourier coefficients, and small Fourier coefficients correspond to not having much variability at this scale.

Our lower bound of \cref{thm:negative} is based on a construction that was used by~\cite{dekel2013bandits} for proving lower bounds on the regret for bandit settings with switching costs. The construction is a full binary tree with $T$ leaves that correspond to the rounds, in which each edge of the tree has a random reward, and the reward at a leaf is the sum of rewards along the root to leaf path. The reward on the decoy arm is identical to that of the reference arm, except for a constant offset, which on the one hand should not be too large so that the player cannot tell when he is switching between arms, and on the other hand should not be too small as it determines the regret. In the context of~\cite{dekel2013bandits}, such a construction results in a regret of $\Omega(T^{2/3}/\log{T})$. In our context, a similar construction gives a much higher, almost linear lower bound. 
We remark that our modification of this randomized construction 
share similarities with a construction used by \cite{dwork2010differential} to obtain positive results in a different context, that of {\em differential privacy}. (The inability of the player to distinguish between the reference arm and the decoy arm is analogous to keeping the value of an offset ``differentially private".)

The upper bound in \cref{thm:semimarkov} is based on a simple randomized algorithm that in every round asks for a switch with probability that is exponential in the negative of the reward of that particular round. The proof that this algorithm has low regret (when the adversary is consistent) is based on showing that the expected fraction of rounds spent on the decoy arm is exponential in the (negative) offset of the decoy arm compared to the reference arm.  

The lower bound in \cref{thm:semimarkov} (against semi-Markovian algorithms) is based on the adversary choosing at random a fixed reward on the reference arm and a fixed smaller reward on the decoy arm. Natural distributions for choosing these two rewards only lead to a regret that behaves roughly like $\Omega(T/\log^{3/2}{T})$. To get the matching lower bound of $\Omega(T/\log T)$ we use a distribution similar to the distribution of queries that was used in work of~\cite{raskhodnikova1999monotonicity} on monotonicity testing with a small number of queries.

\subsection{Extensions of Our Upper Bound} 
\label{sec:extensions}

We discuss a few simple extensions of the basic model presented in \cref{sec:model}. 

\paragraph{Time-dependent policies.}
In our stateful policies model, reference policies were assumed to be time independent. We may also consider a model in which reference policies can be time dependent (the functions $f^{\pi},g^{\pi}$ have an additional input which is the round number). Our lower bound (\cref{cor:negative}) is proved with respect to time independent reference policies, and hence holds without change when reference policies can be time dependent. Our upper bound (\cref{cor:positive}) also holds without change when reference policies are time dependent---nothing in the proof of \cref{cor:positive} requires time independence.

\paragraph{Randomized policies.}
In our stateful policies model, reference policies were assumed to be deterministic. We may also consider a model in which reference policies can be randomized (the functions $f^{\pi},g^{\pi}$ have access to random coin tosses).  Our lower bound (\cref{cor:negative}) is proved with respect to deterministic reference policies, and hence holds without change when reference policies can be randomized. For the upper bound, there are two natural ways of evaluating the regret. One, less demanding, is against the expected total reward of the reference policy with highest total expected reward. The other, more demanding, is against the expectation of the realized maximum of the total rewards of the reference policies. (That is, one runs each one of the reference policies using independent randomness, and observes which policy achieves the highest reward.) Our upper bound (\cref{cor:positive}) extends to randomized reference policies, even under the more demanding interpretation---one simply fixes for each reference policy all its random coin tosses in advance, thus making it deterministic, and then \cref{cor:positive} applies with no change.

\paragraph{Stateful and reactive adversaries.}
One of the motivations of the current study was to consider also stateful adversaries, and not just stateful policies. For a stateful adversary, the reward at a given round can depend not only on the action taken by the player, but also on the entire history of the game up to that round (via some state variable that the adversary keeps and updates after every round). In general, it is hopeless to attain sublinear regret in such settings (for example, the action taken in the first round might determine the rewards in all future rounds, and then one mistake by the player already gives linear regret). However, our positive results do extend to a certain class of stateful adversaries, for which the reward received at any round is a function of the actions of the player on that and the $\ell$ previous rounds (for some fixed $\ell$).
We refer to this class as {\em reactive adversaries}, in analogy to our notion of reactive policy, though it has been studied in the literature under the names ``loss functions with memory'' \citep{merhav2002sequential} and ``bounded memory adaptive adversary'' \citep{arora2012online}.
See \cref{sec:reactive} for more details.

\section{Proofs}

\subsection{The Local Repetition Lemma} \label{sec:localrepeat}

In this section we formulate and prove the local repetition lemma, which is a key lemma for the proof of Theorem~\ref{thm:positive}. As this lemma may have other applications, we use a generic terminology that is not specific to our sequential decision models.
In the notation of the local repetition lemma, a sequence will be referred to as a string, its length will typically be denoted by $n$ (rather than $T$), and the entries of the string (which will still have values in $[0,1]$) will be referred to as characters rather than rewards. 
Hence, strings are concatenation of individual characters, where the value of a character is a real number in the range $[0,1]$. 
However, it will be convenient for us to sometimes view a string as a concatenation of substrings.
Namely, each entry of the string might be a string by itself, and the whole string is a concatenation of these substrings. We may apply this view recursively, namely, the entries of each substring might also be substrings rather than individual characters. The notation that we introduce below is flexible enough to encompass this view.

For arbitrary $n$, given a string $s \in [0,1]^n$, $x_s$ denotes its average value. Using $s(i)$ to denote the $i$th entry of $s$, and using $x_{s(i)}$ to denote the value of this entry, we thus have $x_s = \frac{1}{n}\sum_{i=1}^n x_{s(i)}$. This notation naturally extends to the case that $s$ is not a string of characters, but rather a string of $n$ substrings, in which each substring $s(i)$ is by itself a string of $m$ characters (same $m$ for every $1 \le i \le n$). In this case, $x_{s(i)}$ is the average value of string $s(i)$, and the expression $\frac{1}{n}\sum_{i=1}^n x_{s(i)}$ still correctly computes $x_s$.

As a rule, whenever we view a string as being composed of substrings, all these substrings will be of exactly the same length.

\begin{definition}[repetitive string]
\label{def:repetitive}
Let $n$ be a multiple of $d$. Consider a string $s \in [0,1]^n$, viewed as a concatenation of $d$ substrings, $s(1), \ldots, s(d)$, each in $[0,1]^{n/d}$. Given $\epsilon > 0$, we say that $s$ is {\em $(d,\epsilon)$-repetitive} if for every $i$ we have $|x_s - x_{s(i)}| \le \epsilon$.
\end{definition}

A key aspect of our approach is that we shall typically not consider the string as a whole, but rather consider only a {\em local} portion of the string, namely, a substring. Moreover, the size of the local portion depends on the level of resolution at which we wish to view the string. Consequently, we endow the string with a probability distribution over its substrings, as in \cref{def:sampling}.

\begin{definition}[$d$-sampling]
\label{def:sampling}
Let $n$ be a power of $d$, say $n = d^k$. A {\em $d$-sampling} of a string $s \in [0,1]^n$ proceeds as follows. First a value $\ell$ (for {\em level}) is chosen uniformly at random from $\set{0,\ldots,k-1}$. Then $s$ is partitioned into $d^{\ell}$ consecutive substrings, each of length $d^{k-\ell}$. Thereafter, one of these substrings is chosen uniformly at random, and declared the result of the sampling.
\end{definition}

The result of $d$-sampling is always a string whose length is divisible by $d$, and hence compatible in terms of length with the requirements of \cref{def:repetitive}.

\begin{remark*}
In \cref{def:sampling} we assume that $n$ is a power of $d$. We shall make similar simplifying assumptions throughout this section. However, our work easily extends to cases that $n$ is not a power of $d$. We explain how to do this in the context of $d$-sampling. Let $k$ be largest such that $d^k \le n$. With probability $d^{k}/n$ choose the prefix of length $d^k$ of $s$ and on it do $d$-sampling as in \cref{def:sampling}. With the remaining probability $1 - d^{k}/n$ choose the suffix of length $n-d^k$ of $s$, and recursively partition it into a prefix and suffix as above, applying \cref{def:sampling} only to the prefix. When the suffix becomes shorter than $d$, stop (this suffix can be discarded from $s$ without affecting our results).
\end{remark*}

We can now state the key definition for this section.

\begin{definition}[locally-repetitive string]
\label{def:localrepetitive}
Let $n$ be a power of $d$, and consider a string $s \in [0,1]^n$. Given $\epsilon, \delta > 0$, we say that $s$ is {\em $(d,\epsilon,\delta)$-locally-repetitive} if with probability at least $1 - \delta$, a random substring of $s$ sampled using $d$-sampling (as in \cref{def:sampling}) is $(d,\epsilon)$-repetitive (as in \cref{def:repetitive}).
\end{definition}

The main result of this section is the following.

\begin{lemma}[Local repetition lemma]
\label{lem:localrepetitive}
Let $d$ be a positive integer, and $\eps, \del > 0$. Then for every $n > d^k$ where $k = d/(4\eps^2 \del)$, every string $s \in [0,1]^n$ is $(d,\eps,\del)$-locally repetitive.
\end{lemma}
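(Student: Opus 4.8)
The plan is to run the standard ``energy increment'' argument behind the Szemer\'edi regularity lemma, adapted to the string setting, where the natural potential is the (weighted) second moment of block averages. First I would reduce to the case that $n$ is a power of $d$: the general case follows from the padding/recursion described in the remark after \cref{def:sampling}, and when $n = d^K$ the $d$-sampling has $K = \log_d n$ levels, which by hypothesis satisfies $K > k = d/(4\eps^2\del)$. It is convenient to organize the sampling as a complete $d$-ary tree of depth $K$ whose level-$\ell$ nodes ($0 \le \ell \le K$) are the $d^{\ell}$ consecutive substrings of length $d^{K-\ell}$ produced by the level-$\ell$ partition; the root is $s$ and the leaves are the individual characters. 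Each node $v$ carries its average value $x_v$, and since the $d$ children $w_1(v),\dots,w_d(v)$ of an internal node $v$ have equal length, $x_v = \tfrac1d\sum_j x_{w_j(v)}$. Observe that $d$-sampling returns a random internal node obtained by first drawing a uniform level $\ell\in\{0,\dots,K-1\}$ and then a uniform node among the $d^{\ell}$ nodes at level $\ell$, so a specific level-$\ell$ node is returned with probability $\tfrac1K\cdot\tfrac1{d^\ell}$; and $v$ is $(d,\eps)$-repetitive exactly when $|x_v - x_{w_j(v)}|\le\eps$ for all $j$.

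For each internal node $v$ I would define its discrepancy $\operatorname{disc}(v) = \tfrac1d\sum_{j=1}^d (x_{w_j(v)}-x_v)^2$, the variance of the children's averages around their mean $x_v$. The two facts to establish are: (a) if $v$ is \emph{not} $(d,\eps)$-repetitive then some child differs from $x_v$ by more than $\eps$, whence $\operatorname{disc}(v)\ge \eps^2/d$; and (b) the level potentials $\Phi_\ell := \tfrac1{d^\ell}\sum_{v\text{ at level }\ell} x_v^2$ obey the exact increment identity $\Phi_{\ell+1}-\Phi_\ell = \tfrac1{d^\ell}\sum_{v\text{ at level }\ell}\operatorname{disc}(v)$. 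Identity (b) is just the childwise variance decomposition: for a fixed parent $v$, $\tfrac1d\sum_j x_{w_j(v)}^2 - x_v^2 = \operatorname{disc}(v)$, and summing over the $d^{\ell}$ parents and dividing by $d^{\ell}$ gives the claim -- the point is that the weight $1/d^\ell$ in $\Phi_\ell$ matches the conditional sampling weight at level $\ell$, which is what makes the telescoping line up with the $d$-sampling distribution.

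Telescoping (b) over $\ell=0,\dots,K-1$ gives $\sum_{\ell=0}^{K-1}\tfrac1{d^\ell}\sum_{v\text{ at level }\ell}\operatorname{disc}(v) = \Phi_K-\Phi_0 = \tfrac1n\sum_{i=1}^n x_{s(i)}^2 - x_s^2$, which is the variance of a uniformly chosen character of $s$ and hence at most $\tfrac14$ since characters lie in $[0,1]$. Letting $B_\ell$ be the number of level-$\ell$ nodes that fail to be $(d,\eps)$-repetitive, fact (a) gives $\tfrac{\eps^2}{d}\cdot\tfrac{B_\ell}{d^\ell} \le \tfrac1{d^\ell}\sum_{v\text{ at level }\ell}\operatorname{disc}(v)$, so summing over $\ell$ yields $\sum_{\ell=0}^{K-1} B_\ell/d^\ell \le d/(4\eps^2)$. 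Therefore the probability that $d$-sampling returns a non-repetitive substring is $\tfrac1K\sum_{\ell=0}^{K-1} B_\ell/d^\ell \le \tfrac{d}{4\eps^2 K} < \tfrac{d}{4\eps^2 k} = \del$, the last step using $K=\log_d n > k$. This is precisely the statement that $s$ is $(d,\eps,\del)$-locally repetitive.

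I do not expect a genuine obstacle here; the proof is a short potential/telescoping computation. The steps that need care are bookkeeping rather than conceptual: aligning the per-level weight $1/d^\ell$ in the potential increments with the conditional distribution used by $d$-sampling so that the single quantity $\sum_\ell B_\ell/d^\ell$ governs the sampling error; pinning down the two constants (the $\tfrac14$ bound on the variance of a $[0,1]$-valued character, and the $1/d$ loss in $\operatorname{disc}(v)\ge \eps^2/d$, which comes from only one of the $d$ children being forced to be far from $x_v$); and discharging the non-power-of-$d$ case via the construction in the remark after \cref{def:sampling}. (As an aside, one may note that $(\Phi_\ell)$ is realized as the quadratic variation of the martingale $x_{P_0},x_{P_1},\dots$ along a uniformly random root-to-leaf path $P_0,\dots,P_K$, which is the martingale-theoretic reading of the same bound.)
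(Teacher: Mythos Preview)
Your argument is correct and essentially identical to the paper's own proof: your potential $\Phi_\ell$ is exactly the paper's ``variability'' $V_\ell$, your fact~(a) is its Proposition~7, your $\tfrac14$ bound is its Proposition~6, and your final inequality $\tfrac1K\sum_\ell B_\ell/d^\ell \le d/(4\eps^2 K) < \del$ matches its conclusion line for line. Your closing remark on the martingale reading also mirrors the alternative proof the paper gives in its appendix.
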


\begin{proof}
For simplicity, we shall assume that $1/\eps$ and $1/\del$ are integers.
Let $s$ be a string in $[0,1]^n$ with $n = d^k$. We say that a substring $v$ is {\em aligned} if its location in $s$ is such that it may be obtained as a result of $d$-sampling. Observe that if $v$ is aligned, then it is a concatenation of $d$ equal length strings $v(1), \ldots, v(d)$, each of which is aligned as well.
Recall that we refer to $\ell$ in \cref{def:sampling} as the {\em level}. We use the notation $v \in \ell$ to say that $v$ is aligned, and moreover, $v$ is in level $\ell$ with respect to $d$-sampling.

Define the {\em variability} of level $\ell$ to be:
$$
	\forall ~ 0 \le \ell \le k ~,
	\qquad
	V_{\ell} \eq
	\frac{1}{d^{\ell}}\sum_{v \,\in\, \ell} (x_v)^2 ~.
$$

\begin{proposition}
\label{pro:variability}
With the above definition, we have $V_{k} - V_{0} \le \frac{1}{4}$.
\end{proposition}

\begin{proof}
By definition, $V_{0} = (x_s)^2$. Observe that $V_{k}$ is maximized if all characters are 0/1, and moreover, the average of a character in level $k$ is exactly $x_s$. Hence, $V_{k} \le x_s$. The difference $V_{k} - V_{0} = x_s - (x_s)^2$ is maximized when $x_s = \frac{1}{2}$, giving a value of $\frac{1}{4}$.
\end{proof}

$V_{\ell}$ is monotonically nondecreasing with $\ell$, because for a given aligned string $v$ with substrings $v(1), \ldots, v(d)$, we have that $x_v = \frac{1}{d}\sum_{i=1}^d x_{v(i)}$, and the square of an average is never larger than the average of the squares.  For aligned strings $v$ that are not $(d,\epsilon)$-repetitive, the following proposition shows that there is a noticeable increase in variability in the next level.

\begin{proposition}
\label{pro:badvertex}
If $v$ is an aligned string that is not $(d,\epsilon)$-repetitive, then 
$$
	\frac{1}{d}\sum_{i=1}^d (x_{v(i)})^2 
	~>~ (x_v)^2 + \frac{\epsilon^2}{d} ~.
$$
\end{proposition}

\begin{proof}
If $v$ is not $(d,\epsilon)$-repetitive, then it has at least one substring $v(i)$, with $|x_v - x_{v(i)}| > \epsilon$. Hence $\sum_{i=1}^d (x_v - x_{v(i)})^2 > \epsilon^2$. By definition, $x_v = \frac{1}{d}\sum_{i=1}^d x_{v(i)}$.
Hence $\sum_{i=1}^d (x_v - x_{v(i)})^2 = \sum_{i=1}^d (x_{v(i)})^2  - d(x_v)^2$.
Putting these two facts together we get that $\sum_{i=1}^d (x_{v(i)})^2 > d(x_v)^2 + \epsilon^2$, implying the proposition.
\end{proof}

Let $\delta_{\ell}$ be the conditional probability that given that the $d$-sampling procedure sampled level $\ell$, the substring $v$ sampled is not $(d,\epsilon)$-repetitive. Hence $\delta = \frac{1}{k}\sum_{\ell=0}^{k-1} \delta_{\ell}$. Then applying \cref{pro:badvertex} level by level implies that 
$$
	V_{k} - V_{0}
	~>~ \frac{\epsilon^2}{d}\sum_{\ell=0}^{k-1} \delta_{\ell} 
	\eq \frac{k\delta \epsilon^2}{d} ~.
$$
Contrasting this with \cref{pro:variability} we obtain that $\frac{k\delta \epsilon^2}{d} < \frac{1}{4}$, implying that $\delta < \frac{d}{4\epsilon^2 k}$.
\end{proof}

In \cref{sec:doob} we provide an alternative proof for \cref{lem:localrepetitive}. Though that proof gives somewhat weaker bounds, we find it informative, as it shows the relation between \cref{lem:localrepetitive} and martingale theory.

\cref{lem:localrepetitive} is best possible in the following sense.

\begin{lemma}
\label{lem:notlocalrepetitive}
There is a universal constant $c > 0$ such that the following holds. Let $d$ be a positive integer, and $0 < \epsilon, \delta < \frac{1}{2}$. 
Then there exists a string $s \in [0,1]^n$,  where $n > d^k$ with $k = cd/(\epsilon^2 \delta)$, that is not $(d,\epsilon,\delta)$-locally repetitive.
\end{lemma}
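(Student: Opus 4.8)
The plan is to construct an explicit string realizing the bound, essentially by reverse-engineering the proof of \cref{lem:localrepetitive}: that proof showed $\delta < d/(4\eps^2 k)$ by exhibiting that each ``bad'' (non-$(d,\eps)$-repetitive) substring contributes at least $\eps^2/d$ to the variability increment $V_k - V_0 \le 1/4$. To make this nearly an equality, I want a string in which \emph{every} substring at \emph{every} level is exactly $(d,\eps)$-non-repetitive in the cheapest possible way, so that the total variability budget of $1/4$ is exhausted only after $\Theta(d/(\eps^2\del))$ levels, forcing a constant fraction of levels — hence by averaging a $\del$-fraction failure probability — to consist of non-repetitive substrings. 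Concretely, I would build $s$ recursively: fix the overall average near $\tfrac12$, and at each level split a substring $v$ with average $x_v$ into $d$ sub-substrings whose averages deviate from $x_v$ by exactly $\pm\eps$ (balanced so the mean is preserved, which is possible when $d$ is even; for odd $d$ use $d-1$ of them and leave one equal to $x_v$, absorbing only a constant factor into $c$). This makes each internal node non-$(d,\eps)$-repetitive while spending exactly $\eps^2(d-1)/d \approx \eps^2$ worth of $\tfrac1d\sum(x_{v(i)})^2 - (x_v)^2$ per level, per node.

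The key steps, in order: (1) Verify feasibility of the recursive construction — that the averages stay in $[0,1]$ for the required number of levels; since each step perturbs the average by $\pm\eps$, after $L$ levels the average can drift by up to $L\eps$, so I need $L\eps \le \tfrac12$, i.e. I can only carry out the ``every node is bad'' construction for about $1/(2\eps)$ levels before running out of room. This is the crux and I address it below. (2) Compute $V_k - V_0$ for the construction: each of the first $L$ levels contributes $\Theta(\eps^2)$ to the per-level variability increment (summed against the conditional bad-probabilities $\delta_\ell = 1$ for $\ell < L$), so $V_L - V_0 = \Theta(L\eps^2)$, and I must keep this $\le 1/4$, giving $L = \Theta(1/\eps^2)$. (3) Reconcile (1) and (2): the binding constraint is $L = \min(\Theta(1/\eps), \Theta(1/\eps^2)) = \Theta(1/\eps)$ from the drift constraint. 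To get all the way to $L = \Theta(1/(\eps^2\del))$ levels of badness I instead make only a $\del$-fraction of nodes at each level bad (or equivalently, interleave bad levels with ``copy'' levels), so that the per-level average drift is $\Theta(\eps\del)$ while the per-level variability cost is $\Theta(\eps^2\del)$; then the drift constraint permits $L = \Theta(1/(\eps\del))$ levels and the variability constraint permits $L = \Theta(1/(\eps^2\del))$ levels — and now the variability constraint binds, yielding the desired $k = cd/(\eps^2\del)$. (4) Finally, fill out the remaining $k - L$ levels (down to the character level) with constant-valued ``copy'' substrings, which are trivially $(d,\eps)$-repetitive and contribute nothing; then argue the overall failure probability of $d$-sampling is at least (number of bad levels)/$k \cdot (\text{bad fraction within a level}) = \Omega(\del)$, so $s$ is not $(d,\eps,\del)$-locally repetitive, after adjusting the constant $c$.

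The main obstacle I anticipate is exactly the tension in step (3): a naive ``make everything maximally non-repetitive'' construction only survives $O(1/\eps)$ levels because of average drift, which is a factor $1/(\eps\del)$ short of the target. The resolution — spreading the badness thinly, a $\del$-fraction of nodes (or rounds, or levels) at a time, so that drift grows like $\eps\del$ per level while variability still grows like $\eps^2\del$ per level — is the real content of the lemma and needs to be set up carefully so that a genuine $\Omega(\del)$ fraction of $d$-samples land on non-repetitive substrings simultaneously with the variability budget being respected. A clean way to organize this: partition the $k$ levels into $1/\del$ consecutive ``epochs'' of $\del k$ levels each; within each epoch designate a single level (or a thin set of nodes) to be bad and the rest copies, arranged so that across the whole string a $\Theta(\del)$ fraction of aligned substrings (weighted by $d$-sampling probability) are non-repetitive; the drift within each epoch is $O(\eps)$ and there are $1/\del$ epochs, giving total drift $O(\eps/\del)$ — wait, that overshoots, so one instead wants the bad nodes within an epoch to come in cancelling $\pm$ pairs so drift per epoch is $O(\eps)$ but does \emph{not} accumulate — the bookkeeping here is the delicate part, and I would handle it by making the perturbation signs at each bad level form a balanced pattern whose partial sums stay bounded (e.g. a Thue–Morse-like or simply block-alternating sign sequence). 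Modulo this sign-balancing, everything else is the routine variability accounting already carried out in the proof of \cref{lem:localrepetitive}, run in reverse.
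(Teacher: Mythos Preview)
Your plan has a genuine gap: it loses the factor of $d$ in the exponent, so the constant $c$ you would end up with depends on $d$ (roughly $c = \Theta(1/d)$), contradicting ``universal constant.'' The reason is that you perturb \emph{all} $d$ children of a node by $\pm\eps$. Recall that making a node non-$(d,\eps)$-repetitive requires only \emph{one} child's average to deviate by more than $\eps$; the paper's construction exploits this by perturbing exactly two children (one by $+\eta$, one by $-\eta$, with $\eta$ slightly larger than $\eps$) and leaving the remaining $d-2$ children equal to the parent. Every internal node is then non-$(d,\eps)$-repetitive, yet a random root-to-leaf path picks up a nonzero increment only with probability $2/d$ per level. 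After $k_0$ levels the number of nonzero increments is about $2k_0/d$, so the typical deviation is $\eta\sqrt{2k_0/d}$; keeping this below $1/2$ permits $k_0 = \Theta(d/\eps^2)$ levels in which almost all nodes are still non-repetitive. Setting the total number of levels to $k = k_0/(2\delta)$ then makes the $d$-sampling land in one of these top $k_0$ levels with probability $2\delta$, and conditioned on that the sampled substring is non-repetitive with probability at least $1/2$. This yields $k = \Theta(d/(\eps^2\delta))$ with a universal constant.

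Two further remarks. First, your drift analysis is worst-case ($L\eps$), which forces $L = O(1/\eps)$; but you do not need \emph{every} path to stay in $(0,1)$ --- nodes that hit $0$ or $1$ simply ``freeze'' and become repetitive, and it suffices that \emph{most} nodes in the top levels have not yet frozen. The correct analysis is probabilistic: a random path is a random walk, and its maximal partial sum is of order the square root of the number of nonzero steps. With your all-children perturbation this already improves $L$ to $\Theta(1/\eps^2)$, but that is still short of $\Theta(d/\eps^2)$. Second, your proposed remedies --- making only a $\delta$-fraction of nodes bad at each level, or sign-balancing across levels --- do not recover the missing factor of $d$ (you can check: with a $\delta$-fraction of bad nodes each still perturbing all $d$ children, a random path has $\sim\delta k$ nonzero increments and you get $k = O(1/(\eps^2\delta))$, not $O(d/(\eps^2\delta))$), and the sign-balancing idea conflates cancellation within a level with cancellation along a path. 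The sparsity of the perturbation --- two children rather than $d$ --- is the actual mechanism.
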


\begin{proof}
Again, we assume for simplicity that $1/\eps$ and $1/\del$ are integers.
Fix $d, k$ let $n = d^k$. Given $\epsilon > 0$, pick $\eta > \epsilon$ to be as small as possible, conditioned on $1/2\eta$ being an integer. Construct a string $s \in [0,1]^n$ in a top-down manner, by associating the $x_v$ variables with the possible choices of substrings $v$ in the $d$-sampling scheme. Start by setting $x_s = \frac{1}{2}$. Thereafter, for every substring $v$ for which $x_v$ is already determined do the following. If $v$ is a single character, nothing needs to be done. Else, $v$ represents a string whose length is a multiple of $d$. Let $v(0), \ldots , v(d-1)$ denote the $d$ substrings whose concatenation gives $v$. If either $x_v = 0$ or $x_v = 1$, then for every $i$, let $x_{v(i)} = x_v$. In this case $v$ is $(d, \epsilon)$-repetitive. However, in every other case, let $x_{v(0)} = x_v + \eta$, $x_{v(1)} = x_v - \eta$, and $x_{v(m)} = x_v$ for all $2 \le m \le d-1$. In this case $v$ is not $(d,\epsilon)$-repetitive.

The construction above maintains that $0 \le x_v \le 1$ for every $v$, and moreover, $x_v = \frac{1}{d}\sum_{i=1}^d x_{v(i)}$. Hence the $x_v$ variables indeed represent the true averages over the corresponding substrings of $s$.

For an individual character at location $i$ in the string $s$, its value is integer (0 or~1) if and only if when writing $i$ in base $d$ (namely, $i = a_{k-1}d^{k-1} + \ldots a_1 d + a_0$, with $0 \le a_j \le d-1$), there is a value $0 \le j \le k-1$ such that among the coefficients $a_j, \ldots , a_{k-1}$, there is a difference of at least $1/\eta$ between the number of those coefficients that are~0 and the number of those coefficients that are~1.

Let us now set $k = cd/\eta^2$, for some small universal constant $c > 0$ (independent of $d$ and $\eta$). It is not difficult to argue that in this case, only a small fraction of the characters of $s$ are integer (0 or~1). (For a random $i$, only $O(k/d)$ of its digits in base $d$ are 0/1, and for most sequences of $\pm 1$ of length $m$, there is no prefix whose sum exceeds $O(\sqrt{m})$ in absolute value.) Hence for this value of $k$, almost no $v$ is $(d,\epsilon)$-repetitive.

Finally, set $k = k_0/2\del$, with $k_0 = cd/\eta^2$. With probability $2\delta$ the $d$-sampling procedure will choose a level among the top $k_0$ levels, and then with probability at least $\frac{1}{2}$ the sampled string $v$ will not be $(d,\epsilon)$-repetitive.
\end{proof}

\subsection{Upper Bound for Hidden Bandits}
\label{sec:bandit-upper}

In this section we present an algorithm for the hidden bandit problem whose worst-case expected regret is sublinear.
Our algorithm exploits the fact that the reward sequence of the reference arm, whose values are set in an oblivious manner by the adversary, is $(d,\eps,\del)$-locally repetitive (see \cref{def:localrepetitive}) for appropriately chosen values of $d,\eps,\del$, as implied by the local repetition lemma. 
Hence, it would be instrumental to first consider the simpler case where the reference sequence is in fact $(d,\eps)$-repetitive (see \cref{def:repetitive}).

When the reference sequence is $(d,\eps)$-repetitive, we propose an algorithm, described in \cref{alg:onearm}, which is based on a simple first-explore-then-exploit strategy.
The algorithms begins with an exploration phase (Phase I), where it tries to hit the reference arm at least once and obtain an estimate of its reward, which is almost constant at the appropriate scale.
Then, in the exploitation phase (Phase II), the algorithm repeatedly asks for a \textsf{switch} whenever the observed rewards drops below the top estimated rewards obtained in Phase I.
Eventually, since the reference arm is $(d,\eps)$-repetitive, the algorithm should stabilize on that arm no matter what the rewards on the decoy arm are. 

\begin{algorithm}
\begin{framed}
\paragraph{Algorithm for $(d,\eps)$-repetitive reference sequence } (parameters: $d, \eps, p, T$) 
\begin{itemize}[leftmargin=\algindent]
\item
let $m = (1/p) \log(1/\eps)$
\item
\textbf{Phase I:} for $i=1,\ldots,m$: \textsf{stay} on chosen arm for $T/d$ rounds and let $\bar{r}_{i}$ be the average of the observed rewards, then \textsf{switch} once
\item
sort the averages $\bar{r}_{1},\ldots,\bar{r}_{m}$ in descending order to obtain $\bar{r}_{1} \ge\ldots \ge \bar{r}_{m}$
\item
\textbf{Phase II:} initialize $i = 1$, $s=0$ and repeat (until $T$ rounds have elapsed):
\begin{itemize}[leftmargin=\algindent]
\item
\textsf{stay} for $T/d$ rounds and let $\bar{r}$ be the average of the observed rewards
\item
if $\bar{r} < \bar{r}_{i} - 2\eps$, \textsf{switch} once and update $s \gets s+1$
\item
if $s \ge m$, update $i \gets i+1$ and reset $s = 0$
\end{itemize}
\end{itemize}
\vspace{-2ex}
\end{framed}
\caption{An algorithm for $(d,\eps)$-repetitive reference sequences.} \label{alg:onearm}
\end{algorithm}

The following lemma shows that for small values of $\eps$, if $d$ is large enough as a function of $\eps$ then the expected regret of \cref{alg:onearm} is not large.

\begin{lemma} \label{lem:onearm}
Assume that the reward sequence of the reference arm is $(d,\eps)$-repetitive, with $d \ge (1/p^{2} \eps) \log^{2}(1/\eps)$.
Then the expected regret of \cref{alg:onearm} is at most $8\eps T$.
\end{lemma}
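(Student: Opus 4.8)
The plan is to bound the regret separately over the exploration phase (Phase~I) and the exploitation phase (Phase~II), exploiting the fact that, since the reference sequence is $(d,\eps)$-repetitive, the average of the reference arm over \emph{any} aligned block of $T/d$ consecutive rounds lies in $[\rho-\eps,\rho+\eps]$, where $\rho \eqdef \frac1T\sum_{t=1}^T r_t(0)$ is the per-round reference reward. I take the block boundaries to be the rounds $T/d,2T/d,\ldots$, so that every ``\textsf{stay} for $T/d$ rounds'' window the algorithm ever uses is aligned with this partition; if one insists that a \textsf{switch} action consumes a round, the resulting misalignment is by at most $O(d)$ rounds out of $T/d$, which perturbs a block average by only $O(d^2/T)\ll\eps$ for $T$ large and is absorbed into the constants.

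First, Phase~I is cheap: it spans $m\cdot(T/d)$ rounds, and by the hypotheses $m=(1/p)\log(1/\eps)$ and $d\ge(1/p^2\eps)\log^2(1/\eps)$ one has $m/d\le p\eps\le\eps$, so Phase~I contributes at most $\eps T$ to the regret regardless of what happens in it. Next I would isolate the ``good event'' $E$ that at least one of the $m$ exploration blocks is spent entirely on the reference arm. Since after each \textsf{switch} the player reaches arm~0 with probability~$1$ (from arm~0) or probability~$p$ (from arm~1), a short computation with the two-state chain gives $\Pr[E^{c}]\le(1-p)^{m-1}\le e^{-p(m-1)}=O(\eps)$; on $E^{c}$ I bound the regret trivially by $T$, contributing $O(\eps T)$ in expectation. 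On $E$, at least one of the sorted averages $\bar r_1\ge\cdots\ge\bar r_m$ lies in $[\rho-\eps,\rho+\eps]$, hence the threshold $\bar r_{i^\st}$, where $i^\st$ is the smallest index with $\bar r_{i^\st}\le\rho+\eps$, satisfies $\bar r_{i^\st}\in[\rho-\eps,\rho+\eps]$.

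The heart of the argument is a ``capture'' property of stage $i^\st$ (and of every stage whose threshold is at most $\rho+\eps$): whenever the algorithm is on the reference arm during such a stage, the observed block average is $\ge\rho-\eps\ge\bar r_{i^\st}-2\eps$, so the switch condition $\bar r<\bar r_{i^\st}-2\eps$ fails and the algorithm \textsf{stay}s -- once it is on the reference arm in such a stage it remains there for the rest of the game with zero further regret. It then remains to bound the regret accumulated before that point, split into three contributions. (i) \emph{Time in the ``too high'' stages $i<i^\st$:} each such stage ends after at most $m$ switch-triggering iterations, so these stages span at most $m^{2}\cdot(T/d)\le\eps T$ rounds, using $m^{2}/d\le\eps$. (ii) \emph{Decoy-arm iterations on which the algorithm \textsf{stay}s} (because the decoy reward is high, $\bar r\ge\bar r_i-2\eps$ for a current threshold $\bar r_i\ge\rho-\eps$): the reference average over the same aligned block is at most $\rho+\eps$, so each such iteration contributes at most $(T/d)(\rho+3\eps-\bar r_i)\le 4\eps(T/d)$ to the regret, and there are at most $d$ iterations in all, for a total of at most $4\eps T$. (iii) \emph{Decoy-arm iterations on which the algorithm \textsf{switch}es}, i.e.\ excursions on the decoy arm before it manages to return to the reference arm: in any stage with threshold $\le\rho+\eps$ every such iteration independently returns the player to arm~0 with probability~$p$, so the expected number of them before ``capture'' is $O(1/p)$, contributing $O(T/(pd))\le\eps T$ in expectation; moreover the algorithm passes beyond stage $i^\st+\ell$ only if $\ell$ successive stages each see $m$ consecutive failed returns, an event of probability at most $\eps^{\ell}$, so the rare ``overshoot'' past $i^\st$ contributes $O(\eps)T$ as well. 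Summing, the expected regret is $O(\eps T)$, and tracking the constants carefully yields the stated bound $8\eps T$.

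The step I expect to be the main obstacle is (iii) together with the coupling between successive stages: one must argue simultaneously that a too-high threshold can never keep the algorithm \emph{permanently} off the reference arm, that the number of decoy-arm excursions within a stage is geometrically controlled by $p$, and that the algorithm essentially never overshoots far past the correct threshold $i^\st$ -- all while the adversary sets the decoy rewards adaptively. The remaining ingredients (the Phase~I length bound, the good-event probability, and the per-iteration regret estimates) are routine once the block-alignment convention is fixed.
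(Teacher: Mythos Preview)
Your approach is essentially the paper's --- decompose Phase~II into ``non-switch'' iterations (small per-block regret because the observed average must be within $3\eps$ of $\rho$) and ``switch'' iterations (few in number), and use the capture property at a well-chosen stage --- but your write-up has one genuine slip and one unnecessary complication.

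The slip is in (i). The claim ``these stages span at most $m^{2}\cdot(T/d)$ rounds'' is false: a stage $i<i^{\st}$ can last arbitrarily long, because the adaptive decoy can keep feeding averages $\bar r\ge\bar r_i-2\eps$ and prevent switches. What is bounded by $m^{2}$ is only the number of \emph{switch-triggering} iterations across stages $i<i^{\st}$; the remaining (non-switch) decoy iterations in those stages are the ones your item (ii) handles. So (i) as stated does not follow from its premise; once you restrict it to switch-triggering iterations the argument goes through, and the decomposition (i)+(ii)+(iii) is then complete.

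The complication is your choice of $i^{\st}$ and the ensuing overshoot analysis. The paper avoids this entirely by taking $j$ to be the sorted index of \emph{some exploration block that was actually spent on the reference arm} (which exists on your event $E$); then $\bar r_j\in[\rho-\eps,\rho+\eps]$ holds automatically. One then conditions on a second good event --- that in stage $j$ the reference arm is hit within the $m$ allotted switches, probability $\ge 1-\eps$ --- and on both good events the algorithm never leaves stage $j$, so there is no overshoot to analyze. The two failure events together have probability $\le 2\eps$ and are charged $T$ each, giving the $8\eps T$ bound directly. Your geometric/overshoot accounting in (iii) reaches the same conclusion but is more work, and you also have to be careful that (ii)'s per-block bound $4\eps(T/d)$ requires $\bar r_i\ge\rho-\eps$, which can fail in overshoot stages; the paper's conditioning sidesteps this issue.
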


\begin{proof}
Let $v$ denote the average reward (over all rounds $t=1,2,\ldots,T$) of the reference arm.
Notice that the probability of not visiting this arm in the first phase of the algorithm is   no more than
$
	(1-p)^{m} \le e^{-p m} = \eps .
$
Hence, with probability at least $1-\eps$, the first phase of the algorithm samples the reference arm at least once, so that there exists some $j \in [m]$ for which $\bar{r}_{j} \in [v-\eps, v+\eps]$ as the reward sequence of the reference arm is assumed to be $(d,\eps)$-repetitive.
The total regret incurred in this phase is bounded by $m \cdot T/d$.

Next, assume that indeed $\bar{r}_{j} \in [v-\eps, v+\eps]$ for some $j \in [m]$ and consider the second phase of the algorithm.
Notice that once $i = j$ and the reference arm is selected, the algorithm stops switching and stays on that arm until the game ends.
This is true because the reference arm is $(d,\eps)$-repetitive, so each average reward $\bar{r}$ encountered when this arm is selected exceeds $v - \eps \ge \bar{r}_{j} - 2\eps$.

Let us bound the regret incurred in the second phase of the algorithm until this event occurs (if at all).
To this end, consider iterations with $i \le j$.
On any such iteration in which $\bar{r} \ge \bar{r}_{i} - 2\eps$, we have $\bar{r} \ge \bar{r}_{j} - 2\eps \ge v-3\eps$ so that the incurred regret is at most $4\eps \cdot T/d$. 
The number of iterations that fail to satisfy this condition is equal to the number of \textsf{switch} actions issued by the algorithm.
The number of \textsf{switch} actions in iterations with $i < j$ is no more than $(j-1) \cdot m$,
and once $i = j$, the algorithm hits the reference arm after at most $m$ \textsf{switch} actions with probability $1-\eps$ (and subsequently stops switching).
Thus, with probability at least $1-\eps$ the total number of \textsf{switch} actions is bounded by $(j-1) \cdot m + m \le m^{2}$.
In this case, the total regret incurred in the second phase is at most $4\eps T + m^{2} \cdot T/d$. 
Overall, the total regret in both phases is then bounded by
\begin{align*}
	4\eps T + \frac{T}{d} \cdot m^{2} + \frac{T}{d} \cdot m
	\leq 4\eps T + \frac{2T}{d} \cdot m^{2}
	\leq 4\eps T + \frac{2T}{d} \cdot \frac{1}{p^{2}} \log^{2} \frac{1}{\eps}
	\leq 6\eps T ~,
\end{align*}
where we have used our assumption that $d \ge (1/p^{2} \eps) \log^{2}(1/\eps)$.

On the other hand, if one of the phases fail then the total regret might be as large as $T$, but this happens with probability at most $2\eps$.
Hence, the expected regret of the algorithm is at most $8\eps T$.
\end{proof}

Our general algorithm, that works for any reference sequence, is described in \cref{alg:onearm2}.
The algorithm invokes \cref{alg:onearm} above as a subroutine on a randomly-chosen block size, exploiting the locally-repetitive structure guaranteed by the local repetition lemma.

\begin{algorithm}
\begin{framed}
\paragraph{Algorithm for hidden bandits}  (parameters: $p, T$)
\begin{itemize}[leftmargin=\algindent]
\item
set
$$
	\eps \eq 
	\frac{1}{\sqrt{p}} \cdot \frac{\log\log T}{\log^{1/4} {T} }
	~,\qquad
	d \eq 
	\frac{1}{p^{2}} \log^{2}\frac{1}{\eps}
$$
\item
choose block size $b=d^{i}$, where $i$ is chosen uniformly at random from $\set{1,\ldots,\floor{\log_{d}{T}}}$
\item
for $i=1,\ldots,T/b$: invoke \cref{alg:onearm} on a block of size $b$ with parameters $d,\eps,p,b$
\end{itemize}
\vspace{-2ex}
\end{framed}
\caption{An algorithm for the hidden bandit problem that guarantees sublinear expected regret.} \label{alg:onearm2}
\end{algorithm}

We are now ready to give the main result of this section, which gives an upper bound over the expected regret of \cref{alg:onearm2}.

\begin{repthm}[restated]{thm:positive}
The expected regret of \cref{alg:onearm2} is 
$$
	O \lr{ \frac{1}{\sqrt{p}} \cdot \frac{T \log\log{T}}{\log^{1/4}{T}} } ~.
$$
\end{repthm}

The proof uses the local repetition lemma, restated here for convenience.

\begin{repthm}[restated]{lem:localrepetitive}
Let $d$ be a positive integer and $\eps, \del > 0$. 
Then for every $n > d^k$ where $k = d/(4\epsilon^2 \delta)$, every string $s \in [0,1]^n$ is $(d,\epsilon,\delta)$-locally repetitive.
\end{repthm}

\begin{proof}[Proof of \cref{thm:positive}]
Set $d = (1/p^{2} \eps) \log^{2}(1/\eps)$, $\del = \eps$, $k = d / 4\eps^{3}$ in \cref{lem:localrepetitive}, which then states that any sequence of length at least
$
	T_{\eps} 
	= d^{k}
$ 
is $(d,\eps,\eps)$-locally repetitive. 
Notice that for $T \ge \Omega(1)$ and our choice of $\eps$ we have
\begin{align*}
	\log T_{\eps}
	&\eq \frac{1}{4 p^{2} \eps^{4}} \log^{2} \lr{ \frac{1}{\eps} } \cdot \log\lr{ \frac{1}{p^{2} \eps} \log^{2}\frac{1}{\eps} } \\
	&\leq \frac{1}{p^{2} \eps^{4}} \log^{4} \lr{ \frac{1}{p^{2}\eps^{4}} }
	\eq \frac{\log T}{\log^{4}(\log T)} \cdot \log^{4} \lr{ \frac{\log T}{\log^{4}(\log T)} } \\
	&\leq \log T ~,
\end{align*}
thus $T_{\eps} \le T$ which means that the reward sequence of the reference arm is $(d,\eps,\eps)$-locally repetitive.
Since $b$ was chosen uniformly at random, this means that each $b$-aligned block of size $b$ in this reward sequence is $(d,\eps)$-repetitive with probability at least $1-\eps$.

Now, consider a certain iteration of the algorithm. 
With probability $1-\eps$, the corresponding block in the reference reward sequence is $(d,\eps)$-repetitive with $d = (1/p^{2} \eps) \log^{2}(1/\eps)$.
Hence, according to \cref{lem:onearm}, following the strategy of \cref{alg:onearm} in this block yields an expected regret of $O(\eps b)$. 
Overall, the expected regret in all $T/b$ blocks is then $O(\eps T)$. 
Using the definition of $\eps$ concludes the proof.
\end{proof}

\subsection{Upper Bound for Stateful Policies}
\label{sec:policy-upper}

We now show how our algorithm for the hidden bandit setting can be applied, via a simple reduction, in the stateful policy model.
The resulting algorithm is presented in \cref{alg:policies}, that provides implementations of the \textsf{stay} and \textsf{switch} actions of the hidden bandit model.
The basic idea is to think of the best performing policy (in hindsight) within the set $\Pi$ as the reference arm, and let the decoy arm capture all other policies, as well as other action paths that do not correspond to any policy.

\begin{algorithm}
\begin{framed}
\paragraph{Algorithm for competing with stateful policies}  (parameters: $\Pi, T$)
\begin{itemize}[leftmargin=\algindent]
\item
choose a policy $\pi_{1} \in \Pi$ and a state $s_{1} \in [S]$ uniformly at random
\item
invoke \cref{alg:onearm2} with parameters $p = 1/kS$ and $T$,
and the following implementation of \textsf{stay} and \textsf{switch}:
\begin{itemize}[leftmargin=\algindent]
\item
\textsf{stay} on round $t$: 
play the action $f^{\pi_{t}}(s_{t})$, observe reward $r$, and update $\pi_{t+1} \gets \pi_{t}$ and $s_{t+1} \gets g^{\pi_{t}}(s_{t}, r)$
\item
\textsf{switch} on round $t$: 
play the action $f^{\pi_{t}}(s_{t})$, then choose a policy $\pi_{t+1} \in \Pi$ and a state $s_{t+1} \in [S]$ uniformly at random
\end{itemize}
\end{itemize}
\vspace{-2ex}
\end{framed}
\caption{An algorithm for competing with stateful policies.} \label{alg:policies}
\end{algorithm}

We now prove our main upper bound result, which provides a regret guarantee for \cref{alg:policies}.

\begin{repthm}[restated]{cor:positive}
For any reference set $\Pi$ of $k$ stateful policies over $S$ states, the expected regret of \cref{alg:policies} with respect to $\Pi$ is
$$
	O\lr{ \sqrt{kS} \cdot \frac{T \log\log{T} }{ \log^{1/4}{T} } } ~.
$$
\end{repthm}

\begin{proof}
Let $\pi^{*} \in \Pi$ be the best policy in the set $\Pi$, namely, the one having the highest total reward in hindsight.
For all $t=1,2,\ldots,T$, we let $s^{*}_{t} \in [S]$ denote the state visited by $\pi^{*}$ on round $t$ had it been followed from the beginning of the game.
Consider a hidden bandit problem where the reward sequence of the reference arm is the sequence obtained by following the policy $\pi^{*}$ throughout the game, and the arm being pulled on round $t$ is given by the random variable
$$
	\forall ~ t
	\qquad
	X_{t} \eq 
	\ind{ \pi_{t} \ne \pi^{*} \,\vee\, s_{t} \ne s^{*}_{t} } ~.
$$
The decoy arm models any situation where the algorithm deviates from the policy $\pi^{*}$, and each reward obtained on that arm is possibly a function of the entire history of the game, including even the random bits used by the player.
Since the model allows for the decoy arm to be completely arbitrary, we do not precisely specify the rewards associated with that arm. 
The claimed regret bound would then follow from \cref{thm:positive} once we verify that the implementations of the \textsf{stay} and \textsf{switch} actions are correct, namely:
\begin{enumerate}[(i)]
\item
if $X_{t} = 0$ (i.e., the algorithm is on the reference arm on round $t$), then choosing \textsf{stay} ensures that $X_{t+1} = 0$;
\item
if $X_{t} = 1$ and the algorithm chooses \textsf{switch} then $X_{t+1} = 0$ with probability at least $p = 1/kS$.
\end{enumerate}
Again, since the decoy arm may be completely adversarial, it is not crucial to verify the transitions directed towards it (in particular, the decoy arm might imitate the reference arm in response to a certain action of the algorithm).

To see (i), note that $X_{t} = 0$ implies $\pi_{t} = \pi^{*}$ and $s_{t} = s^{*}_{t}$.
In particular, the algorithm picks on round $t$ the same action played by $\pi^{*}$ on that round and observes the same reward.
Hence, if the algorithm chooses \textsf{stay} then the update $s_{t+1} \gets g^{\pi_{t}}(s_{t}, r)$ ensures that $s_{t+1} = s^{*}_{t+1}$, retaining the algorithm in the correct state on round $t+1$.
Next, if $X_{t} = 1$ which means that the algorithm is not on the reference arm on round $t$, then by choosing \textsf{switch} the random choice of $(\pi_{t+1}, s_{t+1})$ hits the configuration $(\pi^{*}, s^{*}_{t+1})$ with probability $p = 1/kS$. 
That is, with probability at least $1/nS$ the algorithm would be on the reference arm on round $t+1$, which proves (ii).
\end{proof}

\begin{remark*}
Following the same idea explained in the proof above, it is actually possible to obtain a slightly improved dependence on the number of policies $n$ and save a $n^{1/4}$ factor in the resulting bound, albeit with a more involved algorithm.
\end{remark*}

\subsection{Lower Bound for Hidden Bandits}
\label{sec:bandit-lower}

In this section we prove our lower bound for the hidden bandit problem with $p=1/2$ given in \cref{thm:negative}, which we restate here more formally. 

\begin{repthm}[restated]{thm:negative}
For any randomized player strategy in the hidden bandit model with $p=1/2$, there exists an oblivious sequence of reward functions $r_{1},\ldots,r_{T}$ that forces the player to incur an expected regret of $\Omega(T/\log^{3/2}{T})$ with respect to the reference arm.
\end{repthm}

In order to prove \cref{thm:negative} we make use of Yao's principle \citep{Yao77}, which in our context states that the expected regret of a randomized algorithm on the worst case reward sequence is no better than the expected regret of the optimal deterministic algorithm on any stochastic reward sequence. 
Hence, \cref{thm:negative} would follow once we establish the existence of a single sequence of stochastic reward functions, $\Gamma_{1:T}$, which is difficult for any deterministic algorithm of the player (in terms of expected regret).

Our construction of the required stochastic sequence $\Gamma_{1:T}$ is based on a variant of the Multi-scale Random Walk stochastic process of \cite{dekel2013bandits}.

\begin{definition}[Multi-scale Random Walk, \citealt{dekel2013bandits}]
Given a sequence $\xi_{1}, \ldots, \xi_{T}$ of i.i.d.~random variables, the \emph{Multi-scale Random Walk} (MRW) process $W_{0:T}$ is defined recursively by
\begin{align} \label{eq:rw}
	W_0 &\eq 0 ~, \non\\
	\forall~t \in [T] \qquad
	W_t &\eq W_{\rho(t)} + \xi_t ~, 
\end{align}
where
$$
	\rho(t) = t - 2^{\delta(t)} ~,\quad
	\delta(t) = \max\set{i \ge 0 : 2^i \text{ divides } t} ~.
$$
\end{definition}

Our construction, described in \cref{fig:lowerbound}, is similar to the one used by \cite{dekel2013bandits}, with one crucial difference: instead of using a Gaussian distribution for the step variables $\xi_{1:T}$, we employ a two-sided geometric distribution supported on integer multiples of $\eps$ (this is a discrete analog of the continuous Laplace distribution).
We then use the resulting MRW process $W_{1:T}$ to form a sequence of intermediate reward functions $\wt\Gamma_{1:T}$, where the reward of arm $x=0$ is consistently better than that of arm $x=1$ by a gap of $\eps$.
The actual reward functions $\Gamma_{1:T}$ are obtained from $\wt\Gamma_{t}$ by clipping the reward values to the $[0,1]$ interval.

\begin{figure}[t]
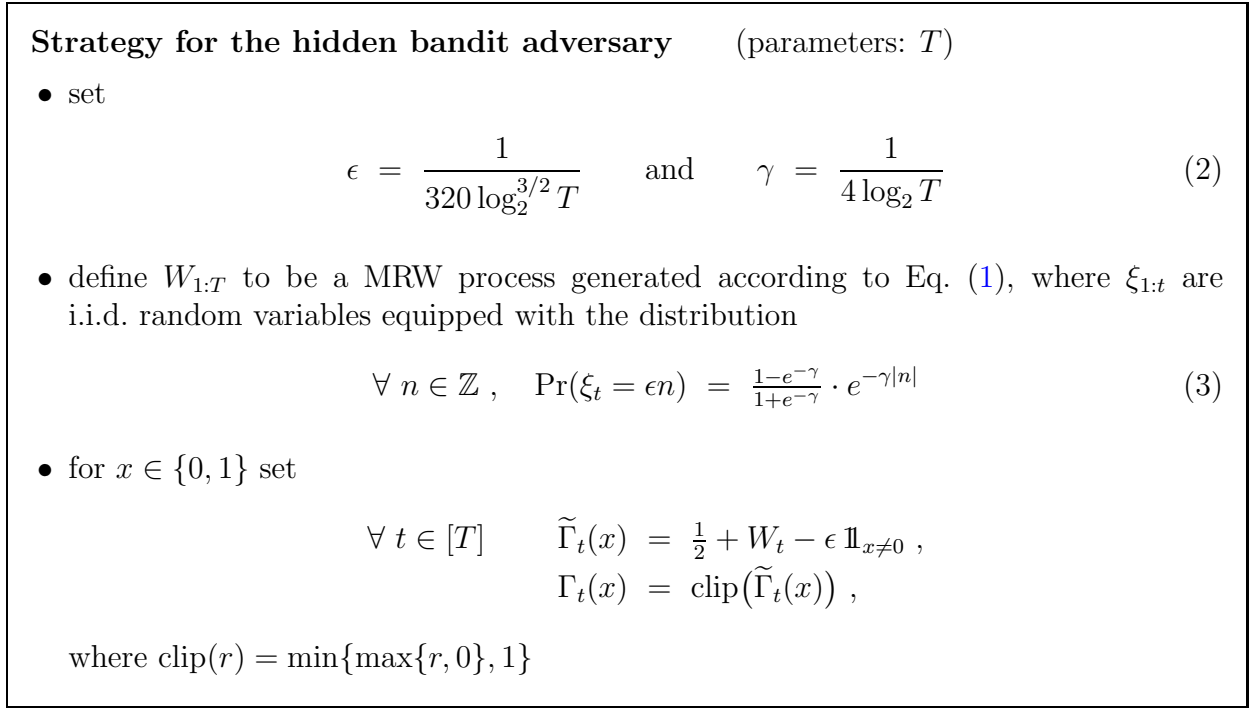

\begin{framed}
\paragraph{Strategy for the hidden bandit adversary} \quad (parameters: $T$)

\begin{itemize}[leftmargin=\algindent]
\item
set
\begin{align} \label{eq:eps-gam}
	\eps \eq \frac{1}{ 320 \log_{2}^{3/2}{T} }
	\qquad\text{and}\qquad
	\gam \eq \frac{1}{4\log_{2}{T}}
\end{align}
\item
define $W_{1:T}$ to be a MRW process generated according to \cref{eq:rw}, where $\xi_{1:t}$ are i.i.d.~random variables equipped with the distribution
\begin{align} \label{eq:xidist}
	\forall ~ n \in \mathbb{Z}~, 
	\quad
	\Pr(\xi_{t} = \eps n) 
	\eq \tfrac{1-e^{-\gam}}{1+e^{-\gam}} \cdot e^{-\gam \abs{n}}
\end{align}
\item
for $x \in \set{0,1}$ set
\begin{align*}
	\forall ~ t \in [T] ~
	\qquad
	\wt\Gamma_{t}(x) &\eq \tfrac{1}{2} + W_{t} - \eps \, \ind{ x \ne 0 } ~, \\
	\Gamma_{t}(x) &\eq \clip\lrbig{ \wt\Gamma_{t}(x) } ~,
\end{align*}
where $\clip(r) = \min\set{\max\set{r, 0},1}$
\end{itemize}
\vspace{-2ex}
\end{framed}
\caption{An oblivious strategy for the adversary that forces a regret of $\Omega(T/\log^{3/2} T)$ for any algorithm for the hidden bandit problem with $p=1/2$.} \label{fig:lowerbound}
\end{figure}

For this construction, we prove the following lower bound on the performance of any deterministic algorithm that immediately implies \cref{thm:negative}.

\begin{theorem} \label{thm:bandit-lower}
The expected regret of any deterministic player algorithm on the stochastic sequence of reward functions $\Gamma_{1:T}$ defined in \cref{fig:lowerbound} is at least $10^{-4} \cdot T/\log_{2}^{3/2}{T}.$
\end{theorem}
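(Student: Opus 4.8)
The plan is to adapt the multi-scale lower bound technique of \cite{dekel2013bandits} to the hidden bandit model and to the two-sided geometric step distribution. Having reduced (via Yao's principle) to bounding the expected regret of an arbitrary \emph{deterministic} player against the stochastic sequence $\Gamma_{1:T}$ of \cref{fig:lowerbound}, I would first relate the regret to the number of rounds spent on the decoy arm. Writing $r_t(0)=\clip(\tfrac12+W_t)$ and $r_t(1)=\clip(\tfrac12+W_t-\eps)$ and using that $\clip$ is monotone and $1$-Lipschitz, one has $\Gamma_t(0)-\Gamma_t(X_t)\ge 0$ always, with $\Gamma_t(0)-\Gamma_t(1)=\eps$ exactly when $W_t\in[\eps-\tfrac12,\tfrac12]$; hence
\[
\Reg_T \;=\; \sum_{t=1}^{T}\EE{\big(\Gamma_t(0)-\Gamma_t(1)\big)\ind{X_t\neq 0}} \;\ge\; \eps\,\EE{\#\{t:X_t\neq 0\}} \;-\; \eps\sum_{t=1}^{T}\Pr\!\big[|W_t|>\tfrac12-\eps\big].
\]

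\emph{Clipping is rare.} The map $\rho$ clears the lowest set bit of its argument, so the ancestor chain $t,\rho(t),\rho^2(t),\dots,0$ has length at most $\lfloor\log_2 T\rfloor+1$, and unrolling \cref{eq:rw} writes $W_t$ as a sum of at most that many i.i.d.\ copies of the step variable $\xi$. Since $\mathrm{Var}(\xi)=\eps^2\cdot\frac{2e^{-\gam}}{(1-e^{-\gam})^2}\le 8\eps^2/\gam^2$ for $\gam\le 1$, and the choices in \cref{eq:eps-gam} give $\eps^2/\gam^2=1/(6400\log_2 T)$, we get $\mathrm{Var}(W_t)\le(\lfloor\log_2 T\rfloor+1)\cdot 8/(6400\log_2 T)<\tfrac1{400}$ for all large $T$. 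Chebyshev's inequality then yields $\Pr[|W_t|>\tfrac12-\eps]\le\Pr[|W_t|\ge\tfrac14]\le 16\,\mathrm{Var}(W_t)<\tfrac1{25}$ (using $\eps<\tfrac14$), so the last sum above is at most $T/25$ and $\Reg_T \ge \eps\big(\EE{\#\{t:X_t\neq 0\}}-\tfrac1{25}T\big)$.

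\emph{The crux: the player cannot beat the information-free baseline.} It remains to show $\EE{\#\{t:X_t=0\}}\le\frac{p}{1+p}T+o(T)=\frac13 T+o(T)$ for $p=\tfrac12$; combined with the previous bound this gives $\Reg_T\ge\eps(\tfrac23 T-o(T)-\tfrac1{25}T)\ge\tfrac12\eps T=T/(640\log_2^{3/2}T)\ge 10^{-4}\,T/\log_2^{3/2}T$, as claimed. To prove it, compare the real game with the ``ghost'' game in which the player is always fed the reference reward $\bar R_t:=\clip(\tfrac12+W_t)$, independently of the (hidden) arm. In the ghost game the player's action sequence is a deterministic function of $W_{1:T}$ alone, hence independent of the initial arm and of the \textsf{switch} coins; since $(\frac p{1+p},\frac1{1+p})$ is stationary for both the \textsf{stay} and the \textsf{switch} kernels, the arm is $0$ with probability exactly $\frac p{1+p}$ on every round, so the ghost spends exactly $\frac p{1+p}T$ rounds on the reference arm in expectation. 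The real reward stream differs from the ghost stream only by a $-\eps$ offset on the rounds with $X_t=1$ (and clip inactive). Following \cite{dekel2013bandits}, this offset is controlled scale by scale using $W_t=W_{\rho(t)}+\xi_t$: conditioned on the transcript up to round $t$, the player's observation on round $t$ is $\clip\!\big(R_{\rho(t)}+\xi_t+\eps(\ind{X_{\rho(t)}=1}-\ind{X_t=1})\big)$, which carries information about the arm only when $X_{\rho(t)}\neq X_t$; because a one-unit ($\eps$) shift of the two-sided geometric law changes its mass function by a factor in $[e^{-\gam},e^{\gam}]$, the round-$t$ information gain is $O(\gam^2)$ (the analogue of the $O(\eps^2/\sigma^2)$ bound for Gaussian steps in \cite{dekel2013bandits}). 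Each round lies in at most $\lfloor\log_2 T\rfloor+1$ intervals $(\rho(s),s]$, one per level, and each arm change forces at least one decoy-arm round in the corresponding sojourn, so these gains add up over the $T$ rounds to only $o(T)$ for the choice $\gam=1/(4\log_2 T)$, which pins down the excess reference-arm occupancy at $o(T)$.

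\emph{Main obstacle.} The delicate point is exactly this last step: converting ``the offsets are statistically undetectable'' into a bound on the \emph{excess} expected time on the reference arm. The naive route---bounding $d_{\mathrm{TV}}$ between the real and ghost transcripts by $\sqrt{\mathrm{KL}}$---is too weak here, because the player can request \textsf{switch}es for free, the number of arm changes (hence the KL between the two transcripts) can be as large as $\Theta(T/\log T)$, and then $\sqrt{\mathrm{KL}}\gg 1$ carries no information. One must instead track the influence of the offsets on the player's arm distribution \emph{round by round}, using the multi-scale structure to show that the tiny per-round information gains never align into a $\Theta(1)$ distortion of the posterior $\Pr[X_t=0\mid\text{history}]$ away from the baseline $\frac p{1+p}$. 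This is precisely where the heavy-tailed geometric steps (rather than Gaussian) and the exact magnitudes of $\eps$ and $\gam$ in \cref{eq:eps-gam} enter.
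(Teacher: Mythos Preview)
Your clipping/boundedness step is correct and essentially identical to the paper's \cref{lem:boundness}. The genuine gap is in what you call ``the crux'': you never actually prove that $\EE{\#\{t:X_t=0\}}\le\frac{p}{1+p}T+o(T)$. You correctly observe that the naive TV--KL route fails because the player can switch $\Theta(T)$ times, and you state that one should ``track the influence of the offsets on the player's arm distribution round by round,'' but you do not carry this out. The sentence ``these gains add up over the $T$ rounds to only $o(T)$'' is an assertion, not an argument; in particular, you have not explained why the per-round information gains (which can number $\Theta(T)$, each of size $\Theta(\gam)$, since each round lies in $\Theta(\log T)$ cuts and each switch generates $\Theta(\log T)$ informative edges) cannot accumulate into an $\Omega(T)$ distortion of the expected occupancy. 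With $\gam=\Theta(1/\log T)$ the total KL between real and ghost transcripts can be $\Theta(T/\log T)$, and you have not shown how to beat the square-root loss of Pinsker here.

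The paper sidesteps this global difficulty by arguing \emph{epoch by epoch} rather than comparing to a ghost game. Let epoch $m$ be the stretch between the $(m-1)$st and $m$th \textsf{switch} requests, with arm $\chi_m$ and length $T_m$. The key observation (the paper's \cref{lem:info}) is that flipping the single bit $\chi_m$ affects the observation $Y_t=\wt\Gamma_t(X_t)$ only through the edges $(\rho(t),t)$ that straddle the epoch boundaries, i.e.\ those in $\cut(S_m)\cup\cut(S_{m+1})$; there are at most $2\,\width(\rho)\le 2(\lfloor\log_2 T\rfloor+1)$ of them, and each changes the conditional law of $\xi_t$ by a shift of $\pm\eps$, hence a likelihood-ratio factor in $[e^{-\gam},e^{\gam}]$. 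Together with the factor $\le 4$ coming from the Markov chain transitions into and out of epoch $m$, this gives $Q_m(A)\ge\frac{1}{4e}P_m(A)$ for every observable event $A$, \emph{uniformly in the epoch length}. Bayes' rule then yields $\Pr(\chi_m=1\mid T_m=t)\ge 1/12$ for every $t$, so every epoch contributes at least $\eps T_m/12$ to the unclipped regret, summing to $\eps T/12$. Note that the paper only establishes $\EE{\#\{t:X_t\ne 0\}}\ge T/12$, not the stronger $\ge\tfrac23 T-o(T)$ you were aiming for; the weaker bound is what the constant-factor likelihood-ratio lemma delivers, and it is all that is needed.
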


Before we begin with the analysis, we recall two key combinatorial properties of the MRW process that are essential to our analysis.
See \cite{dekel2013bandits} for more details and the formal proofs.

\begin{definition}[\emph{depth}]
Given a parent function $\rho$, the set of ancestors of
$t$ is denoted by~\,$\anc(t)$ and defined as the set of positive indices
that are encountered when $\rho$ is applied recursively to
$t$. Formally, $\anc(t)$ is defined recursively as
\begin{align}
	\anc(0) &\eq \{\} \non \\
	\forall~t \in [T]~~~~ 
	\anc(t) &\eq  \anc\big(\rho(t)\big)~\cup~\{\rho(t)\}~~. \label{eqn:ancestor}
\end{align}
The \emph{depth} of $\rho$ is then defined as $\depth(\rho) = \max_{t \in [T]} |\anc(t)|$.
\end{definition}

\begin{definition}[\emph{cut, width}]
Given a parent function $\rho$, define  
$$
	\cut(t) \eq \set{s \in [T] ~:~ \rho(s) < t \le s}~,
$$
the set of rounds that are separated from their parent by $t$.
The \emph{width} of $\rho$ is then defined  as $\width(\rho) = \max_{t \in [T]} |\cut(t)|$.
\end{definition}

\begin{lemma} \label{lem:mrw}
The depth and width of the MRW are both upper-bounded by $\floor{\log_2{T}}+1$.
\end{lemma}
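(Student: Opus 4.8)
The plan is to analyze the parent function $\rho(t) = t - 2^{\delta(t)}$ directly in terms of the binary representation of $t$, since $\delta(t)$ is exactly the position of the lowest set bit of $t$. First I would establish the depth bound. Observe that subtracting $2^{\delta(t)}$ from $t$ clears the lowest set bit of $t$: if $t = \sum_{j} b_j 2^j$ with lowest set bit at position $\delta(t)$, then $t - 2^{\delta(t)}$ has the same bits as $t$ at all positions $> \delta(t)$ and is $0$ at position $\delta(t)$ and below. Hence the map $t \mapsto \rho(t)$ strictly decreases the number of set bits (the Hamming weight) by exactly one, until we reach $0$. Since $t \le T$ has at most $\floor{\log_2 T} + 1$ bits, the recursion $t \to \rho(t) \to \rho(\rho(t)) \to \cdots$ reaches $0$ in at most $\floor{\log_2 T} + 1$ steps, so $|\anc(t)| \le \floor{\log_2 T} + 1$ and therefore $\depth(\rho) \le \floor{\log_2 T} + 1$.

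Next I would bound the width. Fix $t \in [T]$ and count $s$ with $\rho(s) < t \le s$, i.e., $s - 2^{\delta(s)} < t \le s$. I claim that for each value $i = \delta(s) \ge 0$ there is at most one such $s$: if $\delta(s) = i$ then $s$ is a multiple of $2^i$ but not of $2^{i+1}$, and the constraint $s - 2^i < t \le s$ forces $s$ to lie in the half-open interval $[t, t + 2^i)$ of length $2^i$; but within any interval of length $2^i$ there is exactly one multiple of $2^i$, and it automatically satisfies $s \ge t$, so there is at most one candidate $s$ with $\delta(s) = i$ (and it only counts if additionally $2^{i+1} \nmid s$). Since $\delta(s) \in \{0, 1, \ldots, \floor{\log_2 T}\}$ for $s \le T$, we get $|\cut(t)| \le \floor{\log_2 T} + 1$, hence $\width(\rho) \le \floor{\log_2 T} + 1$.

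I do not anticipate a serious obstacle here; this is essentially a bookkeeping argument about binary representations, and indeed the result is quoted from \cite{dekel2013bandits}. The one point requiring slight care is the width argument: one must make sure the interval in which $s$ is pinned has length exactly $2^i$ (not $2^i + 1$) so that it contains a unique multiple of $2^i$ — this works because the constraint $s - 2^{\delta(s)} < t$ is strict. An alternative, perhaps cleaner, route for both bounds is to note that $\anc(t)$ is precisely the set of ``prefixes'' of $t$ obtained by clearing trailing blocks of its binary expansion, and that $s \in \cut(t)$ iff $t$ is one of these prefixes of $s$; then both statements reduce to the fact that a number with at most $\floor{\log_2 T}+1$ bits has at most that many such prefixes, and is a prefix of at most that many numbers $\le T$ in the relevant sense. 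Either way, the bound $\floor{\log_2 T} + 1$ follows immediately.
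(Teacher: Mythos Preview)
Your argument is correct. The paper does not actually supply its own proof of this lemma; it simply cites \cite{dekel2013bandits} for the formal proofs (see the sentence immediately preceding the lemma). Your binary-representation analysis---clearing the lowest set bit for the depth bound, and pigeonholing on $\delta(s)$ for the width bound---is precisely the standard argument from that reference, so there is nothing to compare against here beyond noting that you have reproduced the intended proof.
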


We begin the analysis with some notation.
Fix some deterministic player strategy, that generates a sequence of random variables $X_{1:T}$ when faced with the random reward functions $\Gamma_{1:T}$, where $X_{t} \in \set{0,1}$ is the arm pulled on round $t$ of the game.
We let $Y_{t} = \wt\Gamma(X_{t})$ denote the unclipped reward encountered on round $t$ by the following the strategy (which is not directly observable to the player).

The strategy induces a partition of the rounds into \emph{epochs}, where epoch $m$ spans over rounds between the player's $m-1$ and $m$'th \textsf{switch} actions.
Let $\chi_{m} \in \set{0,1}$ denote the arm pulled throughout epoch $m$, and let $T_{m}$ denote the length of that epoch. 
We set $T_{m} = 0$ if the $m$'th epoch does not take place (that is, if the player makes less than $m-1$ \textsf{switch} actions throughout the game).
Without loss of generality, we may assume that there are exactly $T$ epochs corresponding to $m=1,2,\ldots,T$, some of which are of zero length.

Without loss of generality, we may assume that the assignment of arms to epochs is determined before the game begins.
Namely, a sequence of random variables $\chi_{1},\chi_{2},\ldots,\chi_{T}$ is drawn ahead of time from the distribution described by the Markov chain, where $\chi_{m} \in \set{0,1}$ is the index of the arm assigned to the player on the $m$'th epoch (some of these variables may eventually not be used).
Notice that, as we assume the player to be deterministic, the set of random variables $\xi_{1:T}$ and $\chi_{1:T}$ completely determine the outcome of the game.

A key step in our analysis is to show that, even if we allow the player to observe the entire sequence $Y_{1:T}$ directly, he is unable to detect (with sufficient confidence) an epoch during which the reference arm was pulled.
To this end, for each epoch $m \in [T]$ we define two conditional probability measures 
\begin{align*}
	P_{m}(\cdot) &\eq \Pr(~\cdot \mid \chi_{m} = 0) ~ \\
	Q_{m}(\cdot) &\eq \Pr(~\cdot \mid \chi_{m} = 1) ~,
\end{align*}
over the sigma algebra $\mathcal{F} = \sig(Y_{1:T})$ generated by the variables $Y_{1:T}$.
Our first lemma shows that for any event observable to the player (i.e., one that relies on the random variables the player receives as feedback) which is likely to occur assuming $\chi_{m} = 0$, is also likely to occur given $\chi_{m} = 1$.

\begin{lemma} \label{lem:info}
For all epochs $m$ and for any event $A \in \mathcal{F}$ it holds that
$
	Q_{m}(A) \ge \frac{1}{4 e} \cdot P_{m}(A) .
$
\end{lemma}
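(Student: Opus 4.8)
The plan is to bound the likelihood ratio $dP_m/dQ_m$ on the sigma-algebra $\mathcal{F} = \sig(Y_{1:T})$ pointwise (up to the bad event where clipping interferes), and then integrate. First I would observe that, conditioning on $\chi_m$ only changes which arm is pulled during epoch $m$; in every other epoch the arm is the same function of $\xi_{1:T}$ and $\chi_{1:T}$ under both measures (the player is deterministic, so the epoch structure and the arms $\chi_{m'}$ for $m' \ne m$ respond identically). Hence the unclipped rewards $Y_t = \wt\Gamma_t(X_t) = \tfrac12 + W_t - \eps\,\ind{X_t \ne 0}$ differ between the two conditionings only through the additive offset $-\eps$ that is present on exactly the $T_m$ rounds of epoch $m$ under $Q_m$ (where $X_t = 1$) and absent under $P_m$ (where $X_t = 0$). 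So conditionally on the epoch lengths and on $\chi_{m'}$, $m'\ne m$, the law of $Y_{1:T}$ under $P_m$ is that of $Y_{1:T}$ under $Q_m$ \emph{shifted} by $+\eps$ on the epoch-$m$ rounds, and the shift is realized by shifting a single step variable.

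The key point is that in the MRW, the offset applied over a contiguous block of rounds can be absorbed into \emph{one} increment $\xi_t$. More precisely, if epoch $m$ occupies rounds $\set{a, a+1, \ldots, b}$, then the event ``the player is in epoch $m$ exactly on $\set{a,\ldots,b}$'' is $\mathcal{F}$-measurable given the first $a-1$ observations, and switching $\chi_m$ from $1$ to $0$ has the effect of replacing $W_t \mapsto W_t + \eps$ for $t \in \set{a,\ldots,b}$ and leaving $W_t$ unchanged for $t \notin \set{a,\ldots,b}$ (here I would use the fact that $\cut$/$\anc$ structure of the MRW localizes the dependence — this is exactly what Lemma~\ref{lem:mrw} is for, bounding how many step variables need adjusting). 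Carefully, because of the tree structure of $\rho$, shifting a block by $\eps$ corresponds to shifting the step variables $\xi_t$ for $t$ in the ``boundary'' of the block, and by Lemma~\ref{lem:mrw} the number of such boundary variables is at most the width plus the depth, i.e. $O(\log T)$. Each such $\xi_t$ is shifted by $\pm\eps$, and the two-sided geometric density satisfies $\Pr(\xi = \eps n)/\Pr(\xi = \eps(n\pm 1)) = e^{\pm\gam} \in [e^{-\gam}, e^{\gam}]$. Therefore the likelihood ratio $dP_m/dQ_m$, \emph{on the event that no clipping occurred on the relevant rounds}, is bounded below by $e^{-c \gam \log_2 T}$ for an absolute constant $c$, and with $\gam = 1/(4\log_2 T)$ this is a universal constant; tuning the constants (the paper's $\gam$ is chosen precisely so that $\gam \cdot (\text{depth}+\text{width}) = O(1)$) gives the factor $\ge \tfrac{1}{e}$, and the remaining factor $\tfrac14$ absorbs the probability of the clipping bad event.

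Concretely, the steps in order: (1) write $Q_m(A) = \EE{\ind{A}}$ under the shifted step-variables and change variables to express it as an integral against $P_m$ with an explicit Radon–Nikodym factor that is a product of $O(\log T)$ ratios of the geometric density at shifted arguments; (2) invoke Lemma~\ref{lem:mrw} to cap the number of factors by $\depth(\rho) + \width(\rho) \le 2\floor{\log_2 T} + 2$; (3) bound each factor below by $e^{-\gam}$ and multiply, getting $e^{-\gam(2\log_2 T + 2)} \ge \tfrac{1}{e}$ for our choice of $\gam$; (4) handle clipping: the map $\wt\Gamma \mapsto \Gamma$ is deterministic, so on the rounds where $\wt\Gamma_t(X_t) \in [\eps, 1]$ under both arm-assignments no information is lost or gained, and one shows the probability that clipping is ever ``active'' on an epoch-$m$ round is at most $1 - \tfrac14 e \cdot (\text{something})$ — more carefully, one restricts $A$ to its intersection with the no-clip event, loses at most a constant factor, and the $\tfrac14$ in the statement is the slack left for this. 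I expect step (1)–(2), getting the block-shift to be implemented by only $O(\log T)$ coordinate shifts of $\xi$, to be the main obstacle: it requires unwinding the recursion $W_t = W_{\rho(t)} + \xi_t$ and arguing that adding $\eps$ to $W_t$ for all $t$ in a $\rho$-interval $[a,b]$ is equivalent to adding $\pm\eps$ to the $\xi$'s indexed by the edges crossing the boundary of $[a,b]$ in the MRW tree, of which there are at most $\width(\rho) + \depth(\rho)$. The rest is the density-ratio computation, which is routine once the combinatorial localization is in place.
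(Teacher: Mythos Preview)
Your core idea---absorbing the $\eps$-shift on epoch $m$ into shifts of the step variables $\xi_t$ for edges crossing the epoch boundary, and bounding the number of such edges via \cref{lem:mrw}---matches the paper's approach and is correct. But there is a genuine gap in where the factor $1/4$ comes from, and your step (4) is a red herring.

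First, clipping plays no role in this lemma: by definition $Y_t = \wt\Gamma_t(X_t)$ are the \emph{unclipped} rewards, and $\mathcal{F} = \sigma(Y_{1:T})$; the reduction from $\Gamma$ to $\wt\Gamma$ is done separately in \cref{lem:boundness}. So your step (4) is not needed and does not explain the $1/4$. Second, and more importantly, you assert that ``in every other epoch the arm $\chi_{m'}$ responds identically'' under $P_m$ and $Q_m$. The \emph{value} of $\chi_{m'}$ is indeed the same random variable, but its \emph{conditional distribution} given $\chi_m$ is not: the $\chi$'s form a Markov chain (see \cref{fig:switch}), so conditioning on $\chi_m=0$ versus $\chi_m=1$ alters the law of $\chi_{m-1}$ and $\chi_{m+1}$. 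This is exactly the missing factor: the paper computes
\[
\frac{P_m(\chi_{m-1}=x_{m-1})}{Q_m(\chi_{m-1}=x_{m-1})} \le 2,
\qquad
\frac{P_m(\chi_{m+1}=x_{m+1})}{Q_m(\chi_{m+1}=x_{m+1})} \le 2,
\]
via the transition probabilities and stationary distribution of the chain, and this contributes $2\log 2$ to the log-likelihood, yielding the $1/4$. Without this term your density-ratio calculation is incomplete---you have accounted only for the $e^{-1}$ factor coming from the $\xi$'s, not for the dependence of the arm sequence on $\chi_m$.

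A minor remark: the paper bounds the number of boundary edges by $|\cut(S_m)| + |\cut(S_{m+1})| \le 2\,\width(\rho)$, using only the width; your $\depth(\rho)+\width(\rho)$ gives the same order but is not the natural bound, since every edge with exactly one endpoint in $[S_m, S_{m+1})$ lies in one of the two cuts.
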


\begin{proof}
Fix some epoch $m \in [T]$. 
In order to prove the lemma, we bound the log-likelihood ratio of an arbitrary feasible realization $y_{1:T}$ of the variables $Y_{1:T}$ between the measures $P_{m}$ and $Q_{m}$.
To do that, we condition on the variables $\chi_{1},\ldots,\chi_{m-1}, \chi_{m+1}, \ldots,\chi_{T}$ corresponding to the arms pulled in other epochs.
Consider two realizations $x_{1:T}, x'_{1:T} \in \set{0,1}^{T}$ of the sequence $\chi_{1:T}$ that differ only by the value assigned to the $m$'th variable, with $x_{m} = 0$ and $x'_{m} = 1$, 
and define the measures
\begin{align*}
	P_{m}'(\cdot) &\eq P_{m}(~\cdot \mid \chi_{1:T} = x_{1:T}) ~, \\
	Q_{m}'(\cdot) &\eq Q_{m}(~\cdot \mid \chi_{1:T} = x'_{1:T}) ~.
\end{align*}
Then, we can write
\begin{align} \label{eq:loglike}
	\log\frac{P_{m}(Y_{1:T} = y_{1:T},\, \chi_{1:T}=x_{1:T})}
		{Q_{m}(Y_{1:T} = y_{1:T},\, \chi_{1:T}=x'_{1:T})}
	\eq \log\frac{ P_{m}(\chi_{1:T} = x_{1:T}) }{ Q_{m}(\chi_{1:T} = x'_{1:T}) }
		+ \log\frac{P'_{m}(Y_{1:T} = y_{1:T})}{Q'_{m}(Y_{1:T} = y_{1:T})} ~.
\end{align}
In order to bound the first term on the right-hand side, note that the Markov-chain dynamics of the \textsf{switch} action (recall \cref{fig:switch}) imply
\begin{align*}
	\frac{ P_{m}(\chi_{m+1} = x_{m+1}) }{ Q_{m}(\chi_{m+1} = x_{m+1}) }
	\eq \frac{ \Pr(\chi_{m+1} = x_{m+1} \mid \chi_{m} = 0) }
		{ \Pr(\chi_{m+1} = x_{m+1} \mid \chi_{m} = 1) }
	\leq \frac{1}{1/2}
	\eq 2 ~.
\end{align*}
Similarly, using Bayes' law,
\begin{align*}
	\frac{ P_{m}(\chi_{m-1} = x_{m-1}) }{ Q_{m}(\chi_{m-1} = x_{m-1}) }
	&\eq \frac{ \Pr(\chi_{m-1} = x_{m-1} \mid \chi_{m} = 0) }
		{ \Pr(\chi_{m-1} = x_{m-1} \mid \chi_{m} = 1) } \\
	&\eq \frac{ \Pr(\chi_{m} = 0 \mid \chi_{m-1} = x_{m-1}) }
		{ \Pr(\chi_{m} = 1 \mid \chi_{m-1} = x_{m-1}) } 
		\cdot \frac{ \Pr(\chi_{m} = 1) }{ \Pr(\chi_{m} = 0) } \\
	&\leq \frac{1/2}{1/2} \cdot \frac{2/3}{1/3}
	\eq 2 ~,
\end{align*}
where we have used the fact that the Markov chain is in its stationary distribution $(\frac{1}{3},\frac{2}{3})$.
Hence, we conclude
\begin{align} \label{eq:loglike1}
	\log\frac{ P_{m}(\chi_{1:T} = x_{1:T}) }{ Q_{m}(\chi_{1:T} = x'_{1:T}) }
	&\eq \log\frac{ P_{m}(\chi_{m-1} = x_{m-1}) }{ Q_{m}(\chi_{m-1} = x_{m-1}) }
		+ \log\frac{ P_{m}(\chi_{m+1} = x_{m+1}) }{ Q_{m}(\chi_{m+1} = x_{m+1}) } 
	\leq 2 \log{2} ~.
\end{align}

For bounding the second term on the right-hand side of \cref{eq:loglike}, we decompose it into a sum using the fact that $Y_{t}$ is conditionally independent of all $Y_{s}$ with $s \ne \rho(t)$ given $Y_{\rho(t)}$ under both $P_{m}'$ and $Q_{m}'$ (recall \cref{eq:rw}), as follows:
\begin{align*}
	\log\frac{P'_{m}(Y_{1:T} = y_{1:T})}{Q'_{m}(Y_{1:T} = y_{1:T})}
	\eq \sum_{t=1}^{T} \log\frac
		{P'_{m}(Y_{t}=y_t \mid Y_{\rho(t)}=y_{\rho(t)})}
		{Q'_{m}(Y_{t}=y_t \mid Y_{\rho(t)}=y_{\rho(t)})} ~.
\end{align*}
Here, for convenience, we define a fictitious deterministic reward $Y_{0} = y_{0} = \frac{1}{2}$.
Each term in the above sum corresponds to an edge in the dependency graph of the MRW process, formed by the function $\rho$.
Consider a particular term of the sum, that represents the edge $(\rho(t),t)$, and let $i$ and $j$ denote the epochs containing the end points $\rho(t)$ and $t$, respectively.
Notice that the conditional distribution of $Y_{t}$ given $Y_{\rho(t)}$ is determined only by the values of $\chi_{i}$ and $\chi_{j}$.
In particular, if $\chi_{i} = \chi_{j}$ then $Y_{t} = Y_{\rho(t)} + \xi_{t}$.
However, if $\chi_{i} = 0,~ \chi_{j} = 1$ then $Y_{t} = Y_{\rho(t)}+\xi_{t}-\eps$, and if $\chi_{i} = 1,~ \chi_{j} = 0$ then $Y_{t} = Y_{\rho(t)}+\xi_{t}+\eps$.
Hence, the log-likelihood term is zero unless $Y_{t} \mid Y_{\rho(t)}$ has different distributions under the measures $P_{m}'$ and $Q_{m}'$, which can happen only when either $i=m$ or $j=m$ (but not both) since the realizations $x_{1:T}, x'_{1:T}$ differ only in the value assigned to $\chi_{m}$.
In the latter case, the log-likelihood term is equal to the log-likelihood ratio between the distributions of $\xi_{t}$ and $\xi_{t} \pm \eps$ at some point in their (common) support, which can be at most $\gam$ as can be seen from \cref{eq:xidist}.

Now, notice that any edge $(\rho(t),t)$ for which either $i=m$ or $j=m$ is in $\cut(S_{m})$ or in $\cut(S_{m+1})$, where $S_{m}$ and $S_{m+1}$ denote the rounds on which epochs $m$ and $m+1$ begin.
Overall, we get
\begin{align*} 
	\log\frac{P'_{m}(Y_{1:T} = y_{1:T})}{Q'_{m}(Y_{1:T} = y_{1:T})}
	\leq \gam \cdot \abs{\cut(S_{m-1})} + \gam \cdot \abs{\cut(S_{m})}
	\leq 2\gam \, \width(\rho)
	\leq 1 ~,
\end{align*}
where we have used the fact that the width of the MRW process is upper bounded by $2\log_{2} T$ and our choice of $\gamma$ in \cref{eq:eps-gam}.
Plugging this and \cref{eq:loglike1} into \cref{eq:loglike} and exponentiating results with
$$
	P_{m}(Y_{1:T} = y_{1:T},\, \chi_{1:T}=x_{1:T})
	\geq \frac{1}{4 e} \cdot Q_{m}(Y_{1:T} = y_{1:T},\, \chi_{1:T}=x'_{1:T}) ~.
$$
Since this holds for any assignment of the variables $\chi_{i}$ ($i \ne m$), by marginalizing over these variables we obtain that 
$
	P_{m}(Y_{1:T} = y_{1:T})
	\ge \frac{1}{4 e} \cdot Q_{m}(Y_{1:T} = y_{1:T})
$
for any sequence $y_{1:T}$.
Finally, integrating this inequality over the event $A \in \mathcal{F}$ gives the lemma.
\end{proof}

We now turn back to analyzing the player's regret.
In order to lower-bound the expected regret of the player's action sequence $X_{1:T}$, it will be convenient for us to first analyze the regret of the \emph{same sequence} as measured by the unclipped reward functions $\wt\Gamma_{t}$, namely
\begin{align*}
	\wt R \eq \sum_{t=1}^{T} \wt\Gamma_{t}(0) - \sum_{t=1}^{T} \wt\Gamma_{t}(X_{t}) ~,
\end{align*}
and later deal with the effect of the clipping.
This quantity can be alternatively expressed as a simple function of the variables $T_{m}$ and $\chi_{m}$,
\begin{align*}
	\wt R \eq \sum_{m=1}^{T} \wt R_{m} ~,
	\qquad\text{where}\qquad
	\forall ~ m ~ \quad
	\wt R_{m} \eq \eps T_{m} \cdot \ind{ \chi_{m} \ne 0 } ~.
\end{align*}
Here, $\wt R_{m}$ is the regret incurred during epoch $m$ (in terms of the functions $\wt\Gamma_{1:T}$) and $\wt R$ is simply the total regret incurred in all epochs.
The following lemma relates the quantity $\E[\wt R]$ to the actual expected regret of the player.

\begin{lemma} \label{lem:boundness}
The player's expected regret can be lower-bounded as
$
	\E[R] \ge \E[\wt R] - \eps T/25 .
$
\end{lemma}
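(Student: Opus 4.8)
The plan is to compare $R = \sum_{t=1}^{T}\Gamma_{t}(0) - \sum_{t=1}^{T}\Gamma_{t}(X_{t})$ with $\wt R$ round by round, and to argue that the clipping never increases the player's realized regret and changes it only on rounds where the random walk $W_{t}$ has drifted far from its mean, which happens rarely. Writing $c(r) = \clip(r) - r$ for the clipping correction, so that $\Gamma_{t}(x) - \wt\Gamma_{t}(x) = c(\wt\Gamma_{t}(x))$ for $x\in\set{0,1}$, we get
$$ \wt R - R \eq \sum_{t=1}^{T} \Big( c(\wt\Gamma_{t}(X_{t})) - c(\wt\Gamma_{t}(0)) \Big), $$
in which every summand with $X_{t}=0$ vanishes. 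For a summand with $X_{t}=1$, set $a = \wt\Gamma_{t}(0) = \tfrac12 + W_{t}$, so $\wt\Gamma_{t}(1) = a - \eps$; a short case analysis of $c(a-\eps) - c(a)$ over the five intervals cut out by the breakpoints $0,\eps,1,1+\eps$ shows that $0 \le c(a-\eps) - c(a) \le \eps$ always, with strict positivity only when $a < \eps$ or $a \ge 1$, i.e.\ only when $|W_{t}| \ge \tfrac12 - \eps$ (a genuine restriction since $\eps < \tfrac12$). Hence $\wt R - R \le \eps \sum_{t=1}^{T} \ind{|W_{t}| \ge \frac12 - \eps}$ pointwise, so $\E[\wt R] - \E[R] \le \eps \sum_{t=1}^{T} \Pr(|W_{t}| \ge \tfrac12 - \eps)$, and it remains only to bound this probability uniformly in $t$ by a constant below $1/25$.

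For that step, since the step law \cref{eq:xidist} is symmetric we have $\E[W_{t}] = 0$, and Chebyshev's inequality gives $\Pr(|W_{t}| \ge \tfrac12 - \eps) \le \mathrm{Var}(W_{t})/(\tfrac12-\eps)^{2}$. Unrolling \cref{eq:rw}, $W_{t}$ is a sum of $|\anc(t)| \le \floor{\log_{2}T}+1$ i.i.d.\ copies of $\xi_{1}$ by \cref{lem:mrw}, so $\mathrm{Var}(W_{t}) \le (\log_{2}T + 1)\,\mathrm{Var}(\xi_{1})$, while a direct computation from \cref{eq:xidist} gives $\mathrm{Var}(\xi_{1}) = 2\eps^{2} e^{-\gam}/(1-e^{-\gam})^{2} \le 4\eps^{2}/\gam^{2}$ for $\gam \le \tfrac12$. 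Substituting $\eps$ and $\gam$ from \cref{eq:eps-gam} gives $\eps^{2}/\gam^{2} = 1/(6400\log_{2}T)$, so $\mathrm{Var}(W_{t}) \le 4(\log_{2}T+1)/(6400\log_{2}T) \le 1/800$ for $T \ge 2$; and since $\eps \le 1/320$ we have $\tfrac12 - \eps \ge \tfrac13$, whence $\Pr(|W_{t}| \ge \tfrac12 - \eps) \le (1/800)/(1/9) = 9/800 < 1/25$. Summing over $t$ yields $\E[\wt R] - \E[R] \le \eps T/25$, which is the lemma.

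The step I expect to demand the most care is the clipping case analysis in the first paragraph --- in particular, checking that the correction $c(\wt\Gamma_{t}(1)) - c(\wt\Gamma_{t}(0))$ is always nonnegative (so that $R \le \wt R$ and only an upper bound on $\wt R - R$ is needed) and that it vanishes outside $\{|W_{t}| \ge \tfrac12 - \eps\}$. Everything afterward is a routine second-moment estimate for the MRW using the depth bound of \cref{lem:mrw} together with the explicit parameter values. (No separate handling is needed for rounds lying in zero-length epochs, since they contribute nothing to either regret sum.)
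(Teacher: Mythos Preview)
Your proof is correct and follows essentially the same approach as the paper: bound $\wt R - R$ by $\eps$ times the number of rounds on which clipping is active, then control that number via Chebyshev on $W_t$ using the depth bound of \cref{lem:mrw}. Your clipping case analysis is more explicit than the paper's one-line justification (``the gap between the arms is only narrowed by the clipping''), and your variance estimate $\mathrm{Var}(\xi_1)=2\eps^2 e^{-\gam}/(1-e^{-\gam})^2\le 4\eps^2/\gam^2$ is the exact value and slightly sharper than the paper's $8\eps^2/\gam^2$, but these are refinements rather than a different route.
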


\begin{proof}
We first prove that for each $t \in [T]$, with probability at least $1-1/25$ both of the rewards $\wt\Gamma_{t}(0), \wt\Gamma_{t}(1)$ lie the interval $[0,1]$.
Then, the lemma would follow since this proves that in expectation there are at most $T/25$ rounds on which the reward functions $\Gamma_{t}(x)$ and $\wt\Gamma_{t}(x)$ do not coincide. 
On each of those rounds, the player's regret might decrease by at most $\eps$ since the gap between the arms is only narrowed by the clipping of the rewards.

To prove the above claim, fix some $t \in [T]$.
Since the depth of the process $W_{1:T}$ is bounded by $2\log_{2} T$ (see \cref{lem:mrw}), the random variable in $W_{t}$ is a sum of at most $2\log_{2}{T}$ variables of the sequence $\xi_{1:T}$.
For bounding the magnitude of each of the variables $\xi_{s}$, which are distributed according to \cref{eq:xidist}, let us first bound their variance. 
Noticing that $\abs{\xi_{s}}/\eps$ is a geometric random variable with parameter $p=1-e^{-\gamma}$ and using a standard bound of $2/p^{2}$ over the second moment of the geometric distribution, gives
\begin{align*}
	\mathrm{Var}(\xi_{s})
	\eq \E[\abs{\xi_{s}}^{2}]
	\leq \frac{2\eps^{2}}{(1-e^{-\gam})^{2}}
	\leq \frac{8\eps^{2}}{\gam^{2}} ~,
\end{align*}
where in the last inequality we have used the fact that $e^{-x} \le 1-x/2$ for $x \in [0,1]$.
Since the $\xi_{s}$'s are independent, this implies that
$
	\mathrm{Var}(W_{t}) 
	\le 16(\eps/\gam)^{2} \log_{2}{T} .
$
By Chebyshev's inequality we now obtain
$
	\Pr\lrbig{ W_{t} \ge 20 (\eps/\gam) \sqrt{\log_{2}{T}} }
	\le 1/25 ,
$
so that with probability at least $1-1/25$ we have
\begin{align*}
	W_{t} 
	~<~ \frac{20\eps}{\gam} \sqrt{\log_{2}{T}}
	\leq \frac{1}{4}
\end{align*}
for our choice of $\eps$ and $\gam$ stated in \cref{eq:eps-gam}.

Finally, recall that either $\wt\Gamma_{t}(x) = \half + W_{t}$ or $\wt\Gamma_{t}(x) = \half + W_{t} - \eps$, depending on whether $x=0$ or not.
In any case, we have $\wt\Gamma_{t}(x) \in [0,1]$ for all $x \in \set{0,1}$ with probability at least $1-1/25$, since $\eps < 1/4$.
\end{proof}

Next, we use \cref{lem:info} to show that in expectation, the regret $\wt R_{m}$ incurred on epoch $m$ grows linearly with the length of the epoch.

\begin{lemma} \label{lem:epoch-regret}
For each epoch $m$ we have $\E[\wt R_{m} \mid T_{m} = t] \ge \eps t/12$ for all $t$.
\end{lemma}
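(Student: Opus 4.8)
The plan is to condition on the epoch length $T_m = t$ and to lower-bound the conditional probability that epoch $m$ was spent on the decoy arm, i.e.\ that $\chi_m = 1$. Recall that $\wt R_m = \eps T_m \cdot \ind{\chi_m \ne 0}$, so $\E[\wt R_m \mid T_m = t] = \eps t \cdot \Pr(\chi_m = 1 \mid T_m = t)$, and it suffices to show $\Pr(\chi_m = 1 \mid T_m = t) \ge 1/12$ for every $t$ (in particular for $t \ge 1$; for $t = 0$ the claim is vacuous). The key point is that the event $\set{T_m = t}$ is measurable with respect to the player's feedback, because the player is deterministic: once we know the observed rewards, the player's actions---and hence the round on which the $m$'th \textsf{switch} occurred and the round on which the $(m{+}1)$'th did---are determined. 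More precisely, $\set{T_m = t}$ is (after appropriate conditioning on earlier epochs' lengths) an event in the sigma-algebra $\mathcal F = \sig(Y_{1:T})$, so \cref{lem:info} applies to it.

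First I would make precise that $\set{T_m = t}$ lies in $\mathcal F$: since the player is deterministic and the rewards he sees on epoch $m$ are exactly the $Y$-values along that epoch (the clipping does not change measurability, and in any case we are working with the unclipped $Y_t$ which determine the player's decisions as well, or we argue at the level of $\Gamma$-feedback---either way the event is feedback-measurable), the length of epoch $m$ is a function of $Y_{1:T}$. Then, with $A = \set{T_m = t} \in \mathcal F$, \cref{lem:info} gives $Q_m(A) \ge \tfrac{1}{4e} P_m(A)$, i.e.
$$
	\Pr(T_m = t \mid \chi_m = 1) \ge \tfrac{1}{4e}\,\Pr(T_m = t \mid \chi_m = 0) .
$$
Second, I would combine this with the prior on $\chi_m$. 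Since the $\chi$-sequence is drawn from the stationary Markov chain of \cref{fig:switch} with $p = 1/2$, the stationary distribution is $(\tfrac13,\tfrac23)$, so $\Pr(\chi_m = 0) = \tfrac13$ and $\Pr(\chi_m = 1) = \tfrac23$. By Bayes' rule,
$$
	\Pr(\chi_m = 1 \mid T_m = t)
	= \frac{\Pr(T_m = t \mid \chi_m = 1)\,\Pr(\chi_m = 1)}
	       {\Pr(T_m = t \mid \chi_m = 1)\,\Pr(\chi_m = 1) + \Pr(T_m = t \mid \chi_m = 0)\,\Pr(\chi_m = 0)} .
$$
Using $\Pr(T_m = t \mid \chi_m = 1) \ge \tfrac{1}{4e}\Pr(T_m = t \mid \chi_m = 0)$ and $\Pr(\chi_m=1)/\Pr(\chi_m=0) = 2$, the right-hand side is at least $\tfrac{2}{4e}\big/\big(\tfrac{2}{4e} + 1\big) = \tfrac{2}{2 + 4e} = \tfrac{1}{1 + 2e}$. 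Since $1 + 2e < 12$, we get $\Pr(\chi_m = 1 \mid T_m = t) \ge 1/12$, and therefore $\E[\wt R_m \mid T_m = t] = \eps t \cdot \Pr(\chi_m = 1 \mid T_m = t) \ge \eps t / 12$, as claimed.

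The only subtle point---and the one I expect to require the most care in writing up---is the measurability claim: that $\set{T_m = t}$ is determined by the feedback the player receives, so that \cref{lem:info} is applicable to it. This needs the determinism of the player together with the observation that the epoch structure (which rounds are \textsf{switch} rounds) is itself a function of the reward sequence the player observes; one has to be slightly careful that $T_m$ depends on the outcomes of \textsf{switch} actions only through the player's observations and not through the hidden variables $\chi_{1:T}$ directly. Everything else is the short Bayes-rule computation above with the stationary probabilities $(\tfrac13,\tfrac23)$.
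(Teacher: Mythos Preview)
Your proof is correct and follows essentially the same approach as the paper: both show $\{T_m=t\}\in\mathcal F$, apply \cref{lem:info} to it, and combine the resulting likelihood inequality with the stationary prior $(\tfrac13,\tfrac23)$ via Bayes' rule. Your write-up is in fact a bit cleaner than the paper's, which contains a couple of typos (it writes $\chi_m\ne 1$ where $\chi_m\ne 0$ is meant, and uses $\tfrac12,\tfrac12$ as mixing weights), though the paper's final constant $1/12$ is of course the same.
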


\begin{proof}
Fix some $t \in \set{0,1,\ldots,T}$, and notice that $\set{T_{m} = t} \in \F$ as the random variable $T_{m}$ is observable to the player, and in particular, is a deterministic function of $Y_{1:T}$.
Then
\begin{align*}
	\E[\wt R_{m} \mid T_{m} = t]
	\eq \E[\eps T_{m} \cdot \ind{\chi_{m} \ne 1} \mid T_{m} = t] 
	\eq \eps t \cdot \Pr(\chi_{m} \ne 1 \mid T_{m} = t) ~,
\end{align*}
and by Bayes' law, we have
\begin{align} \label{eq:bayes}
	\E[\wt R_{m} \mid T_{m} = t]
	&\eq \eps t \cdot \frac{\Pr(T_{m} = t \mid \chi_{m} \ne 1) \cdot \Pr(\chi_{m} \ne 1)}{\Pr(T_{m} = t)} \non\\
	&\geq \frac{\eps t}{2} \cdot \frac{Q_{m}(T_{m} = t)}{\Pr(T_{m} = t)} ~.
\end{align}
On the other hand, using \cref{lem:info} we obtain 
$P_{m}(T_{m} = t) \le 4e \cdot Q_{m}(T_{m} = t)$, which together with 
$
	\Pr(T_{m} = t)
	\eq \tfrac{1}{2} P_{m}(T_{m} = t) + \tfrac{1}{2} Q_{m}(T_{m} = t)
$
gives
\begin{align*}
	Q_{m}(T_{m} = t) 
	\geq \frac{ 2 }{ 1 + 4e } \cdot \Pr(T_{m} = t)
	\geq \frac{1}{6} \, \Pr(T_{m} = t) ~.
\end{align*}
Plugging this into \cref{eq:bayes} concludes the proof.
\end{proof}

\cref{thm:bandit-lower} is now a direct consequence of \cref{lem:boundness,lem:epoch-regret}.

\begin{proof}[Proof of \cref{thm:bandit-lower}]
Applying \cref{lem:epoch-regret}, we can lower-bound the expected value of $\wt R$ as
\begin{align*}
	\E[\wt R] 
	\eq \E\left[ \sum_{m=1}^{T} \wt R_{m} \right]
	\eq \E\left[ \sum_{m=1}^{T} \E[\wt R_{m} \mid T_{m}] \right]
	\geq \frac{\eps}{12} \, \E\left[ \sum_{m=1}^{T} T_{m} \right]
	\eq \frac{\eps T}{12} ~.
\end{align*}
Hence, by \cref{lem:boundness}, the expected regret of the player can be lower-bounded as
\begin{align*}
	\E[ R ] 
	\geq \lr{\frac{1}{12} - \frac{1}{25}} \cdot \eps T
	\geq \frac{\eps T}{30} ~.
\end{align*}
Using our choice of $\eps$ given in \cref{eq:eps-gam} concludes the proof.
\end{proof}

\subsection{Lower Bound for Stateful Policies}
\label{sec:policy-lower}

In this section we use the lower bound proved in \cref{sec:bandit-lower} in the hidden bandit setting to prove a similar lower bound in the stateful policies model.
Our result applies even in a very restricted case of the problem, where the reference set $\Pi$ consists of \emph{reactive} policies (see \cref{def:reactive}).
This result is stated in \cref{cor:negative}, repeated here in a more specific form.

\begin{repthm}[restated]{cor:negative}
For any randomized algorithm in the stateful policies model, there exists a set $\Pi$ of $k=3$ reactive policies over $n=3$ actions, and a sequence of oblivious reward functions $r_{1},\ldots,r_{T}$, such that expected regret of the algorithm with respect to $\Pi$ is $\Omega(T/\log^{3/2}{T})$.
\end{repthm}

\begin{proof}
Assume the contrary, namely, that there is an algorithm $A$ that achieves an expected regret of $o(T/\log^{3/2}{T})$ with respect to any set of $3$ reactive policies over $3$ actions, and for any oblivious sequence of reward functions.
We show that $A$ can be used to achieve the same expected regret in the hidden bandit model with $p=1/2$ and any oblivious assignment of rewards to the arms.
More specifically, we design an algorithm $A'$ for the hidden bandit problem based on the algorithm $A$ that obtains expected regret of $o(T/\log^{3/2}{T})$.
This would contradict \cref{thm:negative} which states that such algorithm cannot exist, proving our claim.

Consider an instance of the hidden bandit problem with $p=1/2$ and an arbitrary sequence of oblivious reward functions $r'_{1:T} : \set{0,1} \mapsto [0,1]$.
We now describe a set of reference reactive policies $\Pi = \set{\pi_{1},\pi_{2},\pi_{3}}$ and a randomized construction of reward functions $r_{1:T}$ over actions $\set{1,2,3}$, that simulates the hidden bandit instance.
For convenience, we will construct functions $r_{1:T}$ that assign reward values in the range $[-3,3]$; this only affects the constants in the resulting bounds.

\begin{description}
\item[Reference policies.]
The reference set $\Pi$ consists of $3$ reactive policies, $\pi_{1}, \pi_{2}, \pi_{3}$, that all share the same action function
\begin{align*}
	\pi : [-3,3] \mapsto \set{1,2,3} ~,
	\qquad
	\pi(r) = \big\lfloor \abs{r} \big\rfloor ~,
\end{align*}
mapping the last observed reward to the next action.
The policies only differ by their initial action on round $t=1$, where policy $\pi_{i}$ begins by playing action $i$.

\item[Reward functions.]
To construct the functions $r_{1:T}$, we draw a sequence of permutations $\sig_{1},\ldots,\sig_{T+1}$ chosen independently and uniformly at random from the set of all permutations over the elements $\set{1,2,3}$, and define for each action $i \in \set{1,2,3}$ the following reward sequence:
\begin{align} \label{eq:reduction-rewards}
	\forall ~ t \in [T]
	\qquad
	r_{t} (i) \eq 
	\text{round}\lrbig{ r'_{t}\lr{ \ind{ i \ne \sig_{t}(1) } } ,~ \sig_{t+1}(\sig_{t}^{-1}(i)) } ~,
\end{align}
where $\text{round}(r,j)$ is a randomized rounding function that rounds a reward value $r \in [-1,1]$ to $\pm j$ in a way that $\E[ \text{round}(r,j) ] = r$, namely
\begin{align*} 
	\text{round}(r,j) \eq
	\begin{cases}
		+j & \text{with probability ~ $\frac{1}{2} (1+r/j)$~,} \\
		-j & \text{with probability ~ $\frac{1}{2} (1-r/j)$~.}
	\end{cases}
\end{align*}
In particular, there are only $6$ possible reward values: $\pm 1, \pm 2, \pm 3$.

\item[Algorithm.]
Finally, we define an algorithm $A'$ for the hidden bandit problem, based on $A$.
Let $X_{1},\ldots,X_{T} \in \set{1,2,3}$ denote the sequence of actions played by $A$ on the reward functions $r_{1:T}$ and the reference set $\Pi$.
Then, on any round in which $A$ follows the function $\pi$, that is whenever $X_{t+1} = \pi(r_{t}(X_{t}))$, the algorithm $A'$ chooses \textsf{stay}; otherwise, it chooses \textsf{switch}.
The arm being pulled by $A'$ on round $t$ is given by the random variable $X'_{t} = \ind{ X_{t} \ne \sig_{t}(1) }$, and its reward on that round is $r'(X'_{t})$.
\end{description}

\begin{figure}[h]
\centering

\begin{tikzpicture}[>=latex,text height=1.5ex,text depth=0.25ex]

\tikzstyle{good} = [
	circle,
	minimum size=1cm,
	thick,
	draw=black!80,
	fill=gray!80
	]

\tikzstyle{bad} = [
	circle,
	minimum size=1cm,
	thick,
	draw=black!80,
	fill=gray!30
	]

\tikzstyle{background} = [
	rectangle,
	rounded corners=5mm,
	inner sep=0.2cm,
	fill=gray!10
	]
  
\matrix[row sep=0.35cm,column sep=1.1cm] {
    	&
	\node (t_1) {$t=1$}; &
	\node (t_2) {$t=2$}; &
	\node (t_3) {$t=3$}; &
	\node (t_4) {$t=4$}; &
	\node (t_5) {$t=5$}; &
	\node (t_6) {$\cdots$};
	\\

    	&
	\node (a_1) [bad] {$\pm 2$}; &
	\node (a_2) [bad] {$\pm 3$}; &
	\node (a_3) [good] {$\pm 1$}; &
	\node (a_4) [good] {$\pm 2$}; &
	\node (a_5) [bad] {$\pm 1$}; &
	\node (a_6)            {$\cdots$}; 
	\\
    	\node (b_0)           {$\pi^{*}$}; &
	\node (b_1) [good] {$\pm 3$}; &
	\node (b_2) [bad] {$\pm 2$}; &
	\node (b_3) [bad] {$\pm 3$}; &
	\node (b_4) [bad] {$\pm 3$}; &
	\node (b_5) [good] {$\pm 2$}; &
	\node (b_6)            {$\cdots$}; 
	\\
    	&
	\node (c_1) [bad] {$\pm 1$}; &
	\node (c_2) [good] {$\pm 1$}; &
	\node (c_3) [bad] {$\pm 2$}; &
	\node (c_4) [bad] {$\pm 1$}; &
	\node (c_5) [bad] {$\pm 3$}; &
	\node (c_6)            {$\cdots$}; 
	\\
    };
    
\path[->,very thick]
	(b_0) edge (b_1)
	(b_1) edge (c_2)
	(c_2) edge (a_3)
	(a_3) edge (a_4)
	(a_4) edge (b_5)
	(b_5) edge (b_6)
	;

\path[->,thick,dashed]
	(a_1) edge (b_2)
	(b_2) edge (b_3)
	(b_3) edge (c_4)
	(c_4) edge (a_5)
	(a_5) edge (a_6)
	;

\path[->,thick,dashed]
	(c_1) edge (a_2)
	(a_2) edge (c_3)
	(c_3) edge (b_4)
	(b_4) edge (c_5)
	(c_5) edge (c_6)
	;

\begin{pgfonlayer}{background}
	\node [
		background,
                    fit=(a_1) (b_6) (c_5),
                    ] {};
\end{pgfonlayer}

\end{tikzpicture}
\caption{An illustration of the reward functions and policies used in the reduction. The marked path represents the path of the policy $\pi^{*}$, that corresponds to the reference arm in the hidden bandit problem. The absolute value of each rounded reward on one of the paths indicates the next action on that path.} \label{fig:lb-reduction}
\end{figure}
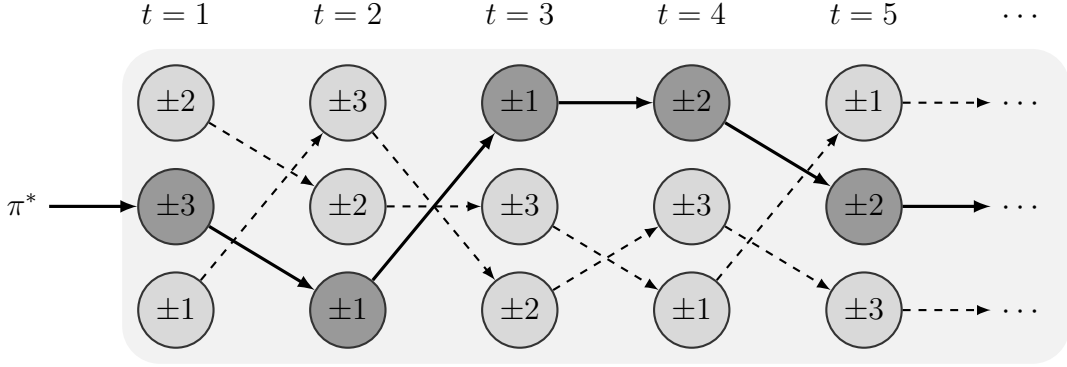

The transition function $g$ and the reward functions $r_{1:T}$ together define three disjoint random paths of actions throughout the game, each corresponding to one of the policies $\pi_{1},\pi_{2},\pi_{3}$.
Namely, for each $i=1,2,3$, the policy $\pi_{\sig_{1}(i)}$ (that plays the action $\sig_{1}(i)$ on the first round) plays the sequence of actions $\sig_{1}(i),\sig_{2}(i),\ldots,\sig_{T}(i)$ on rounds $1,2,\ldots,T$, which is disjoint from the trajectory of other policies.
This follows from the observation that, for any $i \in \set{1,2,3}$,
\begin{align} \label{eq:reduction-signal}
	\forall ~ t \in [T]
	\qquad
	\pi \lrbig{ r_{t}(\sig_{t}(i)) } \eq \sig_{t+1}(i) ~.
\end{align}
In words, if the action $\sig_{t}(i)$ was played on round $t$, then by following the  function $\pi$ the action $\sig_{t+1}(i)$ is played on round $t+1$.
The idea is that the type of rounding used for each reward value on one of the paths signals the function $\pi$ which is the next action to be played on that path.
See \cref{fig:lb-reduction} for an illustration of this structure.

The policy $\pi^{*} = \pi_{\sig_{1}(1)}$, whose action on the first round is $\sig_{1}(1)$, corresponds to the reference arm in the underlying hidden bandit problem. 
Indeed, by \cref{eq:reduction-rewards}, the expected sequence of rewards encountered along the path of $\pi^{*}$ is $r'_{1}(0),r'_{2}(0),\ldots,r'_{T}(0)$ (where the expectation is taken with respect to the randomized rounding), and thus coincides with the reward sequence of the reference arm.
The expected reward sequences corresponding to the other two paths are identical to the reward sequence of the decoy arm.

We now show that $A'$ is a valid algorithm in the hidden bandit model (with $p=1/2$).
To this end, we verify that the dynamics of the \textsf{stay} and \textsf{switch} actions are compatible with those of the hidden bandit model, namely:
\begin{enumerate}[(i)]
\item
The arm pulled by $A'$ on the first round is the reference arm with probability $1/3$ (and it is the decoy arm with probability $2/3$).
Indeed, $\Pr(X'_{1} = 0) = \Pr(X_{1} = \sig_{1}(1)) = 1/3$ since $\sig_{1}$ is chosen uniformly at random.
\item
If $A'$ chooses \textsf{stay} on round $t$, then it remains on the same arm on round $t+1$, namely $X'_{t+1} = X'_{t}$ with probability $1$.
Indeed, $A'$ chooses \textsf{stay} if $X_{t+1} = \pi(r_{t}(X_{t}))$, in which case \cref{eq:reduction-signal} implies that $X_{t+1} = \sig_{t+1}(1)$ if and only if $X_{t} = \sig_{t}(1)$, so $X'_{t+1} = X'_{t}$.
\item
If $A'$ chooses \textsf{switch} on round $t$ and $X'_{t} = 0$, then $X'_{t+1} = 1$ with probability $1$. 
This holds since $X'_{t} = 0$ means that $X_{t} = \sig_{t}(1)$, and if $A'$ chose \textsf{switch} then necessarily $X_{t+1} \ne \sig_{t+1}(1)$, which implies that $X'_{t+1} = 1$ with probability $1$.
\item
If $A'$ chooses \textsf{switch} on round $t$ and $X'_{t} = 1$, then $X'_{t+1} = 0$ with probability $1/2$.
To see this, note that $X'_{t} = 1$ implies that $X_{t} = \sig_{t}(j)$ for some $j \ne 1$, so if $A'$ chose \textsf{switch} then $X_{t+1} \ne \sig_{t+1}(j)$ and consequently $\Pr(X_{t+1} = \sig_{t+1}(1)) = 1/2$ as $\sig_{t+1}$ is a random permutation. 
This means that $X'_{t+1} = 0$ with probability $1/2$.
\end{enumerate}

Finally, notice that by \cref{eq:reduction-rewards} we have $\E[r'_{t}(X'_{t})] = \E[r_{t}(X_{t})]$ for all $t$.
Together with the fact that the total expected reward of $\pi^{*}$ equals the total expect reward of the reference arm, this implies that the expected regret of $A'$ (with respect to the reference arm) is no more than the expected regret of $A$ (with respect to $\Pi$), which is assumed to be $o(T/\log^{3/2}(T))$.
As explained earlier, this contradicts \cref{thm:negative}, and as a consequence proves our claim.
\end{proof}

\subsection{Semi-Markovian Players vs.~Consistent Adversaries}
\label{sec:semimarkov}

In this section we consider some natural restrictions on the hidden bandits setting. Some of these restrictions limit the adversary, whereas others limit the player.

We describe first two types of restrictions on the adversary, one being more severe than the other. Recall that the adversary determines the sequence of rewards for each arm. We use $r_t(0)$ ($r_t(1)$, respectively) to denote the reward of the reference arm (decoy arm, respectively) at round $t$, and assume that $0 \le r_t(i) \le 1$.

\begin{definition}[consistent adversary]
\label{def:consistent}
An adversary is {\em constant} if for every arm $i$, there is a certain value $v_i \in [0,1]$ such that $r_t(i) = v_i$ for every $1 \le t \le T$. Hence the strategy of a constant adversary is limited to selecting two values $0 \le v_1 < v_0 \le 1$. An adversary is {\em consistent} if there is a fixed offset $0 \le \Del \le 1$ such that in every round $t$, $r_t(0) - r_t(1) = \Del$. Hence the strategy of a consistent adversary is limited to selecting an offset value $0 \le \Del \le 1$ and an arbitrary sequence of rewards $r_t(0)$ for the reference arm, in the range $[\Del,1]$.
\end{definition}

When the adversary is consistent, the regret of a player is exactly $\Del$ times the number of rounds spent on the decoy arm. Every constant adversary is also a consistent adversary, with $\Del = v_0 - v_1$.

Let us now present two types of restrictions on the algorithm of the player, one being more severe than the other. Recall that a algorithm of a player determines for every round $t$ whether to {\sf switch} or {\sf stay}, depending on the history observable to the player up to and including round $t$ (namely, the sequence or rewards observed, and the time steps of all previous switch requests). It is convenient to assume a round~0 that contains an initial {\sf switch} action (there is no reward in round~0).

\begin{definition}[Markovian algorithm]
\label{def:markov}
An algorithm of the player is {\em Markovian} if there is a deterministic function $p:[0,1] \mapsto [0,1]$, which given a reward $r$ received in round $t$, maps it to a probability $p(r)$ for switching in round $t+1$. Hence, a Markovian strategy ignores all information available to the player (including the round number), except for the last reward received.
\end{definition}

\begin{definition}[semi-Markovian algorithm]
\label{def:semi-markov}
An algorithm is {\em semi-Markovian} if it depends only on the sequence of rewards obtained since the last switch request. Namely, its input is a {\em memory string} $s$ starting by a switch request, and then continuing with a nonempty sequence of rewards (observed since the last switch up to and including the current round), and its output is a probability $p(s)$ of switching. Then the player tosses a coin with bias $p(s)$. If it comes up heads the players makes a \textsf{switch} request, and consequently ``forgets" the current memory string $s$ and starts building a new memory string by placing a {\sf switch} at its beginning. If the coin comes up tails the player chooses {\sf stay}, and keeps the current memory string $s$. In either case, the reward observed in next round will be appended to the memory string.
\end{definition}

\subsubsection{Upper Bound for Consistent Adversaries}
\label{sec:consistent-upper}

We now discuss a simple algorithm for the hidden bandit problem, suitable for the case where the adversary is playing a consistent strategy. 
The algorithm, given in \cref{alg:consistent}, simply chooses \textsf{switch} with probability inversely proportional to the exponent of the last observed reward.

\begin{algorithm}
\begin{framed}
\paragraph{Algorithm vs.~consistent adversaries} (parameters: $\eta$)
\begin{itemize}[leftmargin=\algindent]
\item
For $t=1,\ldots,T$:
observe reward $r_t$ and \textsf{switch} with probability $\frac{1}{2} e^{-\eta r_t}$ (otherwise \textsf{stay})
\end{itemize}
\vspace{-2ex}
\end{framed}
\caption{An algorithm for the hidden bandit problem with a consistent adversary, that keeps a constant gap between the rewards sequences of the two arms.}
\label{alg:consistent}
\end{algorithm}

The intuition behind this algorithm is straightforward: since one arm is constantly better than the other, the player is more likely to switch when he is on the worse arm.
We prove that this algorithm performs better than our algorithm for the general case, and achieves an expected regret of $O(T/\log{T})$ against any consistent adversary.

\begin{theorem} \label{thm:consistent}
For $\eta = \frac{1}{2} \log{T}$, the expected regret of \cref{alg:consistent} is
$
	O\big( \frac{ T } { p \log T } \big)
$
against any consistent adversary. 
\end{theorem}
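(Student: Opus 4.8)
The plan is to track the probability that the player is on the decoy arm (arm~1) as a function of the round, using the fact that for a consistent adversary the per-round regret is exactly $\Delta$ times the indicator of being on the decoy arm. Let $q_t = \Pr(X_t = 1)$ denote the probability of being on the decoy arm at round~$t$. The key point is that, since the adversary is consistent with offset $\Delta$, the reward observed on the reference arm is some value $a_t \in [\Delta, 1]$ and on the decoy arm is $a_t - \Delta$. Under \cref{alg:consistent}, from the reference arm the player switches (to the decoy) with probability $\tfrac12 e^{-\eta a_t}$, whereas from the decoy arm a ``switch'' action only returns to the reference arm with probability $p$ (recall the asymmetric \textsf{switch} dynamics of \cref{fig:switch}), so the player returns with probability $p \cdot \tfrac12 e^{-\eta(a_t-\Delta)} = \tfrac{p}{2} e^{\eta\Delta} e^{-\eta a_t}$. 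Writing the one-step recursion for $q_t$ and looking at its fixed point, the stationary probability of the decoy arm behaves like $q^\star \approx \frac{\tfrac12 e^{-\eta a}}{\tfrac12 e^{-\eta a} + \tfrac{p}{2}e^{\eta\Delta}e^{-\eta a}} = \frac{1}{1 + p\, e^{\eta\Delta}}$. The crucial inequality is then $q^\star \le e^{-\eta\Delta}/p$, so with $\eta = \tfrac12\log T$ the expected fraction of rounds on the decoy arm is at most $T^{-\Delta/2}/p$ once the chain mixes.

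The first step is to make this heuristic rigorous by showing that the Markov chain on $\{0,1\}$ induced by the algorithm mixes quickly: the ``switch-from-reference'' probability $\tfrac12 e^{-\eta r}$ is bounded below by $\tfrac12 e^{-\eta} = \tfrac{1}{2\sqrt T}$, so over any window of, say, $\sqrt T \log T$ rounds the chain is very likely to have taken at least one switch action, and one can bound the drift of $q_t$ toward its (possibly time-varying, since $a_t$ may vary) quasi-stationary value. Concretely I would argue that $q_t \le e^{-\eta\Delta}/p + (\text{mixing error})$ for all $t$ beyond an initial transient of length $O(\sqrt T \log T / p)$, where the mixing error decays geometrically. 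Summing over $t$, the expected number of rounds on the decoy arm is at most $T\cdot e^{-\eta\Delta}/p + O(\sqrt T \log T/p)$, and hence $\mathrm{Regret}_T \le \Delta\big(T e^{-\eta\Delta}/p + O(\sqrt T \log T/p)\big)$.

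The last step is to optimize over the adversary's choice of $\Delta$. The function $\Delta \mapsto \Delta\, T\, e^{-\eta\Delta}$ is maximized at $\Delta = 1/\eta$, giving value $T/(e\eta)$; with $\eta = \tfrac12\log T$ this is $2T/(e\log T)$. Dividing by $p$ and absorbing the lower-order $\Delta\sqrt T\log T/p \le \sqrt T\log T/p = o(T/\log T)$ term yields the claimed $O\big(\tfrac{T}{p\log T}\big)$ bound, uniformly over all consistent adversaries (including the degenerate $\Delta$ close to $0$, where regret is trivially at most $\Delta T \le T/\eta = O(T/\log T)$, and $\Delta$ close to~$1$, where the exponential factor makes the decoy arm essentially never visited).

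I expect the main obstacle to be step two: the adversary is allowed to make $r_t(0)$ an \emph{arbitrary} sequence in $[\Delta,1]$, so the transition probabilities of the two-state chain are time-inhomogeneous, and one cannot simply invoke a stationary-distribution calculation. The fix is to observe that regardless of how $a_t \in [\Delta,1]$ varies, the ratio of the ``leave-decoy'' rate to the ``leave-reference'' rate is always exactly $p\, e^{\eta\Delta}$ (the common factor $e^{-\eta a_t}$ cancels), so the chain always contracts toward the \emph{same} quasi-stationary value $1/(1+p e^{\eta\Delta})$ with a contraction factor bounded away from~$1$ by the uniform lower bound $\tfrac{1}{2\sqrt T}$ on the switch-from-reference probability; a standard coupling or potential-function argument on $|q_t - q^\star|$ then closes the gap. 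The remaining calculations (bounding the geometric sum, the $\Delta=1/\eta$ optimization) are routine.
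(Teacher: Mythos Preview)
Your approach is essentially the paper's: it too identifies the common stationary distribution $\mu_1 = 1/(1+pe^{\eta\Delta})$ via exactly the cancellation you describe (its \cref{lem:station}), proves geometric contraction to $\mu$ at rate $(1-pe^{-\eta})$ per step via a Doeblin-type minorization bound (its \cref{lem:mixing}, giving total mixing cost $2e^{\eta}/p = 2\sqrt{T}/p$), and then maximizes $\Delta\, e^{-\eta\Delta}$ over $\Delta$ exactly as you do. One small slip: the contraction is governed by the smallest entry of the transition kernel, which is the return-from-decoy probability $\tfrac{p}{2}e^{-\eta}$ rather than the leave-reference probability $\tfrac12 e^{-\eta}$, but your stated transient $O(\sqrt{T}\log T/p)$ already absorbs this and is in any case $o\big(T/(p\log T)\big)$.
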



\begin{proof}[Proof of \cref{thm:consistent}]
Let $X_{t}$ denote the arm assigned to the algorithm on round $t$.
Denote by $q^{t}_{i}$ the probability that the algorithms requests a \textsf{switch} on round $t$ given that it is pulling arm $i$ ($i=0,1$).
Then 
\begin{align*}
	\Pr(X_{t+1}=1 \mid X_{t}=0) &\eq q^{t}_{0} ~, \\
	\Pr(X_{t+1}=0 \mid X_{t}=1) &\eq p q^{t}_{1} ~.
\end{align*}
Hence, we can view the sequence $X_{1},X_{2},\ldots,X_{T}$ as a trajectory of a two-state Markov chain, with its transition kernel on time $t$ given by
\begin{align*}
	Q_{t} \eq 
	\left(\begin{array}{cccc}
		1-q^{t}_{0} & q^{t}_{0} \\
		p q^{t}_{1} & 1-p q^{t}_{1}
	\end{array}\right) ~.
\end{align*}
Our goal is to prove that this chain mixes quickly to a steady-state distribution.
Note, however, that the chain is \emph{time inhomogeneous} (there is a different transition matrix on each round), so standard bounds on mixing times do not apply.
Nevertheless, our analysis hinges on the fact that there exists a single distribution which is stationary with respect to all transition kernels simultaneously, and shows that the chain mixes to that distribution.

First, we identify the steady-state distribution shared by all transition kernels.

\begin{lemma} \label{lem:station}
The distribution
\begin{align} \label{eq:pi}
	\mu \eq
	\lr{ \frac{p}{p+e^{-\eta \Del}} 
		~,~ \frac{e^{-\eta \Del}}{p+e^{-\eta \Del}} }
\end{align}
is stationary with respect to all kernels $P_{t}$.
That is, it holds that $\mu P_{t} = \mu$ for all $t$.
\end{lemma}

Next, we prove that the inhomogeneous chain mixes to the common stationary distribution $\mu$.
In the following, we let $\mu^{t}$ denote the distribution of $X_{t}$, namely, the probability distribution over the arms induced by the algorithm on round $t$.

\begin{lemma} \label{lem:mixing}
For all $t$, we have $\norm{\mu^{t} - \mu}_{1} \le 2 (1 - p e^{-\eta})^{t-1}$.
\end{lemma}

%
%


The proofs of both lemmas are deferred to the end of the section.
Now, note that the expected regret of the player can be written in terms of the distributions $\mu^{t}$ as
\begin{align*}
	\E[R_{T}] \eq
	\Del \cdot \sum_{t=1}^{T} \mu^{t}_{1} ~.
\end{align*}
Suppose that the player could sample directly from the stationary distribution $\mu$ (on each round independently).
Then, his expected regret would be
\begin{align*}
	\Del \cdot \sum_{t=1}^{T} \mu_{1}
	\eq	T \cdot \frac{ \Del e^{-\eta \Del} }{ p + e^{-\eta \Del} }
	\leq	\frac{T}{\eta p} \cdot \eta\Del e^{-\eta \Del} 
	\leq 	\frac{T}{2\eta p} ~,
\end{align*}
where in the last inequality we have used the fact that the function $x \mapsto x e^{-x}$ for $x \ge 0$ is maximized at $x=1$.
Using \cref{lem:mixing}, we can bound the difference between this regret and the player's actual regret:
\begin{align*}
	\Del \cdot \sum_{t=1}^{T} (\mu^{t}_{1} - \mu_{1})
	\leq	\sum_{t=1}^{T} \norm{ \mu^{t} - \mu }_{1}
	\leq 	\sum_{t=1}^{\infty} 2 (1 - p e^{-\eta})^{t-1}
	\eq 	\frac{2e^{\eta}}{p} ~.
\end{align*}
Overall, we have
\begin{align*}
	\E[R_{T}] 
	\leq 	\frac{ 2 e^{\eta} }{ p } + \frac{ T } { 2\eta p } ~.
\end{align*}
Choosing $\eta = \frac{1}{2} \log T$ we obtain
\begin{align*}
	\E[R_{T}] 
	\leq 	\frac{1}{p} \lr{ 2\sqrt{T} + \frac{T}{\log T} }
	\eq 	O\lr{ \frac{ T } { p\log T } } ~,
\end{align*}
as claimed.
\end{proof}

Finally, we provide the proofs of the lemmas used in our analysis above.

\begin{proof}[Proof of \cref{lem:station}]
A simple calculation verifies that the distribution
\begin{align*}
	\nu_{t} \eq 
	\lr{ \frac{p q^{t}_{1}}{q^{t}_{0}+p q^{t}_{1}} 
		~,~ \frac{q^{t}_{0}}{q^{t}_{0}+p q^{t}_{1}} }
\end{align*}
is stationary with respect to $P_{t}$.
However, observe that
\begin{align*}
	\frac{q^{t}_{0}}{q^{t}_{1}}
	\eq \frac{ e^{-\eta r_{t}(0)} }{ e^{-\eta r_{t}(1)} }
	\eq e^{-\eta (r_{t}(0) - r_{t}(1))}
	\eq e^{-\eta \Del} ~,
\end{align*}
which gives
\begin{align*}
	\nu_{t} 
	\eq \lr{ \frac{p}{p+ q^{t}_{0}/q^{t}_{1}} 
		~,~ \frac{q^{t}_{0}/q^{t}_{1}}{p+q^{t}_{0}/q^{t}_{1}} }
	\eq \lr{ \frac{p}{p+e^{-\eta \Del}} 
		~,~ \frac{e^{-\eta \Del}}{p+e^{-\eta \Del}} }
	\eq \mu ~.
\end{align*}
for all $t$. 
\end{proof}

For the proof of \cref{lem:mixing} we need the following technical result, which is a variant of Theorem~4.9 in \cite{levin2009markov}.

\begin{lemma} \label{lem:Q}
Let $Q$ be a $k \times k$ stochastic matrix such that $Q_{ij} \ge \eps$ for some $\eps > 0$ and all $i,j$.
Then for any two distribution vectors $\mu$ and $\nu$ we have
$$
	\norm{\mu Q - \nu Q}_{1} 
	\leq (1-k\eps) \cdot \norm{\mu-\nu}_{1} ~.
$$
\end{lemma}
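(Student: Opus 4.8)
The plan is to establish this as an instance of Doeblin's minorization condition, which yields a Dobrushin-type contraction. First I would record a harmless preliminary: since $Q$ is a $k\times k$ stochastic matrix with every entry at least $\eps$, each row sum $1=\sum_j Q_{ij}\ge k\eps$, so $1-k\eps\ge 0$ automatically and the claimed inequality is meaningful. Let $\Pi$ denote the $k\times k$ matrix all of whose entries equal $1/k$ (so $\Pi = \frac1k\mathbf{1}\mathbf{1}\tr$ with $\mathbf{1}$ the all-ones column vector). If $k\eps=1$ then necessarily $Q=\Pi$, in which case $\mu Q=\nu Q$ and the statement is trivial; so assume $k\eps<1$ and set $R \eqdef \frac{1}{1-k\eps}\bigl(Q - k\eps\,\Pi\bigr)$. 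Each entry of $Q-k\eps\,\Pi$ equals $Q_{ij}-\eps\ge 0$, and each of its rows sums to $1-k\eps$, so $R$ is itself a row-stochastic matrix; this gives the decomposition $Q = k\eps\,\Pi + (1-k\eps)\,R$.

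Next I would exploit this decomposition. Put $\delta \eqdef \mu-\nu$, a row vector whose entries sum to $0$ since $\mu,\nu$ are probability distributions. The key annihilation identity is $\delta\,\Pi = \mathbf{0}$: the $j$-th coordinate of $\delta\,\Pi$ is $\frac1k\sum_i \delta_i = 0$. Hence $\mu Q - \nu Q = \delta Q = k\eps\,(\delta\,\Pi) + (1-k\eps)\,(\delta R) = (1-k\eps)\,\delta R$. It then remains to bound $\norm{\delta R}_1$, for which I would invoke the elementary fact that left-multiplication by a row-stochastic matrix is $\ell_1$-nonexpansive: for any row vector $x$, $\norm{xR}_1 = \sum_j\bigl|\sum_i x_i R_{ij}\bigr| \le \sum_j\sum_i |x_i| R_{ij} = \sum_i |x_i|\sum_j R_{ij} = \norm{x}_1$, using $R_{ij}\ge 0$ and $\sum_j R_{ij}=1$. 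Taking $x=\delta$ yields $\norm{\mu Q-\nu Q}_1 = (1-k\eps)\norm{\delta R}_1 \le (1-k\eps)\norm{\delta}_1 = (1-k\eps)\norm{\mu-\nu}_1$, which is the claim.

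There is essentially no hard step here; this is a standard argument (it is Theorem~4.9 of \cite{levin2009markov} specialized to the uniform minorizing measure). The only points requiring any care are (a) verifying that $R$ has nonnegative entries and unit row sums, so that the decomposition $Q = k\eps\,\Pi+(1-k\eps)R$ is valid, and (b) the identity $\delta\,\Pi=\mathbf{0}$, which is precisely where the zero-sum property of $\mu-\nu$ enters; the degenerate case $k\eps=1$ (only possible for $Q=\Pi$) is dispatched separately as above.
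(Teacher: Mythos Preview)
Your proof is correct and is essentially identical to the paper's own argument: the paper writes $Q=(1-k\eps)M+\eps J$ with $J$ the all-ones matrix (your $\Pi=J/k$, so $\eps J=k\eps\,\Pi$ and your $R$ is their $M$), uses $(\mu-\nu)J=0$, and then the same $\ell_1$-nonexpansiveness of a stochastic matrix. Your added checks that $R$ is genuinely stochastic and the handling of the degenerate case $k\eps=1$ are nice hygiene but do not change the approach.
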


\begin{proof}
We can write $Q = (1-k\eps) M + \eps J$ where $Q$ is a stochastic matrix and $J$ denotes the all-ones $k \times k$ matrix.
Now, for any two distributions $\mu,\nu$ we have $J\mu = J\nu = \mathbf{1}$.
Consequently,
\begin{align*}
	\norm{\mu Q - \nu Q}_{1}
	\eq (1-k\eps) \cdot \norm{(\mu - \nu)M}_{1} ~.
\end{align*}
It remains to bound the norm on the right-hand size. 
We have
\begin{align*}
	\norm{(\mu-\nu)M}_{1}
	&\eq 	\sum_{j=1}^{k} \Big| \sum_{i=1}^{k} (\mu_{i}-\nu_{i}) M_{ij} \Big| \\
	&\leq 	\sum_{i=1}^{k} \sum_{j=1}^{k} |\mu_{i}-\nu_{i}| M_{ij}
	\eq 	\sum_{i=1}^{k} |\mu_{i}-\nu_{i}| \sum_{j=1}^{k} M_{ij}
	\eq 	\sum_{i=1}^{k} |\mu_{i}-\nu_{i}| \\
	&\eq 	\norm{\mu-\nu}_{1} ~,
\end{align*}
which completes the proof.
\end{proof}

\begin{proof}[Proof of \cref{lem:mixing}]
Since $q^{t}_{0}, q^{t}_{1} \le \frac{1}{2}$, the smallest entry in the matrix $Q_{t}$ is $p q^{t}_{1}$ which is bounded from below by
$$
	p q^{t}_{1} 
	\eq 	\tfrac{1}{2} p e^{-\eta r_{t}(1)}
	\geq	\tfrac{1}{2} p e^{-\eta}
$$
as the reward $r_{t}(1)$ is at most one.
Hence, \cref{lem:Q} above implies that for all $t$,
\begin{align*}
	\norm{\mu^{t+1} - \mu}_{1}
	\eq 	\norm{\mu^{t} Q_{t} - \mu Q_{t}}_{1} 
	\leq 	(1 - p e^{-\eta}) \cdot \norm{\mu^{t}-\mu}_{1} ~,
\end{align*}
where we have also used the stationarity of $\pi$.
By repeating this argument, we get
\begin{align*}
	\norm{\mu^{t+1} - \mu}_{1} 
	\leq 	(1 - p e^{-\eta})^{t} \cdot \norm{\mu^{1} - \mu}_{1} 
	\leq	2 (1 - p e^{-\eta})^{t} ~,
\end{align*}
since the $L_{1}$-distance between two distributions is at most $2$.
\end{proof}

\subsubsection{Lower Bound for Semi-Markovian Algorithms}
\label{sec:consistent-lower}

We refer to the following constant strategy for the adversary as the {\em \textsf{MT} strategy}. \textsf{MT} stands for {\em monotonicity testing}, as this strategy (and the first part of the analysis of Case~2 in the proof of \cref{thm:MT} below) is a variation on a procedure of \cite{raskhodnikova1999monotonicity} for testing whether a one variable function is monotone.  For simplicity and with only negligible affect on the end results, assume that $1+ \log T$ is a power of~2.

\paragraph{The constant strategy \textsf{MT}.}
The adversary chooses at random two integer values $1 \le k_1 < k_0 \le \log T$, subject to the following constraint. Let $r$ be the largest power of~2 that divides either $k_1$ or $k_0$ (whichever gives a larger value for $r$). Then the constraint is that $k_0 - k_1 \le 2^r$. For $0 \le r \le \log \log T$, say that a pair $k_1 < k_0$ is in {\em class} $r$ if $2^{r-1} < k_0 - k_1 \le 2^r$. Observe that for each value of $r$ there are at most $\log T$ pairs $(k_1,k_0)$ in class $r$. Let $c = \sum_{r=0}^{\log\log T} \frac{1}{(r+1)^2}$ and observe that $c < 2$ (regardless of $T$).
The probability distribution from which the pair $(k_1,k_0)$ is chosen is as follows: first an integer value $0 \le r \le \log\log T$ is chosen with probability $\frac{1}{c(r+1)^2}$. Then a pair $(k_1,k_0)$ from class $r$ is chosen uniformly at random. Finally, given the chosen $k_1$ and $k_0$, the adversary sets $v_0 = \frac{k_0}{\log T}$ and $v_1 = \frac{k_1}{\log T}$.

\begin{theorem}
\label{thm:MT}
In the hidden bandit setting, regardless of the value of the parameter $p$, if the player is restricted to use semi-Markovian strategies, then his expected regret against the \textsf{MT} strategy is $\Omega( T/\log T )$.
\end{theorem}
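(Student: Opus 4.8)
The plan is to reduce the game against the constant \textsf{MT} adversary to a statement about a single real-valued function derived from the player's strategy, and then to establish that statement by a dichotomy mirroring the analysis of Raskhodnikova's monotonicity tester. Against a \emph{constant} adversary the memory string of a semi-Markovian player is always a run of a single reward value --- either $v_0=k_0/\log T$ (if the last \textsf{switch} request left him on the reference arm) or $v_1=k_1/\log T$ (if it left him on the decoy arm) --- so the strategy is captured by one number for each of the at most $\log T$ possible reward values $v=k/\log T$, namely $\tau(k)\in[1,\infty]$, the expected length of a memory run of the player on an arm whose reward is constantly $v$ (with $\tau(k)=\infty$ if the player never switches at that reward). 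A renewal argument on the two-state arm process then gives that a reference visit lasts $\tau(k_0)$ rounds in expectation while a decoy visit lasts $\tau(k_1)/p$ rounds in expectation (each decoy memory run ends in an \emph{actual} switch only with probability $p$), so the long-run fraction of rounds on the decoy arm is $\tau(k_1)/(p\,\tau(k_0)+\tau(k_1))$. Since a consistent adversary with offset $\Del=(k_0-k_1)/\log T$ incurs regret exactly $\Del$ times the number of decoy rounds, averaging over the pair $(k_1,k_0)$ drawn by \textsf{MT} shows that $\E[R]$ is at least a constant times $(T/\log T)\cdot\Xi$, where
\begin{equation*}
	\Xi\;:=\;\E_{(k_1,k_0)\sim\textsf{MT}}\!\left[(k_0-k_1)\cdot\frac{1}{1+p\,\tau(k_0)/\tau(k_1)}\right].
\end{equation*}
The renewal approximation contributes only a lower-order error unless some $\tau(k)=\Omega(pT)$; but if $\tau(k_1)=\Omega(pT)$ the player --- who starts on the decoy arm with probability $1/(1+p)$ --- is stuck for $\Omega(T)$ rounds on an arm that is $\Del\ge 1/\log T$ below the reference, already yielding $\Omega(T/\log T)$ regret, and the few such ``sticky'' values of $k$ may be excised from \textsf{MT} at negligible cost; so we may assume $\tau\colon\{1,\dots,\log T\}\to[1,T]$. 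It then suffices to show $\Xi=\Omega(1)$, with the constant allowed to depend on $p$.

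I would split on how far $\tau$ is from monotone non-decreasing, using a fixed threshold $\eps_0=\tfrac14$. If $\tau$ is $\eps_0$-\emph{close} to monotone, pick a monotone non-decreasing $\widetilde\tau$ with range in $[1,T]$ agreeing with $\tau$ off a set of size $\le\eps_0\log T$, and restrict attention to the class-$0$ (unit-gap) pairs of \textsf{MT}, which carry $\Omega(1)$ total mass and are uniform among the $\log T-1$ adjacent pairs $(j,j+1)$. For the $\ge(1-2\eps_0)\log T$ adjacent pairs with both endpoints ``good'', set $d_j=\ln\widetilde\tau(j+1)-\ln\widetilde\tau(j)\ge0$; these $d_j$ are nonnegative, and since the full collection $\{\ln\widetilde\tau(j+1)-\ln\widetilde\tau(j)\}_{j=1}^{\log T-1}$ telescopes to $\ln\widetilde\tau(\log T)-\ln\widetilde\tau(1)\le\ln T$ and consists of nonnegative terms, $\sum_j d_j\le\ln T$. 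Using $\tfrac{1}{1+pe^{x}}\ge\tfrac{1}{1+p}e^{-x}$ for $x\ge0$ together with Jensen's inequality applied to $e^{-(\cdot)}$,
\begin{equation*}
	\sum_j\frac{1}{1+p\,e^{d_j}}\;\ge\;\frac{(1-2\eps_0)\log T}{1+p}\exp\!\left(-\frac{\sum_j d_j}{(1-2\eps_0)\log T}\right)\;\ge\;\frac{(1-2\eps_0)\log T}{1+p}\exp\!\left(-\frac{\ln 2}{1-2\eps_0}\right)\;=\;\Omega(\log T),
\end{equation*}
so the class-$0$ contribution to $\Xi$ is $\Omega(1)\cdot\tfrac{1}{\log T}\cdot\Omega(\log T)=\Omega(1)$.

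If instead $\tau$ is $\eps_0$-\emph{far} from monotone non-decreasing, I would invoke monotonicity testing: a longest-increasing-subsequence / chain argument shows $\tau$ has $\Omega(\eps_0\log T)$ ``inverted'' indices, and --- this is exactly where the power-of-two scale structure of the \textsf{MT} classes and their $1/(c(r+1)^2)$ weights are used, as in Raskhodnikova's tester --- these inversions are witnessed at matching scales, so that a pair $(k_1,k_0)\sim\textsf{MT}$ satisfies $k_1<k_0$ and $\tau(k_1)\ge\tau(k_0)$ (an ``inversion'') with $(k_0-k_1)$-weighted probability $\Omega(\eps_0)=\Omega(1)$. On every inversion pair $\tau(k_0)/\tau(k_1)\le1$, hence $\frac{1}{1+p\,\tau(k_0)/\tau(k_1)}\ge\frac{1}{1+p}$, so $\Xi\ge\frac{1}{1+p}\,\E[(k_0-k_1)\,\ind{\text{inversion}}]=\Omega(1)$. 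In both cases $\Xi=\Omega(1)$, hence $\E[R]=\Omega(T/\log T)$.

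The step I expect to be the main obstacle is the \emph{far} case: proving that \textsf{MT} catches $(k_0-k_1)$-weighted inversions of an $\eps_0$-far function with $\Omega(1)$ expected weight. This requires adapting Raskhodnikova's monotonicity-testing argument to (i) the presence of the multiplicative gap weight $(k_0-k_1)$, which must be reconciled with the class weights so that no single scale is allowed to dominate, and (ii) the fact that we are testing an arbitrary positive function $\tau$ rather than a $0/1$-valued one. A secondary but still delicate point is the reduction itself: quantifying the finite-horizon error in the renewal formula and correctly discarding the pathological ``sticky'' reward values without losing a super-constant factor in $\Xi$.
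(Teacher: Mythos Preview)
Your plan is essentially the paper's proof: reduce to a single run-length function on the $\log T$ possible reward values, then split on how close that function is to monotone, handling the close case via the class-$0$ (adjacent) pairs and the far case via the multiscale monotonicity-testing structure of \textsf{MT}. A few points of comparison are worth noting.

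First, the paper does \emph{not} work with expected run lengths and a renewal approximation. It applies Yao's principle up front: since \textsf{MT} is a fixed randomized adversary, it suffices to lower-bound the regret of an arbitrary \emph{deterministic} semi-Markovian player, which against a constant adversary is literally a function $g:[0,1]\to\{1,\dots,T\}$ giving the exact number of rounds between switch requests. This sidesteps the two issues you flag as secondary, and in fact your handling of the ``sticky'' case is not correct as written: $\tau(k_1)=\Omega(pT)$ is a statement about the \emph{mean} run length and does not by itself force the player to be stuck for $\Omega(T)$ rounds (the run-length distribution could be $1$ with probability $1-1/T$ and $T^2$ with probability $1/T$). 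The Yao reduction removes this difficulty entirely, so you should adopt it rather than patch the renewal argument.

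Second, in the close-to-monotone case the paper uses a direct counting argument instead of Jensen: from a long increasing subsequence of $f(1),\dots,f(\log T)$, at most $\tfrac{1}{2}\log T$ consecutive steps can have ratio $\ge 4$ (else $f$ would exceed $T$), leaving $\Omega(\log T)$ adjacent ``dangerous'' pairs with $f(i+1)\le 4f(i)$, each contributing $\Omega(1/\log T)\cdot\Omega(T)\cdot(1/\log T)$ to the expected regret. Your Jensen argument is a valid and arguably slicker alternative.

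Third, the far case is exactly what you anticipate, and the paper supplies the concrete mechanism: take a maximal matching on the inversion graph (size $\ge\tfrac{1}{10}\log T$ when $\mathsf{LIS}(f)<\tfrac{4}{5}\log T$); each matching edge $(i,j)$ yields a decreasing \textsf{MT}-pair by passing through the unique index $h\in[i,j]$ of highest dyadic valuation; distinct matching edges give distinct decreasing \textsf{MT}-pairs; then pigeonhole over classes gives some ``dangerous'' $r$ with $\Omega\!\big(\tfrac{1}{(r+1)^2}\big)$ fraction of its pairs decreasing, and since a class-$r$ pair has gap $\ge 2^{r-1}$ and class probability $\tfrac{1}{c(r+1)^2}$, the contribution is $\Omega\!\big(2^{r}/(r+1)^4\big)=\Omega(1)$. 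This is precisely the weighted-inversion detection you were aiming for, and it does not require $\tau$ to be $\{0,1\}$-valued.
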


\begin{proof}
To obtain a lower bound on the regret it suffices to consider deterministic strategies for the player, because the strategy of the adversary is already fixed to be \textsf{MT}. Given that each arm has constant reward under the \textsf{MT} strategy, then a deterministic semi-Markovian strategy of the player is simply a function $g:[0,1] \mapsto \{1, 2, \ldots, T\}$ that maps the observed reward $r$ to how many rounds should elapse since the previous switch until the next switch.
Consider an arbitrary such strategy $g$ of the player, and consider the function $f(k) = g(k/\log T)$, defined on integer $k$ in the range $[1,\log T]$. Let $\mathsf{LIS}(f)$ be the length of the longest monotone increasing subsequence of the sequence $f(1), f(2), \ldots, f(\log T)$. We consider two cases.

\paragraph{\bf Case 1: $\mathsf{LIS}(f) \ge \frac{4}{5}\log T$.}

Let $i_1, \ldots, i_{\ell}$ be the indices of an increasing subsequence of length $\ell \ge \frac{4}{5}\log T$. For at most $\frac{1}{2}\log T$ values of $j$ we have that $f(i_{j+1}) \ge 4f(i_j)$, as otherwise $f(i_{\ell}) > 4^{\log T / 2}f(i_1) = Tf(i_1) \ge T$, which is outside the range of the function $p$. Observe also that the increasing subsequence, being of length at least $\frac{4}{5}\log T$, must contain at least $\frac{3}{5}\log T$ consecutive pairs, where a consecutive pair is a value $j$ such that $i_{j+1} = 1 + i_j$. Hence at least $\frac{1}{10}\log T$ consecutive pairs differ by a factor less than~4. Namely, there are at least $\frac{1}{10}\log T$ values of $i$ for which $f(i+1) \le 4f(i)$. Refer to such a pair $(i,i+1)$ as a {\em dangerous pair}. The dangerous pair is in class~0 (with classes as defined in the \textsf{MT} strategy). The probability of the \textsf{MT} strategy to choose class~0 is $\frac{1}{c} \ge \frac{1}{2}$, and if chosen, the dangerous pair is chosen with probability $\frac{1}{\log T}$. On the dangerous pair, the expected number of rounds the player spend on the decoy arm (the one with reward $\frac{i}{\log T}$ rather than $\frac{i+1}{\log T}$) is at least $\frac{T}{5}$, because $f(i+1) \le 4f(i)$. On the decoy arm the regret is $\frac{1}{\log T}$. Hence the expected regret is at least:
$$
	\frac{1}{10}\log T \cdot \frac{1}{2}
		\cdot \frac{1}{\log T} \cdot  \frac{T}{5} \cdot \frac{1}{\log T}
	\eq \frac{T}{100\log T} ~.
$$

\paragraph{Case 2: $\mathsf{LIS}(f) < \frac{4}{5}\log T$.}

Consider a graph with vertices labeled from $1$ to  $\log T$, where an edge connects vertex $i$ to vertex $j > i$ if and only if they form a {\em decreasing pair} with respect to $f$, namely $f(j) < f(i)$.  Consider a maximal matching in this graph. Its size is at least $\frac{1}{10} \log T$, because otherwise all vertices not involved in a maximal matching would form an increasing subsequence longer than $\frac{4}{5} \log T$. Consider an arbitrary matching edge $(i,j)$, and let $h$ in the range $i \le h \le j$ be such that the power of two that divides it is highest. If $h \in \{i,j\}$ then the pair $(i,j)$ is a possible choice of the \textsf{MT} algorithm. We call this an \textsf{MT}-pair. If $i < h < j$ then both $(i,h)$ and $(h,j)$ are \textsf{MT}-pairs, and at least one of them is decreasing. Hence each matching edge contributes at least one decreasing \textsf{MT}-pair. A simple case analysis (that is omitted) shows that two different matching edges cannot possibly contribute the same decreasing \textsf{MT}-pair. Hence there are at least $\frac{1}{10} \log T$ decreasing \textsf{MT}-pairs. Observe that these pairs need not be disjoint: the same $h$ vertex can participate in several (or even all) pairs.

Let $c$ be the constant in the definition of the \textsf{MT} strategy. Then there must be a choice of integer $r$ in the range $0 \le r \le \log\log T$ that is {\em dangerous} in the sense that there are $\frac{\log T}{10c(r+1)^2}$ decreasing \textsf{MT}-pairs of class~$r$. This dangerous $r$ is chosen with probability $\frac{1}{c(r+1)^2}$. Given that a dangerous $r$ is chosen, the probability of choosing a decreasing \textsf{MT}-pair of class $r$ is at least $\frac{1}{10c(r+1)^2}$, because there are at most $\log T$ pairs in a class. If a decreasing \textsf{MT}-pair is selected, the player spends in expectation at least $T/2$ steps on the decoy arm. In these steps, the regret of the player is at least $\frac{2^{r-1}}{\log T}$.
Hence the expected regret is at least (using also $c < 2$):
$$
	\frac{1}{c(r+1)^2} \cdot \frac{1}{10c(r+1)^2}
		\cdot \frac{T}{2} \cdot \frac{2^{r-1}}{\log T}
	\geq \frac{2^{r+1}T}{320(r+1)^4\log T} ~.
$$
The expression $\frac{2^{r+1}}{(r+1)^4}$ is minimized when $r = 5$, giving roughly $\frac{1}{20}$. Hence the expected regret in case~2 is at most $\frac{T}{7000\log T}$.

We have not attempted to optimize the leading constant in the $\Omega$ notation, and no doubt it can be substantially improved beyond the bounds shown in our proof.
\end{proof}

The combination of \cref{thm:consistent,thm:MT} implies the following result.

\begin{repthm}[restated]{thm:semimarkov}
Consider a hidden bandit problem with $p=\frac{1}{2}$, where the adversary is restricted to be consistent and the player is restricted to use semi-Markovian algorithms.
Then, the player can guarantee expected regret $O( T/\log T )$, and this is the best possible guarantee in this setting.
Moreover, to achieve this guarantee Markovian strategies suffice, and to block stronger guarantees constant adversaries suffice.
\end{repthm}

\subsection{Reactive Adversaries}
\label{sec:reactive}

In this section we discuss an extension of our basic setting, where the adversary is allowed to be \emph{reactive}.
Reactive adversaries are adversaries that set the reward given at any round as a function of the actions of the player on that and the $\ell$ previous rounds (for some fixed $\ell$).
In other words, a reactive adversary determines a sequence of reward functions $r_{1},\ldots,r_{T} : [n]^{\ell+1} \mapsto [0,1]$ before the game begins, where each function in the sequence is used to map $\ell+1$ consecutive actions of the player to a reward value.
With the notation of \cref{sec:model}, the expected regret of the player in this case is given by
\begin{align*}
	\mathrm{Regret}_{T} \eq
	\max_{\pi \in \Pi} \sum_{t=1}^{T} r_{t}(x_{t}^{\pi},\ldots,x_{t-\ell}^{\pi})
	- \E\left[ \sum_{t=1}^{T} r_{t}( X_{t},\ldots,X_{t-\ell} ) \right] ~.
\end{align*}
For simplicity, we focus on the case where $\ell = 1$.

When the reference policies are stateless, previous work show that the player can obtain $\tO(T^{2/3})$ expected regret against a reactive adversary \citep{arora2012online}, and this rate is best possible in general \citep{dekel2013bandits}. 
These results do not hold when the reference policies are stateful; indeed, our lower bound (\cref{cor:negative}) clearly extends to reactive adversaries as the oblivious adversary we have considered above is a special case of a reactive adversary.


Our algorithms can be adapted to the reactive adversary setting.
We sketch our approach, while omitting the technical details.
Essentially, the only issue we have to address is providing a new implementation of a \textsf{switch} action in \cref{alg:policies}, that switches at some round $t$ to a random policy and initializes it in a random state.
The difficulty is that even if the switch was successful and resulted in the best policy at its correct state for round $t$, the reward observed as feedback on round $t$ is also a function of the action in round $t-1$, where a different policy was followed (and a possibly different feedback was observed). 
Hence applying the state transition function with this reward might cause the policy to reach an incorrect state at round $t+1$.
We deal with this problem by spending two rounds, $t$ and $t+1$, on implementing the switch operation. In round $t$ the player plays a random action. In round $t+1$ the player picks a random policy to switch to and guesses its internal state (as done by \cref{alg:policies}), and plays the action recommended by the guessed policy in the guessed state. 
With probability $1/(knS)$ all the following conditions hold simultaneously: the player guessed the best reference policy at its correct state for round $t+1$, and the actions performed in rounds $t$ and $t+1$ are exactly as would have been chosen had the player followed the best policy all along. 
Thereafter, the feedback received in round $t+1$ puts the player on the right track of the best policy.
This approach retains the sublinear regret rate provided by \cref{alg:policies} (in terms of $T$), but the dependence on the constants involves also $n$, and not only $k$ and $s$.

\bibliographystyle{abbrvnat}
\bibliography{bib}

\appendix

\section{Martingales and Locally-Repetitive Strings}
\label{sec:doob}

\cref{lem:localrepetitive} is central to our work. 
Hence, it is instructive to see another proof for it. 
Let us recall for this purpose Doob's upcrossing inequality for martingales.
Let $X_1, \ldots, X_n$ be a martingale, namely a sequence of random variables such that $X_i = E[X_{i+1} \mid X_{1},\ldots,X_{i}]$ for all $i$, and suppose that the range of values of the martingale in bounded in the sense that $0 \le X_n \le 1$ (hence the same necessarily holds for all $X_i$). Fixing $0 \le a < b \le 1$, an $(a,b)$-upcrossing in a sequence are two indices $i< j$ such that $X_i \le a$ and $X_j \ge b$. The number of $(a,b)$-upcrossings in the sequence is the largest number $t$ such that there are indices $i_1 < j_i < i_2 < j_2 \ldots < i_t < j_t$, and for every $1 \le \ell \le t$, there is an $(a,b)$ upcrossing in $(i_{\ell}, j_{\ell})$. 
The following lemma is known as Doob's upcrossing inequality for martingales, for which we sketch a simple proof for completeness.

\begin{lemma}[Doob's upcrossing inequality]
\label{lem:doob}
For the setting as above, the expected number of $(a,b)$ upcrossing is at most $a/(b-a)$.
\end{lemma}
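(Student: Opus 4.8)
The plan is to run the classical ``gambling strategy'' argument. I would introduce a $\{0,1\}$-valued betting process $C = (C_1, C_2, \ldots)$ that stakes one unit precisely while the martingale is in the middle of an attempted upcrossing of $[a,b]$: set $C_1 = \ind{X_1 \le a}$ and, for $i \ge 2$,
$$
  C_i = \ind{C_{i-1} = 1 \text{ and } X_i < b} + \ind{C_{i-1} = 0 \text{ and } X_i \le a}.
$$
In words, the bet is switched on at the first round in which $X$ falls to $a$ or below, stays on until the first later round in which $X$ rises to $b$ or above, is then switched off, and the cycle repeats. The structural point to record is that $C_i$ is a function of $X_1, \ldots, X_i$ alone (it never peeks at $X_{i+1}$), so $C$ is predictable with respect to the natural filtration of the martingale.

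Next I would form the accumulated gains $Y_m = \sum_{i=1}^{m} C_i (X_{i+1} - X_i)$ for $m = 0, 1, \ldots, n-1$. Since $C_i$ is determined by $X_1, \ldots, X_i$ and $\EE{X_{i+1} - X_i \mid X_1, \ldots, X_i} = 0$, the sequence $Y_0, Y_1, \ldots, Y_{n-1}$ is itself a martingale started at $Y_0 = 0$, hence $\EE{Y_{n-1}} = 0$.

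The core of the proof is then the deterministic lower bound $Y_{n-1} \ge (b-a)\,U - a$, where $U$ is the number of $(a,b)$-upcrossings in the sense defined above. To see it, decompose $\set{1,\ldots,n-1}$ into the maximal runs on which $C \equiv 1$. Each such run $\{s,\ldots,e\}$ begins with $X_s \le a$ (that is how it was switched on), so its contribution telescopes to $X_{e+1} - X_s$, which is at least $b-a$ whenever the run is \emph{completed}, i.e.\ it was switched off because $X_{e+1} \ge b$; and there is at most one \emph{dangling} run still active at the end, whose contribution is $X_n - X_s \ge X_n - a \ge -a$, using here the hypothesis $0 \le X_n$. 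One also checks the standard fact that this greedy on/off rule realizes the maximum number of disjoint upcrossings, so the number of completed runs equals $U$. Summing over runs gives $Y_{n-1} \ge (b-a)U - a$; taking expectations and using $\EE{Y_{n-1}} = 0$ yields $0 \ge (b-a)\EE{U} - a$, i.e.\ $\EE{U} \le a/(b-a)$.

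I expect the only delicate point to be the treatment of the unfinished upcrossing at the right end: that is the single place where boundedness of the martingale enters (and only $X_n \ge 0$ is needed), and keeping its contribution at exactly $-a$ rather than bounding it crudely is what produces the clean constant $a/(b-a)$. The remaining ingredients --- the telescoping within each run, the measurability bookkeeping that makes $Y$ a martingale, and the optimality of the greedy upcrossing count --- are routine, though the last of these merits a sentence of justification.
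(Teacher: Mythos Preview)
Your proposal is correct and is precisely the classical gambling-strategy proof that the paper sketches in its stock-trading language: your predictable process $C$ is the formal encoding of ``buy below $a$, sell above $b$'', your martingale transform $Y$ is the running profit, and the decomposition into completed and dangling runs is exactly the paper's observation that each upcrossing earns $b-a$ while the forced final sale loses at most $a$. The paper's proof is a two-paragraph sketch of the same argument; your version simply makes the measurability and telescoping explicit.
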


\begin{proof}[Proof (sketch)]
Think of $X_i$ as the value of a stock at time $i$. 
Due to the martingale property, there is no trading policy for the stock that gains money in expectation. Consider the strategy of buying the stock as soon as it drops below $a$, and selling it as soon as it moves above $b$, and so on. At the last round the stock is sold regardless of its value. If the number of crossings is $U$, this strategy makes a profit of $(b-a) U$. Selling on the last round loses at most $a$ (the maximum possible value at which the stock was bought). Hence $(b-a)\E[U] - a \le 0$, proving the lemma.
\end{proof}

Fix $\epsilon > 0$ that divides~1, and for integer $0 \le m < 1/\epsilon$, we refer to an $(m\epsilon, (m+1)\epsilon)$ upcrossing as an $\epsilon$-upcrossing. Summing over all $m$, \cref{lem:doob} implies that the expected number of $\epsilon$-upcrossings is 
$$
	\frac{1}{\epsilon}\sum_{m=1}^{1/\epsilon} (m-1)\epsilon 
	~<~ \frac{1}{2\epsilon^2} ~.
$$
A similar bound applies by symmetry to the analogous notion of $\epsilon$-downcrossing. Hence altogether the expected number of $\epsilon$ crossings is at most $1/\eps^{2}$.

We can now sketch an alternative proof for \cref{lem:localrepetitive} (with somewhat weaker bounds). 

\begin{proof}[Proof of \cref{lem:localrepetitive} (sketch)]
Starting from $s$, consider the process of partitioning $s$ into $d$ substrings and choosing one of them at random, and doing so recursively until a single character $u$ is reached. The averages $x_s, \ldots, x_u$ encountered on such a random path form a martingale sequence, with final value in $[0,1]$. In expectation, at most $4/\eps^{2}$ of the steps where an $\epsilon/2$-crossing. Observe that for every string $v$ that is encountered along the way, if $v$ is not $(d,\epsilon)$-repetitive, then there is probability at least $1/d$ of moving to a substring $u$ that inflicts an $\epsilon/2$ crossing (the inequality $|x_v - x_u| > \epsilon$ implies that within this range there is a crossing of width $\epsilon/2$ aligned at a multiple of $\epsilon/2$). Hence at most $4d/\eps^2$ steps went through strings that are not $(d,\epsilon)$-repetitive. As there are $k$ steps, this implies that $\delta \le 4d/k\eps^{2}$.
\end{proof}

\end{document}